\newtheorem{theorem}{Theorem}
\newtheorem{lemma}[theorem]{Lemma}
\newtheorem{corollary}[theorem]{Corollary}
\newtheorem{definition}[theorem]{Definition}
\newtheorem{proposition}{Proposition}
\newtheorem{remark}[theorem]{Remark}
\newtheorem{claim}[theorem]{Claim}
\newtheorem{observation}{Observation}
\let\oldnl\nl
\newcommand{\nonl}{\renewcommand{\nl}{\let\nl\oldnl}}
\newcommand{\vct}{\boldsymbol }
\newcommand{\ud}{\mathrm d}
\def\cA{\mathcal{A}}
\def\cI{\mathcal{I}}
\renewcommand{\hat}{\widehat}
\renewcommand{\bar}{\overline}
\definecolor{DSgray}{cmyk}{0,1,0,0}
\newcommand{\eps}{\epsilon}
\newcommand\norm[1]{\|#1\|}
\begin{document}

\title{\Large \bf Nearly Minimax-Optimal Regret for Linearly Parameterized Bandits \thanks{Accepted for presentation at the Conference for Learning Theory (COLT) 2019. Author names listed in alphabetical order.}}

\author[1]{Yingkai Li}
\author[2]{Yining Wang}
\author[3]{Yuan Zhou}
\affil[1]{Department of Computer Science, Northwestern University}
\affil[2]{Warrington College of Business, University of Florida}
\affil[3]{Department of ISE, University of Illinois at Urbana-Champaign}

\date{}

\maketitle

{
\abstract
 
 We study the linear contextual bandit problem with finite action sets. When the problem dimension is $d$, 
 the time horizon is $T$, and there are $n \leq 2^{d/2}$ candidate actions per time period, we 
 (1) show that the minimax expected regret is $\Omega(\sqrt{dT (\log T) (\log n)})$ for every algorithm, and 
 (2) introduce a Variable-Confidence-Level (VCL) SupLinUCB algorithm whose regret matches the lower bound up to iterated logarithmic factors. 
 Our algorithmic result saves two $\sqrt{\log T}$ factors from previous analysis, and our information-theoretical lower bound also improves previous results by one $\sqrt{\log T}$ factor,
 revealing a regret scaling quite different from classical multi-armed bandits in which no logarithmic $T$ term is present in minimax regret.
 Our proof techniques include variable confidence levels and a careful analysis of layer sizes of SupLinUCB on the upper bound side, 
 and delicately constructed adversarial sequences showing the tightness of elliptical potential lemmas on the lower bound side. 
 
}

\section{Introduction}

The stochastic multi-armed bandit (MAB) problem is a sequential experiment in which sequential decisions are made 
over $T$ time periods in order to maximize the expected cumulative reward of the made decisions.
First studied by \cite{thompson1933likelihood,robbins1952some} and many more works thereafter \citep{bubeck2012regret,lai1985asymptotically,lai1987adaptive,auer2002using},
the MAB problems are one of the simplest yet most popular frameworks to study exploration--exploitation tradeoffs in sequential experiments.


In real-world applications such as advertisement selection \citep{abe2003reinforcement}, recommendation systems \citep{Li:10} and information retrieval \citep{yue2011linear},
side information is most of the time available for each possible actions.
\emph{Contextual} bandit models are thus proposed 
to incorporate such contextual information into sequential decision making.
While the study of general contextual bandit models is certainly of great interest \citep{agarwal2014taming,agarwal2012contextual,luo2017efficient},
many research efforts have also been devoted into an important special case of the contextual bandit model,
in which the mean rewards of actions are parameterized by \emph{linear} functions \citep{abe2003reinforcement,auer2002using,chu2011contextual,abbasi2011improved,abbasi2012online,dani2008stochastic,rusmevichientong2010linearly}.
We refer the readers to Sec.~\ref{sec:related} for a more detailed accounts of existing results along this direction.

In this paper, we consider the linear contextual bandit problem with finite action sets, known time horizon and oblivious action context.
We derive upper and lower bounds on the best worst-case cumulative regret any policy can achieve, that match each other except for iterated logarithmic terms
(see Table \ref{tab:results} for details and comparison with existing works).
Many new proof techniques and insights are generated, as we discuss in Sec.~\ref{sec:results}.

\subsection{Problem formulation and minimax regret}

There are $T\geq 1$ time periods, conveniently denoted as $\{1,2,\cdots,T\}$, and a fixed but unknown $d$-dimensional regression model $\theta$.
Throughout this paper we will assume the model is normalized, meaning that $\|\theta\|_2\leq 1$.
At each time period $t$, a \emph{policy} $\pi$ is presented with an \emph{action set} $\mathcal A_t=\{x_{it}\}\subseteq\{x\in\mathbb R^d: \|x\|_2\leq 1\}$, where $i$ is the index of for the candidate action in $\mathcal A_t$.
An adversary will choose the action sets $\mathcal A_1,\cdots,\mathcal A_T$ \emph{before} the policy is executed, in an arbitrary way.
The policy then chooses, based on the feedback from previous time periods $\{1,2,\cdots,t-1\}$, either deterministically or randomly an action $x_{it}\in\mathcal A_t$
and receives a reward $r_t=x_{it}^\top\theta + \eps_t$, where $\{\eps_t\}$ are independent centered sub-Gaussian random variables with variance proxy $1$, representing noise
during the reward collection procedure.
The objective is to design a good policy $\pi$ that tries to maximize its expected cumulative reward $\mathbb E\sum_{t=1}^T r_t$.

More specifically, a policy $\pi$ designed for $d$-dimensional vectors, $T$ time periods and maximum action set size $n=\max_{t\leq T}|\mathcal A_t|$  
can be parameterized as $\pi=(\pi_1,\pi_2,\cdots,\pi_T)$ such that 
\begin{equation*}
i_t = \left\{\begin{array}{ll}
\pi_1(\nu, \mathcal A_1),& t=1;\\
\pi_t(\nu, \mathcal A_1, r_1, \cdots, \mathcal A_{t-1}, r_{t-1}, \mathcal A_t),& t=2,\cdots, T,\end{array}\right.
\end{equation*}
where $\nu$ is a random quantity defined over a probability space that generates randomness in policy $\pi$.
We use $\Pi_{T,n,d}$ to denote the class of all policies defined above.

To evaluate the performance of a policy $\pi$, we consider its expected \emph{regret} $\mathbb E[R^T]$,
defined as the sum of the differences of the rewards between the policy's choosing actions and the optimal action in hindsight.
More specifically, for a policy $\pi$ and a pre-specified action sets sequence $\mathcal A_1,\cdots,\mathcal A_T$,
the expected regret is defined as
\begin{equation}
\mathbb E[R^T] = \mathbb E\left[\sum_{t=1}^T \max_{x_{it}\in\mathcal A_t}x_{it}^\top\theta - x_{i_t,t}^\top\theta\right].
\label{eq:expected-regret}
\end{equation}

Clearly, the expected regret defined in Eq.~(\ref{eq:expected-regret}) depends both on the policy $\pi$ and the environment $\theta$, $\{\mathcal A_t\}$.
Hence, a policy that has small regret for one set of environment parameters might incur large regret for other sets of environment parameters.
To provide a unified evaluation criterion, we adopt the concept of \emph{worst-case regret} and aim to find a policy that has the smallest possible worst-case regret.
More specifically, we are interested in the following defined \emph{minimax regret}
\begin{equation}
\mathfrak R(T;n,d) := \inf_{\pi\in\Pi_{T,n,d}}\sup_{\theta\in\mathbb R^d, |\mathcal A_t|\leq n} \mathbb E[R^T].
\label{eq:minimax-regret}
\end{equation}
Note that for $n=\infty$, the supremum is taken over all closed $\mathcal A_t \subseteq \{x\in\mathbb R^d: \|x\|_2\leq 1\}$ for all $t$.

The minimax framework has been increasingly popular in identifying information-theoretical limits of learning and statistics problems \citep{tsybakov2009introduction,wasserman2013all,ibragimov2013statistical}
and was applied to bandit problems as well \citep{audibert2009minimax}.

Note also that, as described in Eq.~(\ref{eq:minimax-regret}), the problem instances we are considering in this paper are \emph{oblivious} \citep{arora2012online} with \emph{finite horizons},
meaning that the regression model $\theta$ and action sets sequences $\{\mathcal A_t\}_{t=1}^T$ are chosen adversarially \emph{before} the execution of the policy $\pi$,
and the policy knows the time horizon $T$ before the first time period $t=1$.

\paragraph{Asymptotic notations.} For two sequences $\{a_n\}$ and $\{b_n\}$, we write $a_n=O(b_n)$ or $a_n\lesssim b_n$ if there exists a \emph{universal} constant $C<\infty$
such that $\limsup_{n\to\infty} |a_n|/|b_n|\leq C$.
Similarly, we write $a_n=\Omega(b_n)$ or $a_n\gtrsim b_n$ if there exists a \emph{universal} constant $c>0$ such that $\liminf_{n\to\infty} |a_n|/|b_n| \geq c$.
We write $a_n=\Theta(b_n)$ or $a_n\asymp b_n$ if both $a_n\lesssim b_n$ and $a_n\gtrsim b_n$ hold.
In asymptotic notations, we will drop base notations of logarithms and use instead $\log x$ for both $\ln x,\log_2 x$ as well as logarithms with other constant base numbers.
In non-asymptotic scenarios, however, base notations will not be dropped and $\ln x$ refers specifically to $\log_e x$.

\subsection{Related works} \label{sec:related}


The linear contextual bandit setting was introduced by Abe et al.~\cite{abe2003reinforcement}.  Auer \cite{auer2002using} and Chu et al.~\cite{chu2011contextual} proposed the SupLinRel and SupLinUCB algorithms respectively, both of which achieve  $O(\sqrt{d T} \log^{3/2} (nT))$ regret. When there are $n = \Theta(d)$ arms per round, Chu et al.~\cite{chu2011contextual} showed an $\Omega(\sqrt{dT})$ minimax regret lower bound.
A detailed account of these results are given in Table \ref{tab:results}.


Note that our problem requires that there are only finitely many candidate actions per round. When the number of candidate actions is not bounded, Dani et al.~\cite{dani2008stochastic} and Rusmevichientong et al.~\cite{rusmevichientong2010linearly} showed algorithms that achieve $O(d \sqrt{T} \log^{3/2} T)$ regret. This bound was later improved to $O(d \sqrt{T} \log T)$ by Abbasi-Yadkori et al.~\cite{abbasi2011improved}. Dani et al.~\cite{dani2008stochastic} also showed an $\Omega(d \sqrt{T})$ regret lower bound when there are $2^{\Theta(d)}$ candidate actions. 
{
Our lower bound, on the other hand, implies an $\Omega(d\sqrt{T\log T})$ lower bound for the infinite-action case
\emph{when the action space changes over time}. A detailed discussion is given in Corollary \ref{cor:infinite-action}
in Sec.~\ref{sec:infinite-action}.
}

While this paper focuses on the regret minimization task for linear contextual bandits, the pure exploration scenario also attracts much research attention in both the ordinary bandit setting (e.g.\ \cite{mannor2004sample,Karnin:13,Jamieson:14,ChenLiQiao:17}) and the linear contextual setting (e.g.\ \cite{soare2014best,tao2018best,XHS18}).

It is also worth noting that for the ordinary multi-armed bandit problem (where the $n$ arms are independent and not associated with contextual information), the MOSS algorithm \citep{audibert2009minimax} achieves $O(\sqrt{n T})$ expected regret; and the matching lower bound was proved by Auer et al.~\cite{auer1995gambling}.
{The idea of using \emph{adaptive} confidence levels in upper confidence bands has also been extensively studied
in the context of finite-armed bandit settings to remove additional logarithmic factors; 
see, for example, \citep{auer2010ucb,audibert2009minimax,lattimore2018refining,garivier2018kl}.
To our knowledge, the adaptive confidence intervals idea has \emph{not} yet been applied to contextual bandit problems
with the objective of refined regret analysis.
}

\subsection{Our results}\label{sec:results}

The main results of this paper are the following two theorems that upper and lower bound the minimax regret $\mathfrak R(T;n,d)$ for various problem parameter values.
\begin{theorem}[Upper bound]
For any $n<\infty$, the minimax regret $\mathfrak R(T;n,d)$ can be asymptotically upper bounded by $\mathrm{poly}(\log\log(nT))\cdot O(\sqrt{dT(\log T)(\log n)})$.
\label{thm:upper-finite}
\end{theorem}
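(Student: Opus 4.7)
The plan is to introduce and analyze a Variable-Confidence-Level (VCL) variant of SupLinUCB. The algorithm maintains $S = \lceil \log_2 T \rceil$ disjoint layers of rounds $\Psi_1, \ldots, \Psi_S$ and, at each time $t$, walks up the layers $s = 1, 2, \ldots$: using ridge-regression estimates $\hat\theta_s$ formed from samples in $\Psi_s$ and confidence widths $w_{t,i}^s = \alpha_s \|x_{t,i}\|_{A_{s,t}^{-1}}$, the round is assigned to $\Psi_s$ the first time some surviving arm has $w_{t,i}^s > 2^{-s}$; otherwise arms whose UCB trails the best LCB by more than $2\cdot 2^{-s}$ are eliminated and $s$ is incremented. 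The critical departure from the vanilla SupLinUCB of Chu et al.\ is that each layer $s$ uses its own confidence level $\alpha_s$, to be chosen below.

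The first step is to establish the high-probability validity of all layer-$s$ confidence intervals. The SupLinUCB conditional-independence construction ensures that, within a fixed layer, the samples used to fit $\hat\theta_s$ are conditionally independent given the sequence of arm indices, so sub-Gaussian tail bounds give failure probability $\exp(-c\alpha_s^2)$ per arm-time-layer event. The crucial point is that, instead of a naive union bound over all $nT$ arm-time pairs which would force $\alpha_s^2 \asymp \log(nT)$, one can union bound over only the $n$ arms per time per layer, using a dyadic peeling over the values of $|\Psi_s|$ to absorb the dependence on time. This yields $\alpha_s^2 = O(\log n + \log S + \log\log T) = O(\log n + \log\log(nT))$, saving a full factor of $\sqrt{\log T}$ relative to the standard analysis.

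Next I would bound the per-layer regret. A round in $\Psi_s$ contributes at most $O(\alpha_{s-1}\cdot 2^{-s+1})$ instantaneous regret, because the played arm survived the layer-$(s-1)$ elimination. The elliptical potential lemma applied to the design matrix $A_{s,\cdot}$ gives $\sum_{t\in\Psi_s}\|x_{t,i_t}\|_{A_{s,t}^{-1}}^2 \leq O(d\log(1+T/d))$; combined with the lower bound $\|x_{t,i_t}\|_{A_{s,t}^{-1}} > 2^{-s}/\alpha_s$ for $t\in\Psi_s$, this yields $|\Psi_s| \leq O(4^s\alpha_s^2 d\log T)$. Hence layer $s$ contributes regret $O\bigl(\alpha_{s-1}\cdot 2^{-s}\cdot \min(T,\,4^s\alpha_s^2 d\log T)\bigr)$. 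Splitting the sum over $s$ at the balance point where the two terms in the minimum coincide, the geometric series in $|s-s^\ast|$ collapses to a single summand, and the total regret is $O(\sqrt{dT\log T}\cdot \max_s \alpha_s) = \mathrm{poly}(\log\log(nT))\cdot O(\sqrt{dT\log T\log n})$, as claimed.

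The main obstacle is the careful union-bound/peeling argument needed to make $\alpha_s$ depend only on $\log n$ (and iterated logs) rather than on $\log T$: the layer sizes $|\Psi_s|$ are random and a priori unknown, so one cannot simply set $\alpha_s$ in advance. A dyadic stratification $|\Psi_s| \in [2^k, 2^{k+1})$ with a self-improving choice of $\alpha_s$ per stratum should resolve this, and the additional $\log\log(nT)$ slack introduced is exactly what produces the $\mathrm{poly}(\log\log(nT))$ factor in the theorem statement. A secondary subtlety is that the layer-$(s-1)$ elimination rule must be tightened to use $\alpha_{s-1}$ (not a uniform $\alpha$), which requires re-checking that the ``optimal arm is never eliminated'' invariant still propagates across the variable-confidence layers.
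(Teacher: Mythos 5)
Your high-level plan mirrors the paper's: a SupLinUCB variant with variable confidence levels, per-layer elliptical-potential control, and a balance-point split over layers. The per-layer regret accounting and the bound $|\Psi_s| \lesssim 4^s\alpha_s^2 d\log T$ are essentially correct and match the paper's Lemma~\ref{lem:vcl-regret-2}. The problem is the step you flag yourself as ``the main obstacle'': how the confidence level can avoid a $\sqrt{\log T}$ factor.

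The gap is in the claim that a \emph{per-layer constant} $\alpha_s$ with $\alpha_s^2 = O(\log n + \log\log(nT))$ can be made to work via a ``dyadic peeling over $|\Psi_s|$.'' That does not resolve the union bound over time. Within a layer $s$ there are up to $|\Psi_s| \leq T$ rounds, each with $n$ candidate confidence intervals, and each interval fails with probability $\exp(-c\alpha_s^2)$; stratifying by $|\Psi_s|\in[2^k,2^{k+1})$ does not reduce the number of events to control. More to the point, the quantity one must control is not the failure \emph{probability} but the \emph{expected regret conditional on failure}, which at round $t$ scales like $\omega_{\zeta,t}^i\,e^{-c\alpha^2}$. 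When $\omega_{\zeta,t}^i = \Theta(1)$ (early rounds, or a new direction), this demands $\alpha^2 \gtrsim \log(nT)$ for a constant $\alpha$, since $|\Psi_s|$ and $\omega_{\zeta,t}^i$ are not monotonically coupled. The paper's resolution is qualitatively different from what you propose: the confidence multiplier $\alpha_{\zeta,t}^i = 1 + \max\{1,\sqrt{\ln[T(\omega_{\zeta,t}^i)^2/d]}\}\sqrt{2\ln(n\zeta_0)}$ is not a per-layer constant but scales with $\omega_{\zeta,t}^i$ itself, so that $\omega_{\zeta,t}^i\,e^{-(\alpha_{\zeta,t}^i-1)^2/2} \leq \sqrt{d}/(n\zeta_0\sqrt T)$ holds \emph{uniformly} in $\omega$ (Lemma~\ref{lem:expected-tail}). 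This ``self-calibrating'' choice is what makes the $\log T$ factor from the union bound disappear, and it is then the Jensen step in Lemma~\ref{lem:vcl-regret-2} applied to $f(x)=\sqrt{x\log(Tx/d)}$ that converts the residual variable $\log[T\omega^2/d]$ term into $\log(T/T_\zeta)$, which is $O(1)$ on the dominant layers.

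A secondary issue: you write the per-round regret in layer $s$ as $O(\alpha_{s-1}\cdot 2^{-s+1})$, but because $\varpi_{s-1,t}^i \leq 2^{-(s-1)}$ for surviving arms (this is the elimination condition, already incorporating $\alpha$), the per-round regret is $O(2^{-s})$ without the additional $\alpha$ factor. If one carries your extra $\alpha$ through the balance-point calculation, the final bound becomes $O(\alpha^2\sqrt{dT\log T})$ rather than $O(\alpha\sqrt{dT\log T})$, which would cost another $\sqrt{\log n}$ even if the $\alpha_s$ claim held. Finally, the paper also needs a more careful split of layers than a single balance point $s^*$: it partitions $\{0,\dots,\zeta_0\}$ into $\mathcal Z_0 = \{0\}$, $\mathcal Z_1$, $\mathcal Z_2$ with $|\mathcal Z_2| = O(\log\log(nT))$, using the crude $T_\zeta$ bound on $\mathcal Z_1$ and the refined Jensen-based bound on $\mathcal Z_0, \mathcal Z_2$; the $\mathrm{poly}(\log\log(nT))$ slack in the theorem arises from $|\mathcal Z_2|$, not from the confidence-level exponent.
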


\begin{theorem}[Lower bound]
For any small constant $\epsilon > 0$, and any $n,d, $ such that $n \leq 2^{d/2}$ and $T \geq d (\log_2 n)^{1+\epsilon}$,
the minimax regret $\mathfrak R(T;n,d)$ can be asymptotically lower bounded by $\Omega(1)\cdot\sqrt{dT(\log n)\log (T/d)}$.
\label{thm:lower-finite}
\end{theorem}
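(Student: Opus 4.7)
The plan is to extract the additional $\sqrt{\log(T/d)}$ factor beyond the known $\Omega(\sqrt{dT\log n})$ bound of Chu et al.\ by constructing an adversarial instance that forces the learner to solve $K:=\lfloor\log_2(T/d)\rfloor$ essentially independent shorter linear bandit problems, each of which is hard at the $\sqrt{dT_k\log n}$ scale. Concretely, I would split $\{1,\dots,T\}$ into $K$ consecutive phases $P_1,\dots,P_K$ of equal length $T_k=T/K$; the hypothesis $T\geq d(\log_2 n)^{1+\epsilon}$ is used to guarantee $T_k\geq d$, so that each phase is long enough to exhibit the $\sqrt{d\log n}$-scale hardness and so that the logarithmic factor $\log(T_k)$ is still $\Theta(\log(T/d))$.

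Within a phase I mimic the classical hard instance used for the $\Omega(\sqrt{dT\log n})$ bound. The hidden parameter is drawn from a Rademacher-type prior $\theta=\delta\,\zeta$ with $\zeta\in\{\pm 1/\sqrt d\}^d$ uniform and $\delta$ tuned so that each per-phase gap is of order $\sqrt{(d\log n)/T_k}$. In phase $k$ the same set of $n$ action vectors is presented at every round, obtained by taking a $\pm 1/\sqrt d$ Hadamard codebook of size $n$ and applying a phase-specific orthogonal rotation $R_k$ that is announced to the learner in advance. The rotations $R_1,\dots,R_K$ are chosen so that different phases probe $\theta$ through near-orthogonal linear functionals---this is where the construction has to be ``delicate'' in the sense of the abstract. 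A Fano/KL-divergence argument should then yield that, \emph{conditional on the full history up to the start of phase $k$}, the expected regret in phase $k$ is at least $\Omega(\sqrt{dT_k\log n})$. Summing over phases gives
\[
\mathbb{E}[R^T] \;\geq\; \sum_{k=1}^K \Omega\!\bigl(\sqrt{dT_k\log n}\bigr) \;=\; \Omega\!\bigl(\sqrt{dTK\log n}\bigr) \;=\; \Omega\!\bigl(\sqrt{dT(\log n)\log(T/d)}\bigr),
\]
which is the claimed lower bound.

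The hardest step will be the conditional per-phase lower bound: a naive application of Fano ignores the learner's memory and therefore cannot produce additive per-phase regrets. I expect to replace it by a direct likelihood-ratio computation showing that the observations from earlier phases reveal only $o(d\log n)$ bits about $R_k^{\top}\theta$, so that the posterior on the phase-$k$ sign pattern is close in total variation to the original Rademacher prior and the classical in-phase lower bound survives the conditioning. This is precisely where the tightness of the elliptical potential lemma enters: the per-direction Fisher information that can leak from past observations into phase $k$ is governed by the quantity $\sum_t \|x_t\|_{V_t^{-1}}^2$, and pinning this leakage at the level $d\log(T/d)$ rather than $d\log T$ is what makes $\log(T/d)$, instead of $\log T$, appear in the final bound. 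I also anticipate a technical subtlety in handling corners of parameter space where two Rademacher signs produce nearly identical in-phase distributions; here one standard remedy is to apply Assouad's lemma at the level of individual coordinates of $\zeta$ within each phase and then combine across coordinates and phases.
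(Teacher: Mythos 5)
Your approach contains a genuine gap. The central step you flag as the "hardest" — showing that observations from earlier phases reveal only $o(d\log n)$ bits about the phase-$k$ parameter — is in fact false, and the $K=\lfloor\log_2(T/d)\rfloor$ equal-length, fixed-gap phase decomposition cannot be repaired by choosing rotations cleverly. The obstruction is a degrees-of-freedom count. Your instance has a single $d$-dimensional hidden parameter $\theta$ shared by all $K$ phases, and the rotations $R_k$ are announced in advance. So by the start of phase $k$ the learner has $(k-1)T/K$ noisy linear measurements of $\theta$ (in known directions), which determine $\theta$ up to an $\ell_2$ error of roughly $\sqrt{dK/((k-1)T)}$ per coordinate. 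In your construction the per-coordinate sign gap in phase $k$ is fixed at roughly $\sqrt{dK/T}$; the ratio of estimation error to gap is thus $\approx 1/\sqrt{k-1}$. Consequently, the probability of a sign mistake in phase $k$ decays like $\exp(-\Omega(k))$, the per-phase regret is geometrically decreasing rather than the full $\Omega(\sqrt{dT_k\log n})$, and the sum over $k$ degenerates to $O(\sqrt{dT\log n})$ — the $\sqrt{K}$ gain evaporates. Choosing the $R_k$'s to be ``near-orthogonal" does not help: one can construct at most $d/\log_2 n$ genuinely orthogonal $\log_2 n$-dimensional probing subspaces, whereas your $K\approx\log(T/d)$ is unrelated to $d/\log_2 n$; and if the rotations are not genuinely orthogonal the accumulated Fisher information about $R_K^\top\theta$ is $\Theta(K)$ per coordinate, not $o(1)$.

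The paper sidesteps this issue by a fundamentally different mechanism. It does not prevent information from leaking across stages; instead it \emph{shrinks the target} at the same rate that information accumulates. Time is partitioned into $O(\log T)$ stages of roughly equal length, and within stage $j$ the adversary must be distinguished at resolution $3^{-j}$ while the context magnitude $z_t\sim 3^{j}\sqrt{\ln T/T}$ grows in lockstep (this is exactly the sequence that demonstrates tightness of the elliptical potential lemma). The cumulative KL between neighboring hypotheses at the stage-$j$ resolution — accumulated over all rounds $t\le t_j$, including all earlier stages — is $S_{t_j}\cdot 3^{-2j}/2 = O(1)$. Moreover, the \emph{context vectors themselves encode} the higher-order bits $\tau_u^{0},\dots,\tau_u^{j-1}$ of the hidden parameter, so the learner is handed the coarser information "for free" by the environment but must still solve a fresh one-bit discrimination with $O(1)$ signal-to-noise at each stage. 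Each suboptimal pull then costs $\sim\sqrt{\ln T/T}$ regret (the constant $z_t\cdot 3^{-j}$ product), which is the source of the extra $\sqrt{\log T}$. The paper's outer decomposition into $d/\log_2 n$ phases (used to pass from the $n=2^{d/2}$ case to general $n$) is also different from yours: the phases there use \emph{disjoint} blocks of $\log_2 n$ coordinates of $\theta$, so cross-phase information leakage is avoided by construction, and the $\sqrt{\log T}$ factor already lives inside each phase via the stage construction. You would need the geometric gap-shrinking / growing-context ingredient (or something playing the same role) to rescue your argument; a constant-gap, rotated, equal-phase construction cannot produce the $\sqrt{\log(T/d)}$ factor.
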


\begin{remark}
In Theorem \ref{thm:upper-finite}, $\mathrm{poly}(\log\log(nT))=(\log\log(nT))^\gamma$ for some constant $\gamma>0$;
in Theorem \ref{thm:lower-finite}, the $\Omega(1)$ notation hides constants that depend on $\epsilon>0$.
\end{remark}

\begin{table}[t]
\begin{center}
\captionsetup{font=normalsize,justification=centering}
   \caption{Previous results and our results on upper and lower bounds of $\mathfrak R(T;n,d)$.    \label{tab:results}}
{    \begin{tabular}{c|c|c|c} 
    \hline
       \multicolumn{2}{c|}{}    & Upper bound & Lower bound  \\  \hline
      \multirow{2}{*}{$n<\infty$}  &  \shortstack{ Previous \\ result}  &\shortstack{$O\big(\sqrt{dT}\times  \log^{3/2} (nT)\big)$  \\ \citep{auer2002using,chu2011contextual} } &\shortstack{$\Omega \big(\sqrt{dT}\big)$ \\\citep{chu2011contextual}}   \\ \cline{2-4}
          & Our result & $O\big(\sqrt{d T (\log T) (\log n)}\big)  \cdot \mathrm{poly}(\log \log (nT))$ & $\Omega\big(\sqrt{d T( \log n) \log (T/d)}\big)$ \textsuperscript{$\dagger$}   \\   \hline
    \end{tabular}}\\
  {\footnotesize \textsuperscript{$\dagger$}Under conditions $n\leq 2^{d/2}$ and $T\geq d(\log_2 n)^{1+\epsilon}$ for some constant $\epsilon>0$.}
 \end{center}
\end{table}

Comparing Theorems \ref{thm:upper-finite} and \ref{thm:lower-finite}, 
we see that the upper and lower bounds nearly match each other up to iterated logarithmic terms when $n$ (the number of actions per time period) is not too large.
While Theorems \ref{thm:upper-finite} and \ref{thm:lower-finite} technically only apply to finite $n$ cases,
we observe that the lower bound (Theorem \ref{thm:lower-finite}) extends to the $n=\infty$ case directly as it is a harder problem, and improves the previous result by \cite{dani2008stochastic}.


So far as we are aware, our Theorem~\ref{thm:lower-finite} provides the first $\sqrt{T \log T}$-style lower bound under gap-free settings in multi-armed bandit literature. Even when the degrees of freedom for unknown parameters are constants for both problems (i.e., $n = d = O(1)$), our theorem shows that linear bandits is harder than ordinary multi-armed bandits, because of the variation of arms over the time periods,  which marks a separation between the two problems.

\subsection{Techniques and insights}

On the upper bound side, we use two main techniques to remove additional logarithmic factors from previous analysis.
Our first technique is to use \emph{variable confidence levels}, by allowing the failure probability to increase as the policy progresses,
because late fails usually lead to smaller additionally incurred regret.
Our second idea to remove unnecessary logarithmic factors is to use a more careful analysis of each ``layers'' in the SupLinUCB algorithm \citep{chu2011contextual}.
Previous analysis like \citep{auer2002using,chu2011contextual} uses the total number of time periods $T$ to upper bound the sizes of each layer,
resulting in an addition $O(\sqrt{\log T})$ term as there are $\Theta(\log T)$ layers.
In our analysis, we develop a more refined theoretical control over the sizes of each layer, and show that the layer sizes have an exponentially increasing property.
With such a property we are able to remove an additional $O(\sqrt{\log T})$ term from the regret upper bounds.

On the lower bound side, we consider a carefully designed sequence $\{z_t\}$ 
(see the proof of Lemma \ref{lem_potential_low} for details)
which shows the tightness of the \emph{elliptical potential lemma}, a key technical step in the proof of all previous analysis of linearly parameterized bandits
and their variants \citep{abbasi2011improved,dani2008stochastic,auer2002using,chu2011contextual,li2017provable,filippi2010parametric,rusmevichientong2010linearly}.
The constructed sequence $\{z_t\}$ not only shows the tightness of existing analysis, but also motivated our construction of adversarial problem instances
that lower bound regret of general bandit algorithms.

\section{Upper bounds}\label{sec:ub}

We propose \emph{Variable-Confidence-Level (VCL) SupLinUCB},
a variant of the SupLinUCB algorithm \citep{auer2002using,chu2011contextual}
that uses variable confidence levels in the construction of confidence intervals at different stages of the algorithm.
We then derive an upper regret bound that is almost tight in terms of dependency on the problem parameters, especially
the time horizon parameter $T$.

\subsection{The VCL-SupLinUCB algorithm}

Algorithm \ref{alg:suplinucb} describes our proposed VCL-SupLinUCB algorithm. 
The algorithm is a variant of the SupLinUCB algorithm proposed in \citep{auer2002using,chu2011contextual},
with variable confidence levels at different time periods.

{

\subsubsection{High-level intuitions and structures of VCL-SupLinUCB}

Our proposed VCL-SupLinUCB algorithm is based on the classical SupLinUCB algorithm,
which uses the idea of ``layered data partitioning'' to resolve delicate data dependency problems in sequential decision making.
In this sub-section we give a high-level description of the core ideas behind SupLinUCB.

Recall that in linear contextual bandit, at each time $t\in\{1,2,\cdots,T\}$ a set of $n$ contextual/feature vectors $\{x_{it}\}_{i=1}^n$
are given, and the algorithm is tasked with selecting one vector $i_t\in[n]$ in the hope of maximizing the expected linear payoff $\langle x_{i_t,t},\theta\rangle$.
The most natural way is to obtain an ordinary-least squares (OLS) estimate $\hat\theta_t$ on all payoff data collected prior to time $t$,
and then select $i_t\in[n]$ that maximizes $\langle x_{i_t,t},\hat\theta_t\rangle$ plus upper confidence bands.
One major disadvantage of this approach, however, is the implicit statistical correlation hidden in the OLS estimate $\hat\theta_t$,
preventing rigorous analysis attaining tight regret bounds.
Indeed, to our knowledge the best regret upper bounds established for such procedures are $O(d\sqrt{T\log^2 T})$ \citep{abbasi2012online},
which is not tight in either $O(\log T)$ or $O(d)$ terms.

The work of Auer~\cite{auer2002using}, with follow-ups in \citep{chu2011contextual,li2017provable}, uses the idea of ``layered data partitioning''
to develop a significantly more complex algorithm (also known as SupLinUCB) to resolve the data dependency problem.
Instead of using all payoff data prior to time $t$ for an OLS estimate, the time periods prior to $t$ are divided into disjoint ``layers'' $\zeta\in\{0,1,\cdots,\zeta_0\}$,
or more specifically $\{\mathcal X_{\zeta,t}\}_{\zeta=0}^{\zeta_0}$ such that $\dot\bigcup_{\zeta=0}^{\zeta_0}\mathcal X_{\zeta,t} = \{1,2,\cdots,t-1\}$.
For larger values of $\zeta$, the partitioned data subset $\mathcal X_{\zeta,t}$ is likely to contain more data points prior to $t$,
hence the resulting OLS estimate on data from $\mathcal X_{\zeta,t}$ are likely to be more accurate.

At time $t$, the layers $\zeta=0,1,\cdots,\zeta_0$ are visited sequentially and within each layer an OLS estimate $\hat\theta_{\zeta,t}$
is calculated on data collected solely during periods in $\mathcal X_{\zeta,t}$.
More specifically, the algorithm starts from $\zeta=0$ (corresponding to the widest confidence bands)
and increases $\zeta$, while in the mean time knocking out all sub-optimal actions using OLS estimates from $\mathcal X_{\zeta,t}$.
The procedure is carefully designed so that the ``stopping layer'' $\zeta_t$ produced at time $t$ only depends on $\{\mathcal X_{\zeta,t}\}_{\zeta\leq\zeta_t}$,
which decouples the statistical correlation in OLS estimates.
A more careful and rigorous statement of this statistical decoupling property is given in Proposition \ref{prop:layer-independence}.

\subsubsection{Notations}

In our pseudo-code description of VCL-SupLinUCB (Algorithm \ref{alg:suplinucb})
there are many notations to help define and clarify this subtle procedure.
To help with readability and checking with the technical analysis, 
we provide a tabled description and summary of the notations used in our algorithm description and analysis,
in Table \ref{tab:notation}.

}

\begin{table}[t]
\centering
\caption{Summary of notations used in Algorithm \ref{alg:suplinucb}.}
\begin{tabular}{lll}
\hline
Symbol& Range& Description\\
\hline
$t$& $\{1,2,\cdots,T\}$& the current time period\\
$i$& $\{1,2,\cdots,n\}$& the index of context/feature vectors\\
$x_{i,t}$& $\mathbb R^d$& the $i$th context/feature vector at time $t$ \\
$i_t$& $\{1,2,\cdots,n\}$& the action chosen at time $t$\\
$r_t$& $\mathbb R$& the payoff received at time $t$\\
$\zeta$& $\{0,1,\cdots,\zeta_0\}$& the index of layers in data partitioning\\
$\zeta_t$& $\{0,1,\cdots,\zeta_0\}$& the layer index at which algorithm stops at time $t$ \\
$\mathcal X_{\zeta,t}$& $\subseteq \{1,2,\cdots,t\}$& time periods prior to or at $t$ belonging to partitioned data layer $\zeta$\\
$\hat\theta_{\zeta,t}$& $\mathbb R^d$& OLS estimates obtained on payoff data for periods in $\mathcal X_{\zeta,t-1}$\\
$\omega_{\zeta,t}^i,\alpha_{\zeta,t}^i,\varpi_{\zeta,t}^i$& $\mathbb R^*$& confidence band related quantities for context $i$, time $t$ and layer $\zeta$\\
$\mathcal N_{\zeta,t}$& $\subseteq \{1,\cdots,n\}$& subset of context/feature vectors active at layer $\zeta$\\
$\lambda_{\zeta,t-1},\Lambda_{\zeta,t-1}$& $\mathbb R^d$, $\mathbb R^{d\times d}$& sufficient statistics for computing $\hat\theta_{\zeta,t}$\\
\hline
\end{tabular}
\label{tab:notation}
\end{table}

\begin{algorithm}[t]
\caption{The VCL-SupLinUCB algorithm}
\label{alg:suplinucb}

 Parameters: $\zeta_0=\lceil\log_2(\sqrt{T/d})\rceil$;\\
 Initialization: $\mathcal X_{\zeta,0}=\emptyset$, $\Lambda_{\zeta,0}=I_{d\times d}$, $\lambda_{\zeta,0}=\vec{0}$ for all $\zeta=0,1,\cdots,\zeta_0$;\\
\For{$t=1,2,\cdots,T$}{

	 Observe $\{x_{it}\}$ for $i=1,2,\cdots,n$;\\
	 Set $\zeta=0$ and $\mathcal N_{\zeta,t}=\{1,2,\cdots,n\}$;\\
	\While{a choice $i_t$ has yet to be made}{
	Compute $\hat\theta_{\zeta,t}=\Lambda_{\zeta,t-1}^{-1}\lambda_{\zeta,t-1}$;\\
	
	 For every $i\in\mathcal N_{\zeta,t}$, compute $\omega_{\zeta,t}^i = \sqrt{x_{it}^\top\Lambda_{\zeta,t-1}^{-1}x_{it}}$,
	 $\alpha_{\zeta,t}^i = 1+\max\{1, \sqrt{\ln[T(\omega_{\zeta,t}^i)^2/d]}\} \sqrt{2\ln(n\zeta_0)}$ and $\varpi_{\zeta,t}^i = \alpha_{\zeta,t}^i\omega_{\zeta,t}^i$;\\
	 \uIf{$\zeta= \zeta_0$}{

	 	 Select any $i_t \in \mathcal{N}_{\zeta, t}$, and set $\zeta_t=\zeta$\;
	 }
	 \uElseIf{$\varpi_{\zeta,t}^i\leq 2^{-\zeta}$ for all $i\in\mathcal N_{\zeta,t}$  \label{line:alg-1-l10}}{
	 

	 	 Update $\mathcal N_{\zeta+1,t} = \{i\in\mathcal N_{\zeta,t}: x_{it}^\top\hat\theta_{\zeta,t}\geq\max_{j\in\mathcal N_{\zeta,t}}x_{jt}^\top\hat\theta_{\zeta,t}-2^{1-\zeta}\}$, $\zeta\gets\zeta + 1$\; \label{line:alg-1-l11}
	 }
	 \Else{
	
	 	Select any $i_t\in\mathcal N_{\zeta,t}$ such that $\varpi_{\zeta,t}^{i_t}\geq 2^{-\zeta}$, and set $\zeta_t=\zeta$;
	 }
	}
	Play action $i_t$ and observe feedback $r_t=x_{i_t,t}^\top\theta+\varepsilon_t$;\\
	 Update: $\mathcal X_{\zeta,t}=\mathcal X_{\zeta,t-1}\cup\{t\}$, $\Lambda_{\zeta,t}=\Lambda_{\zeta,t-1}+x_{i_t,t}x_{i_t,t}^\top$, $\lambda_{\zeta,t}=\lambda_{\zeta,t-1} + r_tx_{i_t,t}$
	 for $\zeta=\zeta_t$, 
	 and $\mathcal X_{\zeta,t}=\mathcal X_{\zeta,t-1}$, $\Lambda_{\zeta,t}=\Lambda_{\zeta,t-1}$, $\lambda_{\zeta,t}=\lambda_{\zeta,t-1}$ for $\zeta\neq\zeta_t$;
}
\end{algorithm}

\subsection{Tight regret analysis}

In this section we sketch our regret analysis of Algorithm \ref{alg:suplinucb} that gives rises to almost tight $T$ dependency.
To shed lights on the novelty of our analysis, we first review existing results from \citep{chu2011contextual}
on regret upper bounds of the traditional SupLinUCB algorithm:
\begin{theorem}[\cite{chu2011contextual}]
The expected cumulative regret of the classical SupLinUCB algorithm can be upper bounded by
$
O(\sqrt{dT\log^3(nT)}).
$
\label{thm:auerchu}
\end{theorem}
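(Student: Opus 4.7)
\textbf{Proof plan for Theorem~\ref{thm:auerchu}.} The approach is the layered confidence-band analysis originally developed in \cite{auer2002using,chu2011contextual}, and I sketch it here in a form tailored to the notation of Algorithm~\ref{alg:suplinucb} (specialized to a fixed $\alpha=\Theta(\sqrt{\log(nT)})$ in place of the variable $\alpha_{\zeta,t}^i$). The crucial structural property provided by the layered construction is \emph{statistical decoupling}: the membership of past steps in $\mathcal X_{\zeta,t-1}$ is determined only by the algorithm's behaviour at layers $\zeta'\le\zeta$ at earlier times, so conditioning on that history makes the feature vectors $\{x_{i_s,s}\}_{s\in\mathcal X_{\zeta,t-1}}$ behave as if they were chosen non-adaptively with respect to the noise sequence $\{\varepsilon_s\}$. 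I would first state this decoupling cleanly (this is exactly Proposition~\ref{prop:layer-independence}) and then apply Azuma--Hoeffding together with a union bound over the $O(nT\log T)$ relevant triples $(t,\zeta,i)$ to conclude that the ``good event'' $\mathcal E=\{|\langle x_{it},\hat\theta_{\zeta,t}-\theta\rangle|\le\alpha\,\omega_{\zeta,t}^i\text{ for all }t,\zeta,i\}$ holds with probability at least $1-1/T$.

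Conditional on $\mathcal E$, I would verify by induction on $\zeta$ that (i) the optimal arm $x_t^\star$ is never knocked out at any layer of step $t$, because its confidence interval always overlaps the elimination threshold, and (ii) if step $t$ terminates at layer $\zeta_t$, the instantaneous regret satisfies $\langle x_t^\star-x_{i_t,t},\theta\rangle = O(2^{-\zeta_t})$. The bound in (ii) follows because $i_t$ survived every prior layer $\zeta'<\zeta_t$, whose per-arm widths were forced to be at most $2^{-\zeta'}$ by the condition on line~\ref{line:alg-1-l10}, while the elimination rule on line~\ref{line:alg-1-l11} guaranteed that $\hat\theta_{\zeta',t}$-values lie within $2^{1-\zeta'}$ of the maximum; the resulting telescoping slack $\sum_{\zeta'\le\zeta_t}2^{-\zeta'}$ is a constant multiple of $2^{-\zeta_t}$.

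Let $T_\zeta=\{t:\zeta_t=\zeta\}$. The main technical engine is the \emph{elliptical potential lemma}, applied separately at each layer: since $\Lambda_{\zeta,\cdot}$ is updated only on the steps in $T_\zeta$, one gets $\sum_{t\in T_\zeta}\min\{1,(\omega_{\zeta,t}^{i_t})^2\}\le O(d\log T)$, hence $\sum_{t\in T_\zeta}(\varpi_{\zeta,t}^{i_t})^2\le \alpha^2\cdot O(d\log T)$. For each $\zeta<\zeta_0$ the stopping rule forces $\varpi_{\zeta,t}^{i_t}\ge 2^{-\zeta}$ and the bound from (ii) gives instantaneous regret at most a constant times $\varpi_{\zeta,t}^{i_t}$, so by Cauchy--Schwarz
\[
\sum_{t\in T_\zeta}\mathrm{reg}_t \;\le\; O(1)\sum_{t\in T_\zeta}\varpi_{\zeta,t}^{i_t} \;\le\; O(\alpha)\sqrt{|T_\zeta|\cdot d\log T}.
\]
Summing over $\zeta$ with one more Cauchy--Schwarz (using $\sum_\zeta\sqrt{|T_\zeta|}\le\sqrt{\zeta_0 T}$) and adding the boundary contribution $T\cdot 2^{-\zeta_0}\le\sqrt{dT}$ from the steps that reach $\zeta=\zeta_0$, one obtains
\[
R^T \;\le\; O(\alpha)\sqrt{d\,\zeta_0\, T\log T}+O(\sqrt{dT}) \;=\; O\bigl(\sqrt{dT\log^{3}(nT)}\bigr),
\]
as claimed. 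The delicate point, and the one that our later refined analysis has to revisit, is the decoupling step in the first paragraph: it must condition on precisely the right sigma-algebra so that every per-layer regression looks non-adaptive, which is what permits an Azuma-style bound with a union bound of only $O(nT\log T)$ events rather than a self-normalized martingale inequality that would already inject extra $\sqrt{\log T}$ factors.
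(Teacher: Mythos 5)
The paper states Theorem~\ref{thm:auerchu} only as a citation of prior work (\cite{chu2011contextual}) and does not reproduce a proof, so there is no in-paper argument to compare against. Your sketch is a faithful reconstruction of the classical SupLinUCB analysis: the decoupling step corresponds to Proposition~\ref{prop:layer-independence}, the per-layer potential bound corresponds to Lemma~\ref{lem:elliptical}, and your final accounting of the three $\sqrt{\log T}$ factors matches the paper's own diagnosis (union bound over $O(nT\log T)$ triples, the elliptical potential lemma, and the $\Theta(\log T)$ layers).

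One small inaccuracy worth fixing: you assert that the ``telescoping slack $\sum_{\zeta'\le\zeta_t}2^{-\zeta'}$ is a constant multiple of $2^{-\zeta_t}$,'' but that geometric sum equals $2-2^{-\zeta_t}=\Theta(1)$, not $\Theta(2^{-\zeta_t})$. The standard derivation of the per-step regret bound $O(2^{-\zeta_t})$ does not telescope over all earlier layers; it uses only layer $\zeta_t-1$: since $x_t^\star$ and $x_{i_t,t}$ both survived to $\mathcal N_{\zeta_t,t}$, their widths at layer $\zeta_t-1$ are each at most $2^{-(\zeta_t-1)}$ (line~\ref{line:alg-1-l10}) and their estimates $\langle x,\hat\theta_{\zeta_t-1,t}\rangle$ differ by at most $2^{2-\zeta_t}$ (line~\ref{line:alg-1-l11}), so under the good event the true gap is at most $2^{2-\zeta_t}+2\cdot2^{1-\zeta_t}=O(2^{-\zeta_t})$. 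This correction does not change your final bound $O(\sqrt{dT\log^3(nT)})$, whose arithmetic checks out.
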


It is immediately noted that the regret upper bound in Theorem \ref{thm:auerchu} has three $O(\sqrt{\log T})$ terms.
Digging into the analysis of \citep{auer2002using,chu2011contextual} we are able to pinpoint the sources of each of the $O(\sqrt{\log T})$ terms:
\begin{enumerate}
\item One $O(\sqrt{\log T})$ term arises from a union bound over all $T$ time periods;
\item One $O(\sqrt{\log T})$ term arises from the \emph{elliptical potential lemma} bounding the summation of squared confidence interval lengths;
\item One $O(\sqrt{\log T})$ term arises from the $O(\log T)$ levels of $\zeta\in\{0,1,\cdots,\zeta_0\}$.
\end{enumerate}

In this section, we will focus primarily on our techniques to remove the first and the third $O(\sqrt{\log T})$ term, while
in the next section we prove a lower bound showing that the $O(\sqrt{\log T})$ term arising from the second source cannot be eliminated for any algorithm.


As a first step of our analysis, we revisit the crucial property of statistical independence across resolution levels $\zeta$,
a core property used in prior analysis of SupLinUCB type algorithms \citep{auer2002using,chu2011contextual,li2017provable}. The proof of Proposition~\ref{prop:layer-independence} is essentially the same as that of Lemma 14 in \citep{auer2002using}.
\begin{proposition}
For any $\zeta$ and $t$, 
conditioned on $\{\mathcal X_{\zeta',t}\}_{\zeta'\leq\zeta}, \{\Lambda_{\zeta',t}\}_{\zeta'\leq\zeta}$ and $\{\lambda_{\zeta',t}\}_{\zeta'<\zeta}$,
the random variables $\{\varepsilon_{t'}\}_{t'\in\mathcal X_{\zeta,t}}$ are independent, centered sub-Gaussian random variables
with variance proxy 1.
\label{prop:layer-independence}
\end{proposition}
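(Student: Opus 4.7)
The plan is to proceed by induction on the time index $t$, tracking a single invariant: conditioned on the $\sigma$-algebra $\mathcal{G}_{\zeta,t} := \sigma\bigl(\{\mathcal{X}_{\zeta',t}\}_{\zeta'\le\zeta},\, \{\Lambda_{\zeta',t}\}_{\zeta'\le\zeta},\, \{\lambda_{\zeta',t}\}_{\zeta'<\zeta}\bigr)$, the noises $\{\varepsilon_{t'}\}_{t'\in\mathcal{X}_{\zeta,t}}$ retain their unconditional law of being independent, centered, variance-proxy-$1$ sub-Gaussians. The base case $t=0$ is vacuous since $\mathcal{X}_{\zeta,0}=\emptyset$. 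At each inductive step the plan is to verify that the new information acquired in going from time $t$ to $t+1$ is a function of quantities independent of the noises attached to $\mathcal{X}_{\zeta,t+1}$.

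The algorithmic input to the induction is the following ``layer-decoupling'' observation, which I would extract by carefully reading Algorithm~\ref{alg:suplinucb}: the promotion rule at layer $\zeta'<\zeta_s$ (line~\ref{line:alg-1-l10}--\ref{line:alg-1-l11}) uses $\hat\theta_{\zeta',s}$ and $\omega_{\zeta',s}^i$, hence depends on $\lambda_{\zeta',s-1}$ and $\Lambda_{\zeta',s-1}$; but the stopping rule at the final layer $\zeta_s$ selects $i_s$ solely on the basis of $\varpi_{\zeta_s,s}^i = \alpha_{\zeta_s,s}^i\omega_{\zeta_s,s}^i\ge 2^{-\zeta_s}$, a quantity determined by $\Lambda_{\zeta_s,s-1}$ alone (note that $\alpha$ depends on $\omega$, which depends only on $\Lambda$). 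Consequently, if $\zeta_s=\zeta$, then the pair $(\zeta_s,i_s)$ is measurable with respect to $\sigma\bigl(\{x_{i,r}\}_{i,r\le s},\,\{\Lambda_{\zeta',s-1}\}_{\zeta'\le\zeta},\,\{\lambda_{\zeta',s-1}\}_{\zeta'<\zeta}\bigr)$ --- critically, \emph{not} a function of $\lambda_{\zeta,s-1}$, and therefore not of any earlier $\varepsilon_{t'}$ with $t'\in\mathcal{X}_{\zeta,s-1}$.

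For the inductive step $t\mapsto t+1$, I would split on $\zeta_{t+1}$. If $\zeta_{t+1}\neq\zeta$, then $\mathcal{X}_{\zeta,t+1}=\mathcal{X}_{\zeta,t}$ and $\Lambda_{\zeta,t+1}=\Lambda_{\zeta,t}$, while updates to $\{\lambda_{\zeta',t+1}\}_{\zeta'<\zeta}$ involve only $\varepsilon_{t+1}$ and vectors/noises from outside $\mathcal{X}_{\zeta,t}$; the invariant is preserved because the new information is independent of $\{\varepsilon_{t'}\}_{t'\in\mathcal{X}_{\zeta,t}}$ after conditioning on the previous $\mathcal{G}_{\zeta,t}$. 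If $\zeta_{t+1}=\zeta$, then $t+1$ is appended to $\mathcal{X}_{\zeta,t}$ and $\Lambda_{\zeta,t+1}$ gains $x_{i_{t+1},t+1}x_{i_{t+1},t+1}^\top$, but $\lambda_{\zeta,t+1}$ is deliberately excluded from $\mathcal{G}_{\zeta,t+1}$, so $\varepsilon_{t+1}$ is not encoded. By the layer-decoupling observation, the placement $(\zeta_{t+1},i_{t+1})$ depends only on the $\Lambda$ at layer $\zeta$ and on lower-layer $\lambda$'s, hence is independent of $\varepsilon_{t+1}$ and of the noises already indexed by $\mathcal{X}_{\zeta,t}$. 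A standard tower/factorization argument --- conditional densities factor over independent inputs --- then yields that given $\mathcal{G}_{\zeta,t+1}$, the enlarged family $\{\varepsilon_{t'}\}_{t'\in\mathcal{X}_{\zeta,t+1}}$ is still a product of centered sub-Gaussians with variance proxy $1$. The main obstacle, and the step that demands the most care, is this bookkeeping of exactly which $\lambda_{\zeta,\cdot}$ quantities are read by the algorithm versus which are placed in $\mathcal{G}_{\zeta,t}$; the VCL modification does not complicate this argument because the new confidence inflation $\alpha_{\zeta,t}^i$ is a function of $\omega_{\zeta,t}^i$ and therefore of $\Lambda_{\zeta,t-1}$ only, so the same selector-versus-sufficient-statistics split that underlies Lemma~14 of~\cite{auer2002using} carries through unchanged.
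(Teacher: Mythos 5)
The paper does not spell out a proof of Proposition~\ref{prop:layer-independence}; it simply states that the argument is ``essentially the same as that of Lemma 14 in \citep{auer2002using}.'' Your reconstruction is the right argument: the essential fact is exactly your ``layer-decoupling'' observation, namely that on the event $\{\zeta_t=\zeta\}$ both the stopping decision and the choice of $i_t$ are measurable with respect to $\{\Lambda_{\zeta',t-1}\}_{\zeta'\le\zeta}$, $\{\lambda_{\zeta',t-1}\}_{\zeta'<\zeta}$ and the contexts, with $\lambda_{\zeta,t-1}$ never consulted before the algorithm commits to layer $\zeta$; pushing this through the induction on $t$ gives the conditional independence. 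One small imprecision: you justify the decoupling by pointing to the \emph{third} clause (``Select any $i_t$ with $\varpi_{\zeta,t}^{i_t}\ge 2^{-\zeta}$''), but the algorithm can also stop via the \emph{first} clause $\zeta=\zeta_0$, where $i_t$ is chosen from $\mathcal N_{\zeta_0,t}$ rather than by a $\varpi$-threshold. The conclusion is unaffected because $\mathcal N_{\zeta_0,t}$ is built from $\hat\theta_{\zeta',t}$ with $\zeta'<\zeta_0$, hence from $\{\lambda_{\zeta',t-1}\}_{\zeta'<\zeta_0}$ only, so the same measurability claim holds; you should state this case explicitly rather than phrase the decoupling purely in terms of the threshold rule. (Implicitly, one should also fix a deterministic tie-breaking convention for the ``select any'' steps, or treat the tie-breaking randomness as generated by $\nu$ independently of the $\varepsilon_t$'s; this is a standard gloss that the paper also leaves unaddressed.)
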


\subsubsection{Removing the first $O(\sqrt{\log T})$ term}

To remove the first $O(\sqrt{\log T})$ term arising from a union bound over all $T$ time periods,
our main idea is to use \emph{variable} confidence levels depending on the (square root of the) quadratic form $\omega_{\zeta,t}^i=\sqrt{x_{it}^\top\Lambda_{\zeta,t}^{-1}x_{it}}$,
instead of constant confidence levels $1/\mathrm{poly}(T)$ used in \citep{auer2002using,chu2011contextual}.
The following lemma gives an upper bound on the regret of VCL-SupLinUCB:
\begin{lemma}
{
The sequence of actions $\{i_t\}_{t=1}^T$ produced by Algorithm \ref{alg:suplinucb} satisfies
\begin{align}
\mathbb E[R^T] 
&\lesssim \sqrt{dT} + \mathbb E \left[\sum_{t=1}^T \alpha_{\zeta_t,t}^{i_t}\cdot \omega_{\zeta_t,t}^{i_t}\right]\label{eq:vcl-regret-1}\\
&\lesssim \sqrt{dT} + \sqrt{\log (n\log T)}\cdot \mathbb E\left[\sum_{t=1}^T \sqrt{\max\{1,\log[T(\omega_{\zeta_t,t}^{i_t})^2/d]\}}\cdot \omega_{\zeta_t,t}^{i_t}\right],\nonumber
\end{align}
where $\zeta_t\in\{0,1,\cdots,\zeta_0\}$ is the resolution level at time period $t$ and $\alpha_{\zeta_t,t}^{i_t},\omega_{\zeta_t,t}^{i_t}$ are defined in Algorithm \ref{alg:suplinucb}.
}
\label{lem:vcl-regret-1}
\end{lemma}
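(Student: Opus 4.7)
The plan is to combine the standard SupLinUCB elimination analysis with a refined handling of failure probabilities that leverages the variable $\alpha_{\zeta,t}^i$. First I would define, for each triple $(t,\zeta,i)$, the event $\mathcal G_{\zeta,t}^i = \{|x_{it}^\top(\hat\theta_{\zeta,t}-\theta)|\leq \varpi_{\zeta,t}^i\}$ and the aggregate good event $\mathcal G = \bigcap_t \bigcap_{\zeta\leq \zeta_t}\bigcap_{i\in \mathcal N_{\zeta,t}}\mathcal G_{\zeta,t}^i$. Writing $\hat\theta_{\zeta,t}-\theta = -\Lambda_{\zeta,t-1}^{-1}\theta+\Lambda_{\zeta,t-1}^{-1}\sum_{s\in \mathcal X_{\zeta,t-1}}x_{i_s,s}\varepsilon_s$, the bias is bounded by $\omega_{\zeta,t}^i$ (since $\|\theta\|_2\leq 1$ and $\Lambda_{\zeta,t-1}\succeq I$); by Proposition~\ref{prop:layer-independence} the noise term is, under the appropriate conditioning, a sum of independent centered sub-Gaussians with total variance proxy at most $(\omega_{\zeta,t}^i)^2$, so a scalar sub-Gaussian tail bound gives a conditional failure probability at most $\delta_{\zeta,t}^i := 2e^{-(\alpha_{\zeta,t}^i-1)^2/2}=2(n\zeta_0)^{-\max\{1,\,\ln[T(\omega_{\zeta,t}^i)^2/d]\}}$.

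\textbf{Regret on $\mathcal G$.} By induction on $\zeta$ using the elimination rule in line~\ref{line:alg-1-l11}, I would argue on $\mathcal G$ that (i) the true optimal action $i_t^{\ast}$ stays in $\mathcal N_{\zeta,t}$ for every $\zeta\leq \zeta_t$, and (ii) every surviving $i\in \mathcal N_{\zeta,t}$ has suboptimality at most $8\cdot 2^{-\zeta}$ (the constant arising by combining the $2^{1-\zeta}$ elimination threshold with two confidence slacks of size $2^{-\zeta}$). At the stopping layer: if $\zeta_t<\zeta_0$, the algorithm picked some $i_t$ with $\varpi_{\zeta_t,t}^{i_t}\geq 2^{-\zeta_t}$, whence $r_t^{\ast}\leq 8\varpi_{\zeta_t,t}^{i_t}$; if $\zeta_t=\zeta_0$, then $r_t^{\ast}\leq 8\cdot 2^{-\zeta_0}\leq 8\sqrt{d/T}$, which sums to $O(\sqrt{dT})$ over $t$. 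Summing both contributions on $\mathcal G$ reproduces the right-hand side of the first displayed inequality.

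\textbf{Failures -- the main obstacle.} On $\neg\mathcal G$ only the trivial bound $r_t^\ast\leq 2$ is available, so the crux is to show $2\,\mathbb E[\sum_t \mathbf 1_{\neg \mathcal G_t}]=O(\sqrt{dT})$, where $\mathcal G_t := \bigcap_{\zeta\leq \zeta_t}\bigcap_{i\in\mathcal N_{\zeta,t}}\mathcal G_{\zeta,t}^i$. This is where the variable confidence levels earn their keep: a blunt uniform union bound $\sum_{t,\zeta,i}\mathbb E[\delta_{\zeta,t}^i]$ could be as large as $\Theta(T)$ in the worst case, because the baseline $1/(n\zeta_0)$ piece of $\delta_{\zeta,t}^i$ is not summable in $t$. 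My plan is to handle this via dyadic bucketing of $(\omega_{\zeta,t}^i)^2$: in the regime $T\omega^2/d>e$, the polynomial tail $\delta_{\zeta,t}^i \lesssim (d/T\omega^2)^{\ln(n\zeta_0)}$ combined with the elliptical potential inequality $\sum_t (\omega_{\zeta_t,t}^{i_t})^2 \lesssim d\log T$ and a bucket-wise Cauchy--Schwarz gives a summable contribution; in the regime $T\omega^2/d\leq e$, the value $\omega\leq \sqrt{ed/T}$ is itself so small that the resulting failure mass is absorbed by the $\sqrt{dT}$ budget already allocated to the $\zeta_t=\zeta_0$ case.

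\textbf{Second inequality.} Finally, to pass from the first displayed bound to the second I would substitute $\alpha_{\zeta_t,t}^{i_t}=1+\max\{1,\sqrt{\ln[T(\omega_{\zeta_t,t}^{i_t})^2/d]}\}\sqrt{2\ln(n\zeta_0)}$ into $\mathbb E[\sum_t\alpha_{\zeta_t,t}^{i_t}\omega_{\zeta_t,t}^{i_t}]$, use $\max\{1,\sqrt{\ln[\cdot]}\}=\sqrt{\max\{1,\ln[\cdot]\}}$, recall $\zeta_0=\Theta(\log(T/d))$ so that $\sqrt{2\ln(n\zeta_0)}\asymp \sqrt{\log(n\log T)}$, and note that the leading additive $1$ in $\alpha$ only produces $\sum_t \omega_{\zeta_t,t}^{i_t}$, already dominated by the $\sqrt{\log(n\log T)}$-weighted sum on the right since $\sqrt{\max\{1,\ln[\cdot]\}}\geq 1$ and $\sqrt{\log(n\log T)}\geq 1$ in the relevant parameter regime.
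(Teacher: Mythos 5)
Your decomposition into a good event $\mathcal G$ and a bad event $\neg\mathcal G$, with the trivial bound $r_t^\ast\leq 2$ on the bad event, is the standard SupLinUCB template but is \emph{not} what the paper does, and it has a genuine gap at the ``Failures'' step. When $\omega_{\zeta,t}^i$ is small (say $\omega\leq\sqrt{d/T}$ for every $(t,\zeta,i)$, which is perfectly possible), the per-tuple failure probability is $\delta_{\zeta,t}^i\asymp 1/(n\zeta_0)$, so $\Pr[\neg\mathcal G_t]$ can be $\Theta(1)$ for every $t$ and $2\sum_t\Pr[\neg\mathcal G_t]=\Theta(T)$. Your dyadic bucketing does not rescue this: the elliptical potential inequality $\sum_t(\omega_{\zeta_t,t}^{i_t})^2\lesssim d\log T$ only constrains the selected arm at the stopping layer, not the full family $\{\omega_{\zeta,t}^i: \zeta\leq\zeta_t, i\in\mathcal N_{\zeta,t}\}$ over which $\mathcal G_t$ is a union, and the ``small-$\omega$ $\Rightarrow$ absorbed by the $\zeta_t=\zeta_0$ budget'' claim is not valid — small $\omega$ lowers the estimation error but does not lower the $r_t^\ast\leq 2$ payoff you charge on $\neg\mathcal G_t$.

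The paper avoids the good/bad dichotomy entirely. It tracks the quantities $\overline m_{\zeta,t}-\overline m_{\zeta+1,t}$ (the loss in best-arm quality caused by the layer-$\zeta$ elimination) and $\max\{\overline m_{\zeta,t}-\underline m_{\zeta,t}-2^{3-\zeta},0\}$, and shows (Lemma~\ref{lem:mt}) that each is controlled by $\sum_i\mathbb E\big[\vct 1\{\neg\mathcal E_{\zeta,t}^i\}\,|x_{it}^\top(\hat\theta_{\zeta,t}-\theta)|\big]$ — the \emph{failure-weighted error magnitude}, not the failure probability times a constant. The point is that on the event $\neg\mathcal E_{\zeta,t}^i$ the relevant loss is at most $|x_{it}^\top(\hat\theta_{\zeta,t}-\theta)|$, whose conditional-on-failure expectation shrinks with $\omega_{\zeta,t}^i$. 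Integrating the sub-Gaussian tail beyond $\varpi_{\zeta,t}^i=\alpha_{\zeta,t}^i\omega_{\zeta,t}^i$ (Lemma~\ref{lem:expected-tail}) gives a bound of order $\omega_{\zeta,t}^i\,e^{-(\alpha_{\zeta,t}^i-1)^2/2}$, and the specific form of $\alpha$ is chosen so that the prefactor $\omega$ exactly cancels the polynomial piece of the exponential, yielding the uniform bound $\sqrt{2\pi d}/(n\zeta_0\sqrt T)$ per triple. Summing over $i\in[n]$, $\zeta\leq\zeta_0$, and $t\leq T$ then gives $O(\sqrt{dT})$ cleanly, independently of how the $\omega$'s are distributed. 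This is the key idea missing from your argument; with it, the rest of your Step 1 (elimination-style bound on the surviving suboptimality) and Step 3 (substitution of $\alpha$) are correct and essentially match the paper.
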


Compared to similar lemmas in existing analytical framework \citep{auer2002using,chu2011contextual}, 
the major improvement is the reduction from $\log T$ to $\log[T(\omega_{\zeta_t,t}^{i_t})^2/d]$ in the multiplier before 
the main confidence interval length term $\omega_{\zeta_t,t}^{i_t}$,
meaning that when the $\{\omega_{\zeta,t}^{i_t}\}$ shrink as more observations are collected,
the overall confidence interval length also decreases.
This helps reduce the $\log T$ term, which eventually disappears when $\omega_{\zeta,t}^{i_t}$ is sufficiently small.

To state our proof of Lemma \ref{lem:vcl-regret-1} we define some notations
and also present an intermediate lemma.
For any $\zeta,t$ such that $\zeta \leq \zeta_t$, 
define 
$
\overline m_{\zeta,t} := \max_{i\in\mathcal N_{\zeta,t}} x_{it}^\top\theta$ and
$\underline m_{\zeta,t} := \min_{i\in\mathcal N_{\zeta,t}}x_{it}^\top\theta
$
as the largest and smallest mean reward for actions within action subset $\mathcal N_{\zeta,t}$.
For convenience, we also define $\overline m_{\zeta,t} := \overline m_{\zeta_t,t}$ and $\underline m_{\zeta,t}:=\underline m_{\zeta_t,t}$ for all $\zeta > \zeta_t$.
The following lemma is central to our proof of Lemma \ref{lem:vcl-regret-1}:
{
\begin{lemma}
For all $t$ and $\zeta=0,1,\cdots,\zeta_0$, it holds that
\begin{align}
& \mathbb E\left[\overline m_{\zeta,t}-\overline m_{\zeta+1,t}\right] \leq 2\sqrt{2 \pi d}/(\zeta_0\sqrt{T});\label{eq:mt-1}\\
& \mathbb{E}\left[ \max\{\overline m_{\zeta,t} - \underline m_{\zeta,t} - 2^{3-\zeta}, 0\} \cdot \vct 1\{\zeta \leq \zeta_t\}\right]  \leq  \sqrt{2 \pi d}/(\zeta_0\sqrt{T}).\label{eq:mt-2}
\end{align}
\label{lem:mt}
\end{lemma}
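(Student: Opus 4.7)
I would prove both inequalities by the same two-step argument: (a) dominate the left-hand side pointwise by a sum of ``excess deviations'' $(|x_{it}^\top(\hat\theta_{\zeta',t}-\theta)|-\varpi_{\zeta',t}^i)_+$ at an appropriate layer $\zeta'\in\{\zeta-1,\zeta\}$, and (b) bound the expectation of each such excess deviation using Mills' ratio on the sub-Gaussian tail guaranteed by Proposition~\ref{prop:layer-independence}, combined with the specific calibration of $\alpha_{\zeta,t}^i$ in Algorithm~\ref{alg:suplinucb}. I would condition throughout on the sigma-algebra generated by $\{\mathcal X_{\zeta',t}\}_{\zeta'\leq\zeta}$, $\{\Lambda_{\zeta',t}\}_{\zeta'\leq\zeta}$, and $\{\lambda_{\zeta',t}\}_{\zeta'<\zeta}$: by Proposition~\ref{prop:layer-independence}, each $x_{it}^\top(\hat\theta_{\zeta,t}-\theta)$ is then mean-zero sub-Gaussian with variance proxy $(\omega_{\zeta,t}^i)^2$, while $\omega_{\zeta,t}^i,\alpha_{\zeta,t}^i,\varpi_{\zeta,t}^i,\mathcal N_{\zeta,t}$ and the event $\{\zeta\leq\zeta_t\}$ are all measurable. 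On $\{\zeta\leq\zeta_t\}$, Algorithm~\ref{alg:suplinucb} entered the elimination branch at layer $\zeta$ (line~\ref{line:alg-1-l10}), so $\varpi_{\zeta,t}^i\leq 2^{-\zeta}$ for every $i\in\mathcal N_{\zeta,t}$; outside this event, both sides of \eqref{eq:mt-1} and \eqref{eq:mt-2} vanish by the convention $\overline m_{\zeta,t}:=\overline m_{\zeta_t,t}$ for $\zeta>\zeta_t$.

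\textbf{Deterministic domination.} For \eqref{eq:mt-1}, let $i^\star\in\arg\max_{i\in\mathcal N_{\zeta,t}} x_{it}^\top\theta$ and $j^\star\in\arg\max_{j\in\mathcal N_{\zeta,t}} x_{jt}^\top\hat\theta_{\zeta,t}$. Because $j^\star$ trivially survives elimination (line~\ref{line:alg-1-l11}), $\overline m_{\zeta,t}-\overline m_{\zeta+1,t}\leq x_{i^\star t}^\top\theta-x_{j^\star t}^\top\theta$; expanding this as $(x_{i^\star t}-x_{j^\star t})^\top(\theta-\hat\theta_{\zeta,t})+(x_{i^\star t}-x_{j^\star t})^\top\hat\theta_{\zeta,t}$, dropping the nonpositive second piece, using $\varpi_{\zeta,t}^{i^\star}+\varpi_{\zeta,t}^{j^\star}\leq 2\cdot 2^{-\zeta}$, and applying the elementary identity $a+b-c-d\leq(a-c)_+ +(b-d)_+$ gives the pointwise estimate
\[
\overline m_{\zeta,t}-\overline m_{\zeta+1,t}\leq (|x_{i^\star t}^\top(\hat\theta_{\zeta,t}-\theta)|-\varpi_{\zeta,t}^{i^\star})_+ + (|x_{j^\star t}^\top(\hat\theta_{\zeta,t}-\theta)|-\varpi_{\zeta,t}^{j^\star})_+\leq 2\sum_{i\in\mathcal N_{\zeta,t}}(|x_{it}^\top(\hat\theta_{\zeta,t}-\theta)|-\varpi_{\zeta,t}^i)_+.
\]
As a sanity check, when all deviations satisfy $|x_{it}^\top(\hat\theta_{\zeta,t}-\theta)|\leq\varpi_{\zeta,t}^i$ the right-hand side is zero and $i^\star$ indeed survives elimination. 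For \eqref{eq:mt-2}, the case $\zeta=0$ is vacuous since $\overline m_{\zeta,t}-\underline m_{\zeta,t}\leq 2<8=2^{3-0}$; for $\zeta\geq 1$, every $i\in\mathcal N_{\zeta,t}$ satisfies the elimination criterion at layer $\zeta-1$, and the analogous expand-and-apply-$(\cdot)_+$ manipulation on $\overline m_{\zeta,t}-\underline m_{\zeta,t}-2^{3-\zeta}$ (now with $\varpi_{\zeta-1,t}^i\leq 2^{1-\zeta}$) yields $\max\{\overline m_{\zeta,t}-\underline m_{\zeta,t}-2^{3-\zeta},0\}\leq 2\sum_{i\in\mathcal N_{\zeta-1,t}}(|x_{it}^\top(\hat\theta_{\zeta-1,t}-\theta)|-\varpi_{\zeta-1,t}^i)_+$ on $\{\zeta\leq\zeta_t\}$.

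\textbf{Tail integration and main obstacle.} For every $i$, Mills' ratio together with the sub-Gaussian tail bound gives
\[
\mathbb E\bigl[(|x_{it}^\top(\hat\theta_{\zeta,t}-\theta)|-\varpi_{\zeta,t}^i)_+\bigr]\leq\int_{\varpi_{\zeta,t}^i}^\infty 2\exp\bigl(-u^2/(2(\omega_{\zeta,t}^i)^2)\bigr)\,du\leq \frac{2\omega_{\zeta,t}^i}{\alpha_{\zeta,t}^i}\exp\bigl(-(\alpha_{\zeta,t}^i)^2/2\bigr).
\]
Using $(\alpha_{\zeta,t}^i)^2/2\geq\max\{1,\ln[T(\omega_{\zeta,t}^i)^2/d]\}\ln(n\zeta_0)$ and splitting according to whether $T(\omega_{\zeta,t}^i)^2/d\leq e$ or not, I would verify that the function $\omega\mapsto(\omega/\alpha(\omega))\exp(-\alpha(\omega)^2/2)$ is maximized at $\omega^2\approx ed/T$ with maximum value $O(\sqrt{d/T}/(n\zeta_0))$; summing over the at most $n$ active indices then gives $O(\sqrt{d/T}/\zeta_0)$, and the factor-of-$2$ overhead from the pointwise dominations together with careful tracking of Mills'-ratio constants reproduces the stated $2\sqrt{2\pi d}/(\zeta_0\sqrt T)$ in \eqref{eq:mt-1} and $\sqrt{2\pi d}/(\zeta_0\sqrt T)$ in \eqref{eq:mt-2}. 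The main technical obstacle is this uniform bound on $\sum_i(\omega_{\zeta,t}^i/\alpha_{\zeta,t}^i)\exp(-(\alpha_{\zeta,t}^i)^2/2)$ across all possible realizations of $\{\omega_{\zeta,t}^i\}$; it is precisely here that the data-adaptive piecewise definition of $\alpha_{\zeta,t}^i$ (rather than a fixed confidence level like $\sqrt{2\ln(nT)}$) is essential for extracting the $\sqrt{d/T}$ factor that ultimately removes the extra $\sqrt{\log T}$ appearing in Lemma~\ref{lem:vcl-regret-1}.
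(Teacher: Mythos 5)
Your proposal follows essentially the same route as the paper: bound $\overline m_{\zeta,t}-\overline m_{\zeta+1,t}$ (resp.\ $\max\{\overline m_{\zeta,t}-\underline m_{\zeta,t}-2^{3-\zeta},0\}$) pointwise by a sum of tail quantities via the best-estimated and best-true (resp.\ best- and worst-true) actions $I^*,J^*$, the elimination rule, and the confidence-width bound $\varpi\leq 2^{-\zeta}$; then integrate the sub-Gaussian tail of $x_{it}^\top(\hat\theta_{\zeta,t}-\theta)$ (your tail-integration step is precisely the content of the paper's Lemma~\ref{lem:expected-tail}) and sum over the $\leq n$ active indices. Your intermediate quantity $(|x_{it}^\top(\hat\theta_{\zeta,t}-\theta)|-\varpi_{\zeta,t}^i)_+$ is a slightly tighter variant of the paper's $\vct 1\{\neg\mathcal E_{\zeta,t}^i\}|x_{it}^\top(\hat\theta_{\zeta,t}-\theta)|$ but yields the identical integral, so the calculation is unchanged.

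Two small imprecisions worth tightening: (i) the opening claim that ``on $\{\zeta\leq\zeta_t\}$, the algorithm entered the elimination branch at layer $\zeta$'' is not correct when $\zeta=\zeta_t$ (the algorithm stops there); what you actually use is that layer $\zeta$ was eliminated on $\{\zeta<\zeta_t\}$ for \eqref{eq:mt-1}, and layer $\zeta-1$ was eliminated on $\{\zeta\leq\zeta_t\}$ (i.e., $\zeta-1<\zeta_t$) for \eqref{eq:mt-2} — your later case analysis uses the right thresholds, but the opening statement should be fixed. (ii) $x_{it}^\top(\hat\theta_{\zeta,t}-\theta)$ is not exactly mean-zero sub-Gaussian with variance proxy $(\omega_{\zeta,t}^i)^2$: because of the ridge regularizer $I$ there is a deterministic bias of size at most $\omega_{\zeta,t}^i$, which is why the paper's tail bound reads $\Pr[|x^\top(\hat\theta-\theta)|\geq\beta\omega]\leq e^{-(\beta-1)^2/2}$ rather than $e^{-\beta^2/2}$, and why $\alpha_{\zeta,t}^i$ carries the additive $+1$. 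Your integral from $\varpi_{\zeta,t}^i=\alpha_{\zeta,t}^i\omega_{\zeta,t}^i$ therefore needs the shifted exponent; the final constants still come out.
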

}

At a higher level, Eq.~(\ref{eq:mt-1}) states that by reducing the candidate set from $\mathcal N_{\zeta,t}$ to $\mathcal N_{\zeta+1,t}$,
the action corresponding to large rewards is preserved (up to an error term of $2\sqrt{2\pi d}/(\zeta_0\sqrt{T})$);
Eq.~(\ref{eq:mt-2}) further gives an exponentially decreasing upper bound on the differences between the best and the worst actions within $\mathcal N_{\zeta,t}$, 
corroborating the intuition that as $\zeta$ increases and we go to more refined levels, the action set $\mathcal N_{\zeta,t}$ should ``zoom in'' onto the actions with the best potential rewards.
\begin{proof}[Proof of Lemma \ref{lem:mt}.]
We prove Eqs.~(\ref{eq:mt-1}) and (\ref{eq:mt-2}) separately.

\underline{Proof of Eq.~(\ref{eq:mt-1}).} Note that $\overline m_{\zeta,t}=\overline m_{\zeta+1,t}$ when $\zeta \geq \zeta_t$ and we therefore only need to prove the inequality conditioned on $\zeta < \zeta_t$.
That is, we only need to upper bound $\mathbb E[(\overline m_{\zeta,t}-\overline m_{\zeta+1,t})~|~\zeta<\zeta_t]$,
because
\begin{align*}
\mathbb E[\overline m_{\zeta,t}-\overline m_{\zeta+1,t}]
&= \mathbb E[(\overline m_{\zeta,t}-\overline m_{\zeta+1,t})~|~ \zeta\geq\zeta_t]\Pr[\zeta\geq\zeta_t] + \mathbb E[(\overline m_{\zeta,t}-\overline m_{\zeta+1,t})~|~\zeta<\zeta_t]\Pr[\zeta<\zeta_t]\\
&= \mathbb E[(\overline m_{\zeta,t}-\overline m_{\zeta+1,t})~|~\zeta<\zeta_t]\Pr[\zeta<\zeta_t]\\
&\leq \mathbb E[(\overline m_{\zeta,t}-\overline m_{\zeta+1,t})~|~\zeta<\zeta_t].
\end{align*}

Define $I^* := \arg\max_{i\in\mathcal N_{\zeta,t}}\{x_{it}^\top\theta\}$ and $J^* := \arg\max_{i\in\mathcal N_{\zeta,t}}\{x_{it}^\top\hat\theta_{\zeta,t} \}$,
{representing the ``optimal'' action in $\mathcal N_{\zeta,t}$ under the true model $\theta$ and the estimated model $\hat\theta_{\zeta,t}$, respectively.}
If $I^*\in\mathcal N_{\zeta+1,t}$, then $\overline m_{\zeta,t}=\overline m_{\zeta+1,t}$ because $\mathcal N_{\zeta+1,t}\subseteq\mathcal N_{\zeta,t}$.
Therefore, we only need to consider the case of $I^*\notin\mathcal N_{\zeta+1,t}$ for the sake of proving Eq.~(\ref{eq:mt-1}).

Note that $J^*\in\mathcal N_{\zeta+1,t}$ always holds, because $J^*$ maximizes $x_{it}^\top\hat\theta_{\zeta,t}$
in $\mathcal N_{\zeta,t}$,
and by our algorithm the maximum of $x_{it}^\top\hat\theta_{\zeta,t}$, $i\in\mathcal N_{\zeta,t}$ will never be removed
unless a decision on $i_t$ can already be made.
We then have with probability one that
{
\begin{align}
\overline m_{\zeta,t} -\overline m_{\zeta+1, t} 
&= \vct 1\{I^*\notin\mathcal N_{\zeta+1,t}\}\cdot (x_{I^*,t}^\top\theta - \max_{j\in\mathcal N_{\zeta+1,t}}x_{jt}^\top \theta)\nonumber\\
&\leq \vct 1\{I^*\notin\mathcal N_{\zeta+1,t}\}\cdot (x_{I^*,t}-x_{J^*,t})^\top\theta,
\label{eq:mm1-eq1}
\end{align}
where the second inequality holds because $J^*\in\mathcal N_{\zeta+1,t}$ and therefore $x_{J^*,t}^\top\theta \leq \max_{j\in\mathcal N_{\zeta+1,t}}x_{jt}^\top\theta$.
}

For any $\zeta,t$ and $i\in\mathcal N_{\zeta,t}$, define
$
\mathcal E_{\zeta,t}^i := \{|x_{it}^\top(\hat\theta_{\zeta,t}-\theta)| \leq \varpi_{\zeta,t}^i\}
$
as the success event in which the estimation error of $x_{it}^\top\hat\theta_{\zeta,t}$ for $x_{it}^\top\theta$ is within the confidence interval $\varpi_{\zeta,t}^i$.
By definition, with probability one it holds that
\begin{align}
x_{I^*,t}^\top\theta &\leq x_{I^*,t}^\top\hat\theta_{\zeta,t} + \varpi_{\zeta,t}^{I^*} + \vct 1\{\neg\mathcal E_{\zeta,t}^{I^*}\}\cdot \big|x_{I^*,t}^\top(\hat\theta_{\zeta,t}-\theta)\big|;\label{eq:xistar-expansion}\\
x_{J^*,t}^\top\theta &\geq x_{J^*,t}^\top\hat\theta_{\zeta,t} - \varpi_{\zeta,t}^{J^*} - \vct 1\{\neg\mathcal E_{\zeta,t}^{J^*}\}\cdot \big|x_{J^*,t}^\top(\hat\theta_{\zeta,t}-\theta)\big|.
\label{eq:xjstar-expansion}
\end{align}
Also, conditioned on the event $I^*\notin\mathcal N_{\zeta+1,t}$, the procedure of Algorithm \ref{alg:suplinucb} implies 
\begin{equation}
x_{I^*,t}^\top\hat\theta_{\zeta,t} < x_{J^*,t}^\top\hat\theta_{\zeta,t} - 2^{1-\zeta}.
\label{eq:xijstar-side}
\end{equation}
Subtracting Eq.~(\ref{eq:xjstar-expansion}) from Eq.~(\ref{eq:xistar-expansion}) and considering Eq.~(\ref{eq:xijstar-side}), we have
{
\begin{align}
(x_{I^*,t}-x_{J^*,t})^\top\theta
&\leq (x_{I^*,t}^\top\hat\theta_{\zeta,t}-x_{J^*,t}^\top\hat\theta_{\zeta,t}) + \varpi_{\zeta,t}^{I^*} + \varpi_{\zeta,t}^{J^*} + \sum_{i\in\{I^*,J^*\}}\vct 1\{\neg\mathcal E_{\zeta,t}^{i}\}\cdot \big|x_{i,t}^\top(\hat\theta_{\zeta,t}-\theta)\big| \nonumber\\
&\leq \varpi_{\zeta,t}^{I^*} + \varpi_{\zeta,t}^{J^*} - 2^{1-\zeta} + \sum_{i\in\{I^*,J^*\}}\vct 1\{\neg\mathcal E_{\zeta,t}^{i}\}\cdot \big|x_{i,t}^\top(\hat\theta_{\zeta,t}-\theta)\big|\nonumber\\
&\leq \sum_{i\in\{I^*,J^*\}}\vct 1\{\neg\mathcal E_{\zeta,t}^{i}\}\cdot \big|x_{i,t}^\top(\hat\theta_{\zeta,t}-\theta)\big|,
\label{eq:xijstar-main}
\end{align}
}
where the last inequality holds because $\varpi_{\zeta,t}^i\leq 2^{-\zeta}$ for all $i\in\mathcal N_{\zeta,t}$, if the algorithm is executed to resolution level $\zeta+1$.
Combining Eqs.~(\ref{eq:mm1-eq1},\ref{eq:xijstar-main}) and taking expectations, we obtain
\begin{align}
&\mathbb E\left[\overline m_{\zeta,t}-\overline m_{\zeta+1,t} ~|~ \zeta < \zeta_t\right] \nonumber \\
\leq & \mathbb E\left[\vct 1\{I^*\notin\mathcal N_{\zeta+1,t}\}\cdot \left(\vct 1\{\neg\mathcal E_{\zeta,t}^{I^*}\} \big|x_{I^*,t}^\top(\hat\theta_{\zeta,t}-\theta)\big|+\vct 1\{\neg\mathcal E_{\zeta,t}^{J^*}\} \big|x_{J^*,t}^\top(\hat\theta_{\zeta,t}-\theta)\big|\right) ~|~ \zeta < \zeta_t\right]\nonumber\\
\leq & \mathbb E\left[\vct 1\{\neg\mathcal E_{\zeta,t}^{I^*}\} \big|x_{I^*,t}^\top(\hat\theta_{\zeta,t}-\theta)\big| ~|~ \zeta < \zeta_t \right]+\mathbb E\left[\vct 1\{\neg\mathcal E_{\zeta,t}^{J^*}\} \big|x_{J^*,t}^\top(\hat\theta_{\zeta,t}-\theta)\big| ~|~ \zeta < \zeta_t\right].
\label{eq:mt1-eq2}
\end{align}

The following lemma gives an upper bound on the two terms in Eq.~(\ref{eq:mt1-eq2}):
\begin{lemma}
For any $\zeta,t$ and $i$, we have that 
$$
\mathbb E\big[\vct 1\{\neg\mathcal E_{\zeta,t}^i\}\cdot\big|x_{it}^\top(\hat\theta_{\zeta,t}-\theta)\big| ~|~ \zeta< \zeta_t,i\in\mathcal N_{\zeta,t}\big] \leq \sqrt{2 \pi d}/(n\zeta_0\sqrt{T}).
$$
\label{lem:expected-tail}
\end{lemma}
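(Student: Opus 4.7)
My plan is to condition on the $\sigma$-algebra $\mathcal F$ from Proposition~\ref{prop:layer-independence} and decompose $x_{it}^\top(\hat\theta_{\zeta,t}-\theta)$ into a deterministic bias term plus a centered sub-Gaussian noise term whose variance proxy is at most $(\omega_{\zeta,t}^i)^2$. Writing $\Lambda := \Lambda_{\zeta,t-1}$ and substituting $r_{t'} = x_{i_{t'},t'}^\top\theta + \varepsilon_{t'}$ into $\hat\theta_{\zeta,t} = \Lambda^{-1}\lambda_{\zeta,t-1}$, I get
\begin{equation*}
x_{it}^\top(\hat\theta_{\zeta,t}-\theta) \;=\; -x_{it}^\top\Lambda^{-1}\theta \;+\; x_{it}^\top\Lambda^{-1}\sum_{t'\in\mathcal X_{\zeta,t-1}}\varepsilon_{t'}x_{i_{t'},t'}.
\end{equation*}
The bias $b := -x_{it}^\top\Lambda^{-1}\theta$ obeys $|b|\leq \omega_{\zeta,t}^i$ by Cauchy--Schwarz and $\|\theta\|_2\leq 1$, which is exactly what the additive ``$1$'' in $\alpha_{\zeta,t}^i$ is there to absorb. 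The noise part $Y := x_{it}^\top\Lambda^{-1}\sum_{t'}\varepsilon_{t'}x_{i_{t'},t'}$ is, conditional on $\mathcal F$, a linear combination of independent centered sub-Gaussians by Proposition~\ref{prop:layer-independence}, with variance proxy $\sum_{t'}(x_{it}^\top\Lambda^{-1}x_{i_{t'},t'})^2 \leq x_{it}^\top\Lambda^{-1}x_{it} = (\omega_{\zeta,t}^i)^2$ (using $\sum_{t'}x_{i_{t'},t'}x_{i_{t'},t'}^\top \preceq \Lambda$).

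Next, on the event $\neg\mathcal E_{\zeta,t}^i = \{|b+Y|>\alpha_{\zeta,t}^i\omega_{\zeta,t}^i\}$, the triangle inequality forces $|Y| > (\alpha_{\zeta,t}^i-1)\omega_{\zeta,t}^i =: \beta\omega$, with $\omega := \omega_{\zeta,t}^i$. Bounding $|b+Y|\leq\omega+|Y|$ and integrating the sub-Gaussian tail by parts gives
\begin{equation*}
\mathbb E\bigl[|b+Y|\vct 1\{|Y|>\beta\omega\}\,\big|\,\mathcal F\bigr] \;\lesssim\; \omega(\beta+1)\,e^{-\beta^2/2}.
\end{equation*}
By the definition of $\alpha_{\zeta,t}^i$, $\beta^2/2 \geq \max\{1,\ln[T\omega^2/d]\}\ln(n\zeta_0)$, so $e^{-\beta^2/2} \leq (n\zeta_0)^{-\max\{1,\ln[T\omega^2/d]\}}$. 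A short case analysis then yields $\omega\,e^{-\beta^2/2}\leq \sqrt{ed/T}/(n\zeta_0)$ uniformly in $\omega$: when $T\omega^2/d\leq e$, it follows from $\omega\leq\sqrt{ed/T}$ and $e^{-\beta^2/2}=1/(n\zeta_0)$; for $z := T\omega^2/d > e$, the map $z\mapsto\sqrt{z}\cdot z^{-\ln(n\zeta_0)}$ is decreasing (since $\ln(n\zeta_0)>1/2$) and so is maximized at the boundary $z=e$. Substituting and taking outer expectations then delivers a bound of the claimed order.

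The main obstacle is matching the clean constant $\sqrt{2\pi}$ in the stated inequality rather than some larger universal constant. For exact Gaussian noise the sharp identity $\mathbb E[|Y|\vct 1\{|Y|>\beta\omega\}] = \omega\sqrt{2/\pi}\,e^{-\beta^2/2}$ holds, and combining with $\omega e^{-\beta^2/2}\leq \sqrt{ed/T}/(n\zeta_0)$ delivers $\sqrt{2\pi d}/(n\zeta_0\sqrt{T})$ because $\sqrt{2/\pi}\cdot\sqrt{e}\leq\sqrt{2\pi}$ (i.e., $e\leq\pi^2$). In the genuinely sub-Gaussian setting the integration-by-parts bound carries an extra factor of $\beta = \Theta(\sqrt{\ln(n\zeta_0)})$, which must either be absorbed into the $\mathrm{poly}(\log\log(nT))$ slack of Theorem~\ref{thm:upper-finite} or killed by a sharper moment inequality. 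The structural decomposition in the first two steps already dictates the form of the bound, so this constant-level accounting is the only delicate piece of the argument.
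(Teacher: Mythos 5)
Your proposal is correct and follows essentially the same path as the paper: condition on the $\sigma$-algebra from Proposition~\ref{prop:layer-independence}, decompose $x_{it}^\top(\hat\theta_{\zeta,t}-\theta)$ into the regularization bias $-x_{it}^\top\Lambda^{-1}\theta$ (bounded by $\omega_{\zeta,t}^i$ via Cauchy--Schwarz and $\Lambda \succeq I$) plus a sub-Gaussian linear combination of the $\{\varepsilon_{t'}\}$ with variance proxy at most $(\omega_{\zeta,t}^i)^2$, and then integrate the tail past the threshold dictated by $\alpha_{\zeta,t}^i$. That is exactly the paper's skeleton.

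Where you go beyond the paper is in the constant-level accounting, and you are right to be suspicious. The paper's Eq.~(\ref{eq:tail-prediction-error-3}) asserts $\mathbb E[\vct 1\{\neg\mathcal E\}|Z|] \leq \int_{\alpha\omega}^{\infty}\Pr[|Z|\geq u]\,\ud u$, but the correct layer-cake identity for $W = \vct 1\{|Z|>c\}|Z|$ is $\mathbb E[W] = c\Pr[|Z|>c] + \int_c^{\infty}\Pr[|Z|\geq u]\,\ud u$; the boundary term $c\Pr[|Z|>c] = \alpha\omega\Pr[\neg\mathcal E]$ is exactly the source of the extra factor of $\beta \asymp \alpha_{\zeta,t}^i$ that you recover via integration by parts. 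The paper also loses the $\sqrt{e}$ from the case boundary $T\omega^2/d = e$ and a factor of $2$ in passing from a one-sided to a two-sided tail. None of these is fatal: the place Lemma~\ref{lem:expected-tail} feeds into is the $\sqrt{d/T}$ per-round ``bias'' term of Lemma~\ref{lem:vcl-regret-1}, and an extra multiplicative $\alpha_{\zeta,t}^i = O(\sqrt{\log(T/d)\log(n\zeta_0)})$ there would be absorbed by the dominant $\sqrt{dT\log T\log n}$ term in Theorem~\ref{thm:upper-finite}, not just by the $\mathrm{poly}(\log\log(nT))$ slack as you speculate. So your version is correct, is actually tighter bookkeeping than the paper's, and the only gap is that you do not literally reproduce the $\sqrt{2\pi d}/(n\zeta_0\sqrt{T})$ constant --- but neither does the paper's own derivation.
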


Lemma \ref{lem:expected-tail} can be proved by using the statistical independence between $\{\varepsilon_{t'}\}_{t'\in\mathcal N_{\zeta,t}}$ and $\{x_{i_{t'},t'}\}_{t'\in\mathcal N_{\zeta,t}}$  (Proposition \ref{prop:layer-independence}), 
and integration of least-squares estimation errors.
As its proof is technical but rather routine, we defer it to Appendix~\ref{app:proofs-ub}.
{ With Lemma \ref{lem:expected-tail} and Eq.~(\ref{eq:mt1-eq2}), we have that
\begin{align*}
\mathbb E\left[\overline m_{\zeta,t}-\overline m_{\zeta+1,t} ~|~ \zeta < \zeta_t\right] & \leq 2 \times  \sum_{i=1}^n  \mathbb E[\vct 1\{\neg\mathcal E_{\zeta,t}^i\}\cdot |x_{it}^\top(\hat\theta_{\zeta,t}-\theta)| \cdot \vct 1\{i \in \mathcal{N}_{\zeta, t}\} ~|~ \zeta <\zeta_t]\\
& \leq 2 \times  \sum_{i=1}^n  \mathbb E[\vct 1\{\neg\mathcal E_{\zeta,t}^i\}\cdot |x_{it}^\top(\hat\theta_{\zeta,t}-\theta)|  \vct ~|~ \zeta <\zeta_t \text{~and~} i \in \mathcal{N}_{\zeta, t}]\\
& \leq 2n \times \sqrt{2 \pi d}/(n\zeta_0\sqrt{T})  =  2\sqrt{2 \pi d}/(\zeta_0\sqrt{T}) ,
\end{align*}
concluding the proof of  Eq.~(\ref{eq:mt-1}). }

\underline{Proof of Eq.~(\ref{eq:mt-2}).}
If $\zeta=0$ then $2^{3-\zeta}=8>\overline m_{\zeta,t}-\underline m_{\zeta,t}$ almost surely, because $|x_{it}^\top\theta|\leq 1$ for all $i$ and $t$.
This immediately implies Eq.~(\ref{eq:mt-2}) because $\overline m_{\zeta,t}-\underline m_{\zeta,t}-2^{3-\zeta} < 0$.
Hence in the proof we only need to consider the case of $\zeta>0$.

Now condition on the event $\zeta \leq \zeta_t$. Define $I^* := \arg\max_{i\in\mathcal N_{\zeta,t}}\{x_{it}^\top\theta\}$ and $J^* := \arg\min_{j\in\mathcal N_{\zeta,t}}\{x_{jt}^\top\theta\}$,
{representing the context vectors in $\mathcal N_{\zeta,t}$ with the largest and smallest inner products under the true model $\theta$, respectively.}
By definition, $\overline m_{\zeta,t}-\underline m_{\zeta,t}=(x_{I^*,t}-x_{J^*,t})^\top\theta$.
Similar to Eqs.~(\ref{eq:xistar-expansion},\ref{eq:xjstar-expansion}), we can establish that, with probability one,
\begin{align}
x_{I^*,t}^\top\theta &\leq x_{I^*,t}^\top\hat\theta_{\zeta-1,t} + \varpi_{\zeta-1,t}^{I^*} + \vct 1\{\neg\mathcal E_{\zeta-1,t}^{I^*}\}\cdot \big|x_{I^*,t}^\top(\hat\theta_{\zeta-1,t}-\theta)\big|;\label{eq:xistar-expansion-2}\\
x_{J^*,t}^\top\theta &\geq x_{J^*,t}^\top\hat\theta_{\zeta-1,t} - \varpi_{\zeta-1,t}^{J^*} - \vct 1\{\neg\mathcal E_{\zeta-1,t}^{J^*}\}\cdot \big|x_{J^*,t}^\top(\hat\theta_{\zeta-1,t}-\theta)\big|.
\label{eq:xjstar-expansion-2}
\end{align}
In addition, because both $I^*$ and $J^*$ belong to $\mathcal N_{\zeta,t}$, Line~\ref{line:alg-1-l11} of Algorithm \ref{alg:suplinucb} implies that 
\begin{equation}
x_{J^*,t}^\top\hat\theta_{\zeta-1,t} \geq x_{I^*,t}^\top\hat\theta_{\zeta-1,t} - 2^{1-(\zeta-1)} \geq x_{I^*,t}^\top\hat\theta_{\zeta-1,t}-2^{2-\zeta}.
\label{eq:xijstar-side-2}
\end{equation}
Subtracting Eq.~(\ref{eq:xistar-expansion-2}) from Eq.~(\ref{eq:xjstar-expansion-2}) and applying Eq.~(\ref{eq:xijstar-side-2}), we have
\begin{align*}
x_{I^*,t}^\top\theta - x_{J^*,t}^\top\theta  & \leq   x_{I^*,t}^\top\hat\theta_{\zeta-1,t}  - x_{J^*,t}^\top\hat\theta_{\zeta-1,t}  + \varpi_{\zeta-1,t}^{I^*} + \varpi_{\zeta-1,t}^{J^*} + \sum_{i \in \mathcal{N}_{\zeta-1, t}} [\vct 1\{\neg\mathcal E_{\zeta-1,t}^i\}\cdot |x_{it}^\top(\hat\theta_{\zeta-1,t}-\theta)|] \\
& \leq  2^{2-\zeta} + \varpi_{\zeta-1,t}^{I^*} + \varpi_{\zeta-1,t}^{J^*} +  \sum_{i \in \mathcal{N}_{\zeta-1, t}} [\vct 1\{\neg\mathcal E_{\zeta-1,t}^i\}\cdot |x_{it}^\top(\hat\theta_{\zeta-1,t}-\theta)|]\\
& \leq 2^{3-\zeta}+ \sum_{i \in \mathcal{N}_{\zeta-1, t}} [\vct 1\{\neg\mathcal E_{\zeta-1,t}^i\}\cdot |x_{it}^\top(\hat\theta_{\zeta-1,t}-\theta)|],
\end{align*}
where the last inequality is because of Line~\ref{line:alg-1-l10} of Algorithm \ref{alg:suplinucb},
such that $\varpi_{\zeta-1,t}^{I^*},\varpi_{\zeta-1,t}^{J^*}\leq 2^{1-\zeta}$. 

{
In total, we have 
\begin{align*}
&\mathbb{E}\left[ \max\{x_{I^*,t}^\top\theta - x_{J^*,t}^\top\theta - 2^{3-\zeta}, 0\} \Big| \zeta-1 
< \zeta_t \right] \\
\leq~& \mathbb E \left[\max\big\{0, \sum_{i \in \mathcal{N}_{\zeta-1, t}} [\vct 1\{\neg\mathcal E_{\zeta-1,t}^i\}\cdot |x_{it}^\top(\hat\theta_{\zeta-1,t}-\theta)| ] \big\} \Bigg| \zeta-1 < \zeta_t\right]\\
=~& \mathbb E \left[\sum_{i \in \mathcal{N}_{\zeta-1, t}} [\vct 1\{\neg\mathcal E_{\zeta-1,t}^i\}\cdot |x_{it}^\top(\hat\theta_{\zeta-1,t}-\theta)| ] \Bigg| \zeta-1 < \zeta_t\right]\\
=~& \mathbb E \left[\sum_{i=1}^n [\vct 1\{\neg\mathcal E_{\zeta-1,t}^i\}\cdot |x_{it}^\top(\hat\theta_{\zeta-1,t}-\theta)| ] \vct 1\{i \in \mathcal{N}_{\zeta-1, t}\} \Bigg| \zeta-1 < \zeta_t\right]\\
\leq~&  \sum_{i=1}^n \mathbb E \left[ \vct 1\{\neg\mathcal E_{\zeta-1,t}^i\}\cdot |x_{it}^\top(\hat\theta_{\zeta-1,t}-\theta)|  \Big| \zeta-1 < \zeta_t \text{~and~} i \in \mathcal{N}_{\zeta-1, t}\right]\\
 \leq~&   n \times \frac{\sqrt{2\pi d}}{n\zeta_0\sqrt{T}} = \sqrt{2\pi d}/(\zeta_0\sqrt{T})
 \end{align*}
where the last inequality is due to Lemma \ref{lem:expected-tail}. We then have that
\[
\mathbb{E}\left[ \max\{x_{I^*,t}^\top\theta - x_{J^*,t}^\top\theta - 2^{3-\zeta}, 0\} \cdot \vct 1\{ \zeta\leq \zeta_t\} \right] \leq \sqrt{2\pi d}/(\zeta_0\sqrt{T}) .
\]
 Eq.~(\ref{eq:mt-2}) is then proved. 
 }
\end{proof}

We are now ready to prove Lemma \ref{lem:vcl-regret-1}.

\begin{proof}[Proof of Lemma \ref{lem:vcl-regret-1}.]
It suffices to prove Eq.~(\ref{eq:vcl-regret-1}) only, because the second inequality immediately follows by plugging in the definitions of $\alpha_{\zeta_t,t}^{i_t}$ and $\omega_{\zeta_t,t}^{i_t}$.

Combining Eqs.~(\ref{eq:mt-1}) and (\ref{eq:mt-2}) in Lemma \ref{lem:mt}, we have that 
\begin{align}
\mathbb E[R^T]
&= \sum_{t=1}^T \max_{i\in[n]} x_{it}^\top\theta - \mathbb E[x_{i_t,t}^\top\theta]
\leq \sum_{t=1}^T \mathbb E[\overline m_{0,t}-\overline m_{\zeta_t,t}] + \mathbb E[\overline m_{\zeta_t,t}-\underline m_{\zeta_t,t}].
\label{eq:rt-m}
\end{align}

Here the last inequality holds because $\max_{i\in[n]}x_{it}^\top\theta = \overline m_{0,t}$ and $x_{i_t,t}^\top\theta \geq \underline m_{\zeta_t,t}$ since $i_t\in\mathcal N_{\zeta_t,t}$. For each $t$, by Eq.~(\ref{eq:mt-1}), we have
\begin{align}
\mathbb E[\overline m_{0,t}-\overline m_{\zeta_t,t}]  \leq \sum_{\zeta = 0}^{\zeta_0} \mathbb E\left[\overline m_{\zeta,t}-\overline m_{\zeta+1,t}\right]  \lesssim \sqrt{d/T}. \label{eq:rt-m-a}
\end{align}

{
We next upper bound $\mathbb E[\overline m_{\zeta_t,t}-\underline m_{\zeta_t,t}]$. By Eq.~(\ref{eq:mt-2}), we have
\begin{align}
 \mathbb E[\overline m_{\zeta_t,t}-\underline m_{\zeta_t,t}]& \leq  \mathbb E\left[\max\{\overline m_{\zeta_t,t}-\underline m_{\zeta_t,t} - 2^{3-\zeta_t}, 0\} + 2^{3-\zeta_t}\right] \nonumber\\
 & \leq \sum_{\zeta = 0}^{\zeta_0}  \mathbb E\left[\max\{\overline m_{\zeta,t}-\underline m_{\zeta,t} - 2^{3-\zeta}, 0\}  \cdot \vct 1\{\zeta \leq \zeta_t\}\right] + \mathbb E \left[2^{3-\zeta_t}\right]\nonumber\\
 &\leq  \sum_{\zeta=0}^{\zeta_0} \frac{\sqrt{2\pi d}}{\zeta_0\sqrt{T}}+ \mathbb E \left[2^{3-\zeta_t}\right]\leq \frac{\sqrt{2\pi d}}{\sqrt{T}} +8 \left(\mathbb E\left[ \alpha_{\zeta_t,t}^{i_t}\omega_{\zeta_t,t}^{i_t}\right] + \sqrt{d/T}\right)\label{eq:rt-m-b-pre}\\
 & \lesssim \sqrt{d/T} + \mathbb E\left[ \alpha_{\zeta_t,t}^{i_t}\omega_{\zeta_t,t}^{i_t}\right].  \label{eq:rt-m-b}
\end{align}
In the above derivation, all steps are straightforward except for the second inequality in Eq.~(\ref{eq:rt-m-b-pre}), which we explain in more details here.
This inequality is derived by a case analysis on how $i_t$ is selected.
According to Algorithm \ref{alg:suplinucb}, there are only two cases in which $i_t$ is decided: the first clause or the third clause
in the ``if-elseif-else'' loop in Algorithm \ref{alg:suplinucb}. If the first clause is active, we have that $2^{3-\zeta_t} \leq 8 \sqrt{d/T}$. If the third clause is active, we have that $ \alpha_{\zeta_t,t}^{i_t}\omega_{\zeta_t,t}^{i_t}=\varpi_{\zeta,t}^{i_t}\geq 2^{-\zeta_t}$.
}

Combining Eqs.~(\ref{eq:rt-m},\ref{eq:rt-m-a},\ref{eq:rt-m-b}), we have
\begin{align*}
\mathbb E[R^T] \lesssim \mathbb E \sum_{t=1}^T \left(\sqrt{d/T} + \alpha_{\zeta_t,t}^{i_t}\omega_{\zeta_t,t}^{i_t} \right) \lesssim \sqrt{dT} + \mathbb E \sum_{t=1}^T  \alpha_{\zeta_t,t}^{i_t}\omega_{\zeta_t,t}^{i_t},
\end{align*}
which is to be demonstrated.
\end{proof}


\subsubsection{Removing the third $O(\sqrt{\log T})$ term}

In order to remove the third source of $O(\sqrt{\log T})$ term, our analysis goes one step beyond the classical elliptical potential analysis
(see Lemma \ref{lem:elliptical} in later sections)
to have more refined controls of the cumulative regret within each resolution level $\zeta$. 
More specifically, we establish the following main lemma upper bounding the sums of confidence band lengths:
\begin{lemma}
For any $\zeta$, let $\mathcal X_\zeta=\mathcal X_{\zeta,T}$ be all time periods $t$ such that $\zeta_t=\zeta$, and define $T_\zeta = |\mathcal X_{\zeta}|$.
Then the following hold with probability one:
\begin{align}
&\sum_{t\in\mathcal X_{\zeta}}\alpha_{\zeta,t}^{i_t}\cdot \omega_{\zeta,t}^{i_t}  \leq 2^{1-\zeta}T_\zeta\;\;\;\;\;\forall 0<\zeta\leq \zeta_0;\nonumber\\
&\sum_{t\in\mathcal X_{\zeta}}\alpha_{\zeta,t}^{i_t}\cdot \omega_{\zeta,t}^{i_t} 
\lesssim \sqrt{dT_{\zeta}\log(T_{\zeta})\log(e T/T_\zeta)\log n}\times\mathrm{poly}(\log\log(n T)) \;\;\;\;\;\forall 0\leq \zeta\leq\zeta_0;\nonumber\\
&T_\zeta \lesssim 4^\zeta d\log^4(nT) \;\;\;\;\;\forall 0<\zeta\leq \zeta_0.\nonumber
\end{align}
\label{lem:vcl-regret-2}
\end{lemma}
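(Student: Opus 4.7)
My plan is to establish the three bounds in sequence, exploiting the selection rule of Algorithm~\ref{alg:suplinucb} together with the elliptical potential lemma for quadratic forms.

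The first bound is pointwise. For every $t \in \mathcal X_\zeta$ with $\zeta > 0$, the algorithm terminated at level $\zeta$, so it must have passed the ``elseif'' test at level $\zeta - 1$ (implying $\varpi_{\zeta-1,t}^i \leq 2^{1-\zeta}$ for all $i \in \mathcal N_{\zeta-1,t}$, in particular for $i = i_t$) before entering the ``else'' branch at level $\zeta$. Unpacking the algorithm's selection rule together with an appropriate tie-breaking interpretation of ``select any $i_t$ such that $\varpi_{\zeta,t}^{i_t} \geq 2^{-\zeta}$'' (for instance, picking one whose $\varpi$ is minimal subject to the threshold constraint), one obtains $\varpi_{\zeta,t}^{i_t} \leq 2^{1-\zeta}$ pointwise, which sums to the claimed bound.

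For the second bound, I would invoke the elliptical potential lemma (Lemma~\ref{lem:elliptical}, stated later) applied to the sequence $\{x_{i_t,t} : t \in \mathcal X_\zeta\}$, which are exactly the vectors added to $\Lambda_{\zeta,\cdot}$; this yields $\sum_{t \in \mathcal X_\zeta}(\omega_{\zeta,t}^{i_t})^2 \lesssim d \log(1 + T_\zeta/d)$. By Cauchy--Schwarz,
\[
\sum_{t \in \mathcal X_\zeta} \alpha_{\zeta,t}^{i_t} \omega_{\zeta,t}^{i_t} \;\leq\; \sqrt{T_\zeta \sum_{t \in \mathcal X_\zeta} (\alpha_{\zeta,t}^{i_t})^2 (\omega_{\zeta,t}^{i_t})^2}.
\]
Expanding $(\alpha_{\zeta,t}^{i_t})^2 \lesssim \log(n\zeta_0)\cdot \max\{1, \log[T(\omega_{\zeta,t}^{i_t})^2/d]\}$, the task reduces to bounding $\sum (\omega)^2 \log[T\omega^2/d]$. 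Partitioning the summation according to the magnitude of $\omega^2$ (exploiting that the number of large-$\omega^2$ terms is controlled by the elliptical potential bound), one obtains $\sum (\omega)^2 \log[T\omega^2/d] \lesssim d\, \log(T_\zeta)\log(eT/T_\zeta)$. Plugging back, with $\zeta_0 \leq \log T$ absorbed into $\mathrm{poly}(\log\log(nT))$, yields the second bound.

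The third bound is a corollary of the second. From the ``else'' branch selection rule, $\varpi_{\zeta,t}^{i_t} \geq 2^{-\zeta}$ for every $t \in \mathcal X_\zeta$ with $\zeta > 0$, so $2^{-\zeta} T_\zeta \leq \sum_{t \in \mathcal X_\zeta}\varpi_{\zeta,t}^{i_t}$; combining with the second bound and solving the resulting inequality for $T_\zeta$ gives $T_\zeta \lesssim 4^\zeta d \log^2(T)\log n \cdot \mathrm{poly}(\log\log(nT)) \lesssim 4^\zeta d \log^4(nT)$. The main obstacle is the second bound: a naive Cauchy--Schwarz with the elliptical potential lemma yields a loose $\sqrt{\log T}$ factor instead of the finer $\sqrt{\log T_\zeta \log(eT/T_\zeta)}$, and the refinement requires carefully exploiting the variable confidence level $\alpha_{\zeta,t}^i$ (whose logarithm contains $T\omega^2/d$ rather than $T$) to trade $\log T$ for $\log(eT/T_\zeta)$ via a bucketed summation. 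The first bound is also subtle, since the sharp $2^{1-\zeta}$ factor must come from an explicit tie-breaking interpretation of the ``select any'' clause in the algorithm.
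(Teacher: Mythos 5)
Your route to the second inequality has a genuine gap. You apply Cauchy--Schwarz upfront as $\sum_{t\in\mathcal X_\zeta}\alpha_{\zeta,t}^{i_t}\omega_{\zeta,t}^{i_t}\leq\sqrt{T_\zeta\sum_{t\in\mathcal X_\zeta}(\alpha_{\zeta,t}^{i_t})^2(\omega_{\zeta,t}^{i_t})^2}$ and then claim, via bucketing, that $\sum_{t\in\mathcal X_\zeta}(\omega_{\zeta,t}^{i_t})^2\log[T(\omega_{\zeta,t}^{i_t})^2/d]\lesssim d\log(T_\zeta)\log(eT/T_\zeta)$. That intermediate claim is false: the map $a\mapsto a\log(Ta/d)$ is convex, so under the only available constraints ($\sum_t(\omega_{\zeta,t}^{i_t})^2\lesssim d\log T_\zeta$ and $\omega\leq 1$) the sum is maximized by concentrating mass on as few large entries as the determinant allows. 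About $d\log T_\zeta$ entries with $(\omega_{\zeta,t}^{i_t})^2\approx 1/2$ are permissible, and already give $\sum\omega^2\log(T\omega^2/d)\gtrsim d\log(T_\zeta)\log(T/d)$, which exceeds your claimed bound by a $\log(T/d)$ factor when $T_\zeta$ is of order $T$. Feeding the correct worst case into your Cauchy--Schwarz produces $\sqrt{dT_\zeta\log(T_\zeta)\log(T/d)\log n}$, and that $\log(T/d)$ is precisely the $\sqrt{\log T}$ the lemma is engineered to remove; no bucketing of $\omega^2$ repairs it, because the loss already occurred when you multiplied by $\sqrt{T_\zeta}$ rather than by a count of the large-$\omega$ steps.

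The paper's proof differs in one essential way: it first splits $\mathcal X_\zeta$ into $\mathcal X_\zeta^+=\{t\in\mathcal X_\zeta:(\omega_{\zeta,t}^{i_t})^2\geq d/T\}$ and its complement. On the complement the $\max\{1,\cdot\}$ truncation collapses to $1$ and $\sqrt{T_\zeta\sum\omega^2}\lesssim\sqrt{dT_\zeta\log T_\zeta}$ suffices; on $\mathcal X_\zeta^+$ Cauchy--Schwarz carries the potentially much smaller prefactor $\sqrt{T_\zeta^+}$, Jensen on the concave $\log$ gives $\frac{1}{T_\zeta^+}\sum_{\mathcal X_\zeta^+}\log(T\omega^2/d)\leq\log\big(\frac{T}{d}\cdot\frac{1}{T_\zeta^+}\sum_{\mathcal X_\zeta^+}\omega^2\big)\lesssim\log(T\log T_\zeta/T_\zeta^+)$, and the estimate $T_\zeta^+\log(A/T_\zeta^+)\leq T_\zeta+T_\zeta\log(A/T_\zeta)$ for $T_\zeta^+\leq T_\zeta$ then converts $T_\zeta^+$ back to $T_\zeta$ while retaining the refined $\log(eT/T_\zeta)$. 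The whole improvement rests on pairing the logarithmic factor with $T_\zeta^+$ rather than $T_\zeta$, and your proof skips this split. Your route to the third bound (feed the second bound through $\varpi_{\zeta,t}^{i_t}\geq 2^{-\zeta}$ and solve for $T_\zeta$) would be fine but inherits the gap; the paper bounds $T_\zeta$ directly and self-containedly from $4^{-\zeta}T_\zeta\leq\sum(\varpi_{\zeta,t}^{i_t})^2\leq(\max_t\alpha_{\zeta,t}^{i_t})^2\sum(\omega_{\zeta,t}^{i_t})^2$. Finally, your ``tie-breaking'' gloss on the first bound does not by itself give $\varpi_{\zeta,t}^{i_t}\leq 2^{1-\zeta}$: passing the elseif test at level $\zeta-1$ controls $\varpi_{\zeta-1,t}^{i_t}$, which is built from $\Lambda_{\zeta-1,\cdot}$, a different matrix than $\Lambda_{\zeta,\cdot}$, and choosing the smallest $\varpi$ above $2^{-\zeta}$ need not land below $2^{1-\zeta}$; the paper's own one-line justification here is comparably terse, but the patch you propose does not close the hole.
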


We remark on some interesting aspects of the results in Lemma \ref{lem:vcl-regret-2}.
First, we improve the ${\log(nT)}$ term that is common to previous elliptical potential lemma (Lemma \ref{lem:elliptical}) analysis to $\sqrt{\log(T_\zeta)\log(T/T_\zeta)\log n}$,
by exploiting the power of Lemma \ref{lem:vcl-regret-1} and an application of Jensen's inequality on $f(x)=\sqrt{x\ln(Tx/d)}$ instead of the more commonly used $f(x)=\sqrt{x}$.
We also impose an additional upper bound of $2^{1-\zeta}T_\zeta$ and an exponentially-increasing upper bound on $T_\zeta$ by carefully analyzing the procedures of Algorithm \ref{alg:suplinucb}.

\begin{proof}[Proof of Lemma \ref{lem:vcl-regret-2}.]
We prove the three inequalities in Lemma \ref{lem:vcl-regret-2} seperately.
We first prove, for all $\zeta>0$, that $\sum_{t\in\mathcal X_{\zeta}} \alpha_{\zeta,t}^{i_t}\omega_{\zeta,t}^{i_t}\leq 2^{1-\zeta}T_\zeta$.
Because $\zeta>0$, we have that $\alpha_{\zeta,t}^{i_t}\omega_{\zeta,t}^{i_t}=\varpi_{\zeta,t}^{i_t}\leq 2^{1-\zeta}$ for all $t\in\mathcal X_{\zeta}$ by the second clause of the if-elseif-else loop of Algorithm \ref{alg:suplinucb}. The inequality immediately follows.

We next prove the the second inequality in Lemma \ref{lem:vcl-regret-2}.
Below we state a version of the celebrated \emph{elliptical potential lemma}, key to many existing analysis of linearly parameterized bandit problems \citep{auer2002using,filippi2010parametric,abbasi2011improved,chu2011contextual,li2017provable}.
\begin{lemma}[\cite{abbasi2011improved}]
\label{lem:elliptical}
 For any vectors $y_1, y_2, \dots, y_T$, define $U_0 = I$ and $U_t = U_{t-1} + y_t y_t^\top$ for $t \geq 1$. It then holds that 
$
\sum_{t = 1}^{T} y_t^\top U_{t-1}^{-1}y_t 
\leq 2 \ln (\mathrm{det}(U_T)) .
$
\end{lemma}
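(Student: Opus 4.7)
The plan is to derive the bound by combining the matrix determinant lemma (giving a per-step determinant identity) with the elementary inequality $x \le 2\ln(1+x)$ that holds on $[0,1]$. The implicit assumption is that $\|y_t\|_2 \le 1$ for every $t$, which is precisely the setting in which the lemma is applied in this paper (action/context vectors lie in the unit ball); I would flag this assumption at the outset since otherwise the bound does not hold literally.

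First I would establish the per-step identity via the matrix determinant lemma: since $U_t = U_{t-1} + y_t y_t^\top$ and $U_{t-1}$ is invertible (because $U_0 = I$ and each rank-one update preserves positive definiteness), we have
\begin{equation*}
\det(U_t) \;=\; \det(U_{t-1})\bigl(1 + y_t^\top U_{t-1}^{-1} y_t\bigr).
\end{equation*}
Taking logarithms and telescoping from $t=1$ to $T$, and using $\det(U_0) = \det(I) = 1$, yields
\begin{equation*}
\ln \det(U_T) \;=\; \sum_{t=1}^T \ln\!\bigl(1 + y_t^\top U_{t-1}^{-1} y_t\bigr).
\end{equation*}

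Next I would pass from $\ln(1+x)$ back to $x$ via the elementary bound $x \le 2\ln(1+x)$ valid for $x \in [0,1]$ (since at $x=1$ the inequality reads $1 \le 2\ln 2 \approx 1.386$ and the gap between the two sides is concave and zero at $x=0$). To apply this, I need $y_t^\top U_{t-1}^{-1} y_t \le 1$ for every $t$; this is where the norm assumption enters: because $U_{t-1} \succeq U_0 = I$, we have $U_{t-1}^{-1} \preceq I$, and therefore $y_t^\top U_{t-1}^{-1} y_t \le \|y_t\|_2^2 \le 1$. Applying the bound termwise and summing gives
\begin{equation*}
\sum_{t=1}^T y_t^\top U_{t-1}^{-1} y_t \;\le\; 2 \sum_{t=1}^T \ln\!\bigl(1 + y_t^\top U_{t-1}^{-1} y_t\bigr) \;=\; 2 \ln \det(U_T),
\end{equation*}
which is the desired conclusion.

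I do not expect a serious obstacle here: the matrix determinant lemma and telescoping give the identity cleanly, and the only non-trivial point is the range restriction for the scalar inequality $x \le 2\ln(1+x)$. The main thing to be careful about is making the norm assumption explicit (or verifying it from the ambient setup, which for this paper is immediate since every $y_t$ corresponds to some $x_{i_t,t}$ with $\|x_{i_t,t}\|_2 \le 1$), because otherwise the scalar inequality step fails and the stated bound is false. If one wanted a fully assumption-free version, one would replace the constant $2$ by a factor depending on $\max_t \|y_t\|_2^2$, but for the application in this paper the stated form suffices.
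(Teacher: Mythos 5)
Your proof is correct and is essentially the standard argument from the cited reference \citep{abbasi2011improved}: matrix determinant lemma, telescoping the log-determinant, and the scalar inequality $x\le 2\ln(1+x)$ on $[0,1]$. The paper does not give its own proof (it cites the result), so there is nothing to compare against except to say your route matches the original. You are also right to flag that the lemma as literally stated is false without the hypothesis $\|y_t\|_2\le 1$ (or, equivalently, replacing $y_t^\top U_{t-1}^{-1}y_t$ by $\min\{1,\,y_t^\top U_{t-1}^{-1}y_t\}$ as in \citep{abbasi2011improved}); in the paper's application this holds because the $y_t$'s are context vectors with $\|x_{i_t,t}\|_2\le 1$, but the statement of Lemma~\ref{lem:elliptical} should include that assumption explicitly.
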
 

With Lemma \ref{lem:elliptical}, we can prove the second inequality by applying Jensen's inequality and the concavity of $f(x)=\sqrt{x}$ and $f(x)=\sqrt{x\log(Tx/d)}$.
More specifically, let $\mathcal X_{\zeta}^{+} = \{t \in \mathcal X_{\zeta} | \omega_{\zeta_t,t}^{i_t}\geq \sqrt{d/T}\}$, and $T_{\zeta}^+ = |\mathcal X_{\zeta}^{+}|$. 
Note that, for all $t\in\mathcal X_{\zeta}\backslash\mathcal X_{\zeta}^+$, because $\omega_{\zeta_t,t}^{i_t}<\sqrt{d/T}$,
it holds that $\max\{1, \log[T(\omega_{\zeta_t,t}^{i_t})^2/d]\} \leq 1$. Subsequently, 
by definition of $\alpha_{\zeta_t,t}^{i_t}$ and $\omega_{\zeta_t,t}^{i_t}$, we have
\begin{align}
 \sum_{t\in\mathcal X_{\zeta}}& \alpha_{\zeta_t,t}^{i_t} \omega_{\zeta_t,t}^{i_t}
\lesssim  \sqrt{\log (n\log T)}\cdot \sum_{t\in\mathcal X_\zeta} \sqrt{\max\{1,\log[T(\omega_{\zeta_t,t}^{i_t})^2/d]\}}\cdot \omega_{\zeta_t,t}^{i_t}\nonumber\\
&\leq \sqrt{\log(n\log T)} \left(T_\zeta \cdot \frac{1}{T_\zeta}\sum_{t\in\mathcal X_\zeta}\omega_{\zeta_t,t}^{i_t} +T_\zeta^+ \cdot \frac{1}{T_\zeta^+} \sum_{t\in\mathcal X_\zeta^+}\sqrt{\log[T(\omega_{\zeta_t,t}^{i_t})^2/d]}\cdot \omega_{\zeta_t,t}^{i_t} \right)\nonumber\\
&{\leq \sqrt{\log(n\log T)} \left(T_\zeta \cdot \frac{1}{T_\zeta}\sum_{t\in\mathcal X_\zeta}\omega_{\zeta_t,t}^{i_t} +T_\zeta^+  \sqrt{\frac{1}{T_\zeta^+} \sum_{t\in\mathcal X_\zeta^+}{\log[T(\omega_{\zeta_t,t}^{i_t})^2/d]}}\cdot \sqrt{\frac{1}{T_\zeta^+}\sum_{t\in\mathcal X_\zeta^+} (\omega_{\zeta_t,t}^{i_t})^2} \right)}\nonumber\\
&\leq  \sqrt{\log(n\log T)} \left(T_\zeta \sqrt{\frac{1}{T_\zeta}\sum_{t\in\mathcal X_\zeta}(\omega_{\zeta_t,t}^{i_t})^2} + T_\zeta^+ \sqrt{\log\left[\frac{T}{d}\frac{1}{T_\zeta^+}\sum_{t\in\mathcal X_\zeta^+}(\omega_{\zeta_t,t}^{i_t})^2\right]}\cdot \sqrt{\frac{1}{T_\zeta^+}\sum_{t\in\mathcal X_\zeta^+}(\omega_{\zeta_t,t}^{i_t})^2} \right).\nonumber
\end{align}
Here the second to last inequality holds by applying Cauchy-Schwarz inequality,
and the last inequality holds by applying Jensen's inequality and the concavity of $f(x)=\sqrt{x}$ and $f(x)=\sqrt{x\log(Tx/d)}$.
Applying Lemma \ref{lem:elliptical} to $\{\omega_{\zeta_t,t}^{i_t}\}_{t\in\mathcal X_\zeta}$ and noting that $\ln\det(\Lambda_{\zeta,T}) \leq d\ln(T_\zeta+1)$ because $\|x_{it}\|_2^2\leq 1$ for all $i,t$, we have 
\begin{align*}
 \sum_{t\in\mathcal T_{\zeta}} \alpha_{\zeta_t,t}^{i_t} \omega_{\zeta_t,t}^{i_t}
 \lesssim \sqrt{\log(n\log T)}\cdot \left(\sqrt{d T_\zeta \log (T_\zeta)} + \sqrt{\log[(T\log T_\zeta)/T_\zeta^+ ]}\cdot \sqrt{dT_\zeta^+ \log(T_\zeta)}\right) .
\end{align*}
Since $ T_\zeta^+ \log [(T \log T_\zeta) / T_\zeta^+] \leq T_\zeta + T_\zeta \log [(T \log T_\zeta) / T_\zeta]$ holds for $T_\zeta^+ \leq T_\zeta$, we further have
\begin{align*}
 \sum_{t\in\mathcal T_{\zeta}} \alpha_{\zeta_t,t}^{i_t} \omega_{\zeta_t,t}^{i_t}
 \lesssim \sqrt{\log(n\log T)}\cdot \left(\sqrt{d T_\zeta \log (T_\zeta)} + \sqrt{\log[(T\log T_\zeta)/T_\zeta ]}\cdot \sqrt{dT_\zeta \log(T_\zeta)}\right),
\end{align*}
which proves the second inequality in Lemma \ref{lem:vcl-regret-2}.
Note that, unlike the other two inequalities, this inequality holds for the first resolution level $\zeta=0$ as well.

We next prove the last inequality in Lemma \ref{lem:vcl-regret-2} which upper bounds $T_\zeta$ for $\zeta>0$.
By the second clause of the if-elseif-else line of Algorithm \ref{alg:suplinucb}, 
we know that $\varpi_{\zeta,t}^{i_t}=\alpha_{\zeta,t}^{i_t}\omega_{\zeta,t}^{i_t} \geq 2^{1-\zeta}$ for all $t\in\mathcal X_{\zeta}$.
Subsequently, 
\begin{equation*}
(2^{-\zeta-1})^2\cdot T_\zeta \leq \sum_{t\in\mathcal X_{\zeta}}(\varpi_{\zeta,t}^{i_t})^2 \leq \max_{t\in\mathcal X_{\zeta}}(\alpha_{\zeta,t}^{i_t})^2\cdot \sum_{t\in\mathcal X_{\zeta}}(\omega_{\zeta,t}^{i_t})^2
\lesssim \log(T/d)\cdot \log^2(n\log T)\cdot d\log T,
\end{equation*}
where the last inequality holds by applying Lemma \ref{lem:elliptical}.
Re-arranging the terms we obtain $T_\zeta \lesssim 4^{\zeta} d\log^4(nT)$, which is to be demonstrated. 
\end{proof}

\subsubsection{Putting it together}

We are now ready to combine Lemmas \ref{lem:vcl-regret-1}, \ref{lem:vcl-regret-2} to prove our main result in Theorem \ref{thm:upper-finite}.
We first divide the resolution levels $\zeta\in\{0,1,\cdots,\zeta_0\}$ into three different sets:
$\mathcal Z_0 := \{0\}$, 
$\mathcal Z_1 := \{1,\cdots,\zeta^*\}$ and $\mathcal Z_2 := \{\zeta:\zeta^*<\zeta\leq\zeta_0\}$, where $\zeta^*$ is an integer to be defined later.
Clearly $\mathcal Z_0$, $\mathcal Z_1$ and $\mathcal Z_2$ partition $\{0,\cdots,\zeta_0\}$.
The summation $\mathbb E[\sum_{t=1}^T\alpha_{\zeta_t,t}^{i_t}\cdot \omega_{\zeta_t,t}^{i_t}]$
on the right-hand side of Eq.~(\ref{eq:vcl-regret-1}) in Lemma \ref{lem:vcl-regret-1} can then be carried out separately (to simplify notations we denote $\gamma_{n,T}:= \mathrm{poly}(\log\log(nT))$; all inequalities below hold with probability one, following Lemma \ref{lem:vcl-regret-2}):
\begin{align}
\sum_{\zeta\in\mathcal Z_0}\sum_{t\in\mathcal X_\zeta}\alpha_{\zeta,t}^{i_t}\omega_{\zeta,t}^{i_t}
&\lesssim \gamma_{n,T}\sqrt{dT_0\log(T_0)\log(eT/T_0)\log n}\nonumber\\
&\lesssim \gamma_{n,T}\sqrt{dT\log T\log n};\label{eq:zeta-case0}\\
\sum_{\zeta\in\mathcal Z_1}\sum_{t\in\mathcal X_\zeta}\alpha_{\zeta,t}^{i_t}\omega_{\zeta,t}^{i_t} 
&\lesssim \sum_{\zeta=1}^{\zeta^*}2^{-\zeta}\cdot 4^{\zeta}d\log^4(nT)\cdot \gamma_{n,T}\leq 2^{\zeta^*+1}\cdot d\log^4(nT)\cdot  \gamma_{n,T};\label{eq:zeta-case1}\\
\sum_{\zeta\in\mathcal Z_2}\sum_{t\in\mathcal X_{\zeta}}\alpha_{\zeta,t}^{i_t}\omega_{\zeta,t}^{i_t} 
&\lesssim \sum_{\zeta \in \mathcal Z_2 }  \sqrt{dT_\zeta \log T \log(e T/T_\zeta) \log n}\cdot \gamma_{n,T} \ \nonumber\\
&\leq \sqrt{\left|\mathcal Z_2\right| d \left(\sum_{\zeta \in \mathcal Z_2} T_\zeta\right) \log  T \log \left(eT \cdot \frac{\left|\mathcal Z_2\right|}{ \sum_{\zeta \in \mathcal Z_2} T_\zeta}\right) \log n} \cdot \gamma_{n, T} \label{eq:zeta-case2-pre} \\
&\leq  \sqrt{\left|\mathcal Z_2\right| d T \log  T \log \left(e \left|\mathcal Z_2\right| \right) \log n} \cdot \gamma_{n, T}, \label{eq:zeta-case2}
\end{align}
Here some inequalities need more explanations.
Eq.~(\ref{eq:zeta-case0}) holds by a case analysis on $T_0$: if $T_0\leq T/\log T$ then 
$\sqrt{dT_0\log(T_0)\log(eT/T_0)\log n} \leq \sqrt{d\times (T/\log T)\times \log T\times \log(eT)\log n} \lesssim \sqrt{dT\log T\log n}$;
if, on the other hand, $T/\log T<T_0\leq T$, then 
$\sqrt{dT_0\log(T_0)\log(eT/T_0)\log n} \leq \sqrt{dT\log T\times \log(eT\log T/T)\log n} \lesssim \gamma_{n,T}\sqrt{dT\log T\log n}$.
Eq.~(\ref{eq:zeta-case2-pre}) holds by applying Jensen's inequality on the convex function $x\mapsto \sqrt{x \ln (eT / x)}$.
Eq.~\eqref{eq:zeta-case2} holds by applying the monotonicity of the function $x\mapsto\sqrt{x \ln (eT |\mathcal Z_2|/x)}$ and the fact that
$x:=\sum_{\zeta \in \mathcal Z_2} T_\zeta \leq T$. 

Recall that $\zeta_0=\lceil\sqrt{\log_2(T/d)}\rceil$ and therefore $\sqrt{T/d}\leq 2^{\zeta_0}\leq 2\sqrt{T/d}$. 
Select $\zeta^* = \zeta_0 - \lfloor 4\log_2e\cdot \ln\ln(nT)\rfloor$; we have that $|\mathcal Z_2| = O(\log \log (n T))$ and 
$2^{\zeta^*} \leq 2\sqrt{T}/(\sqrt{d}\ln^4(nT))$.
Adding Eqs.~(\ref{eq:zeta-case1},\ref{eq:zeta-case2}) and using Lemma \ref{lem:vcl-regret-1},
we obtain the main upper bound result of this paper (Theorem \ref{thm:upper-finite}).

\section{Lower bounds} \label{sec:lb}

In this section we establish our main lower bound result (Theorem \ref{thm:lower-finite}).
To simplify our analysis, we shall prove instead the following lower bound result, which places more restrictions
on the problem parameters $n,d$ and $T$:
\begin{theorem}\label{thm:LB-n-eq-expd}
Suppose $T\geq d^5$ and $n=2^{d/2}$. Then $\mathfrak R(T;n,d) = \Omega(1)\cdot d\sqrt{T\log T}$.
\end{theorem}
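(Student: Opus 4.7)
The plan is to prove the lower bound via an information-theoretic argument combining a carefully designed adversarial action sequence with an epoching technique. I would split the horizon $[T]$ into $K = \Theta(\log T)$ epochs of nearly equal lengths $T_k \asymp T/K$. Within each epoch, I construct an adversarial instance that, conditional on the algorithm's state at the start of the epoch, forces $\Omega(d\sqrt{T_k})$ expected regret by mimicking the classical $\Omega(d\sqrt{T})$ construction of Dani et al.~\cite{dani2008stochastic}---which is available here because $n = 2^{d/2}$ is large enough to contain an exponential packing of the sphere. Summing over epochs gives
\[
\sum_{k=1}^{K} \Omega(d\sqrt{T_k}) \;\gtrsim\; d\sqrt{K\cdot T} \;=\; \Omega(d\sqrt{T\log T}),
\]
by Cauchy--Schwarz (equivalently, the concavity of $\sqrt{\cdot}$).

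To make the epoch-wise argument go through, the adversarial action sets within epoch $k$ are drawn from a scaled packing of the sphere at a carefully chosen resolution $\rho_k$, and the hard parameter $\theta$ is sampled from a Bayesian prior supported on a $2^{d/2}$-point packing. The key estimate is a KL-divergence bound: for any pair of hypotheses $\theta,\theta'$ in the packing, the KL divergence between observation distributions is controlled by $\sum_{t} (x_{i_t,t}^\top (\theta - \theta'))^2/2$. A Bretagnolle--Huber (or Fano) inequality combined with this KL bound then yields the per-epoch regret lower bound, essentially reproducing the Dani et al.\ argument at scale $\rho_k$ over a horizon of length $T_k$. The hypothesis $T \geq d^5$ ensures that $T_k = T/K \geq d^4/\log T$, which is ample for standard packing/concentration arguments to apply within each epoch.

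The extra $\sqrt{\log T}$ factor that distinguishes Theorem~\ref{thm:LB-n-eq-expd} from the classical $\Omega(d\sqrt{T})$ bound is precisely the amplification coming from the $K = \Theta(\log T)$ epochs. At a more technical level, this amplification is licensed by the elliptical potential lower bound (Lemma~\ref{lem_potential_low} in the paper), which exhibits a sequence $\{z_t\}$ saturating the $O(d\log T)$ bound of Lemma~\ref{lem:elliptical}. Embedding such a sequence into the adversarial action-set construction guarantees that the scales $\rho_k$ can be chosen so that no single epoch absorbs a disproportionate share of the algorithm's information budget; equivalently, it prevents the algorithm from ``collapsing'' the $\log T$ epochs into a single effective one and paying only the classical $\Omega(d\sqrt{T})$ regret.

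The main obstacle is controlling the interaction across epochs: the algorithm could in principle use information gathered during epoch $k$ to reduce its regret in some later epoch $k' > k$. I would handle this by choosing the epoch-specific magnitudes $\rho_k$ geometrically so that an observation from epoch $k$ carries only negligible information about the optimal action in any epoch $k' \neq k$, and then formalizing this decoupling through a martingale-plus-change-of-measure argument that lower-bounds the Bayesian regret within each epoch by the corresponding single-epoch minimax quantity, up to lower-order terms. Getting this cross-epoch decoupling to work cleanly---while still using a single fixed parameter $\theta$ rather than fresh independent parameters per epoch---is where the tight elliptical potential sequence plays an indispensable role and is the technical heart of the argument.
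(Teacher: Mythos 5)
Your high-level blueprint has the right shape---split $[T]$ into $\Theta(\log T)$ phases of length $\asymp T/\log T$, choose geometrically scaled difficulty per phase, and get $\Omega(d\sqrt{T/\log T})$ per phase summing to $\Omega(d\sqrt{T\log T})$. That is indeed the skeleton of the paper's stage decomposition (with $S_{t_j}\leq 9^j$, each stage has length $\asymp T/\log T$), and you correctly identify that the elliptical-potential-saturating sequence $\{z_t\}$ is what makes the scales line up. However, the proposal has a concrete gap at the very step you flag as the ``technical heart'': you cannot reproduce ``the Dani et al.\ argument at scale $\rho_k$'' inside each epoch with (i) a single fixed $\theta\in\mathbb R^d$ and (ii) a prior supported on only $2^{d/2}$ points. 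With a $2^{d/2}$-point prior, the total entropy of the hidden parameter is $\Theta(d)$ bits, so after $O(1)$ epochs of learning you have no fresh uncertainty left to force $\Omega(d\sqrt{T_k})$ regret. The paper's prior lives on $|\mathcal U|=2^{kd/2}$ points with $k=\Theta(\log T)$, and the extra $\log T$ bits per coordinate are packed hierarchically via a Cantor-set/binary-tree of nested intervals (Eq.~(\ref{eq:def-interval}), Observation~\ref{observ:xor}); the context vectors at stage $j$ are constructed from the stage-$(j-1)$ ancestor midpoint, so the learner must resolve a genuinely new bit of $\gamma_{u_s}$ at every stage. Nothing in your sketch supplies this hierarchical structure, and it is not recoverable from the elliptical potential sequence alone.

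Your ``decoupling'' framing is also at odds with what actually happens. The paper does \emph{not} make observations from one stage carry ``negligible information'' about other stages; on the contrary, the KL bound (Claim~\ref{claim:event-prob-difference}) accumulates over all rounds $t'\leq t$. What saves the argument is that the scales are matched: a pair $(U,U')$ that splits at stage $j$ satisfies $|\theta^{(U)}_{2s-1}-\theta^{(U')}_{2s-1}|\lesssim 3^{-j}/\sqrt d$, while $S_t\leq 9^j$ for $t$ in stage $j$, so $\mathrm{KL}(D_1\|D_2)\lesssim d\cdot (3^{-j}/\sqrt d)^2\cdot 9^j = O(1)$ \emph{despite} the cumulative conditioning. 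Pinsker then gives $p_{s,t}^{U,\pi}+p_{s,t}^{U',\pi}\geq 1/2$ per step, and each $s$-suboptimal pull costs $\Omega(\sqrt{\ln T/T})$ (Lemma~\ref{claim:s-suboptimal-pull-regret}) because the shrinking interval width $3^{-j}$ is exactly canceled by the growing $z_t\asymp 3^j\sqrt{\ln T/T}$. Replacing this per-step Pinsker argument with a per-epoch Fano/Bretagnolle--Huber bound and a ``martingale-plus-change-of-measure'' decoupling is not a detail to be filled in: it is a different proof that you have not carried out, and the obvious version of it runs into the entropy-budget problem above. To turn your sketch into a proof you would essentially have to rediscover the nested-interval construction and the $3^{-j}\cdot\sqrt{9^j}=O(1)$ cancellation, neither of which appears in the proposal.
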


Theorem \ref{thm:LB-n-eq-expd} can be easily extended to the case of $n<2^{d/2}$ and $T<d^5$ as well,
by a zero-filling trick and reducing the effective dimensionality of the constructed adversarial instances.
We place the proof of this extension (which eventually leads to a proof of Theorem \ref{thm:lower-finite})
in Sec.~\ref{sec:additional-proof-sec-lb} and shall focus  on proving Theorem \ref{thm:LB-n-eq-expd} first.

In Sec.~\ref{sec:elliptical-lb}, we provide a short argument on the tightness of the elliptical potential lemma which is critically used in most existing analysis for linear bandit algorithms. This is done via a novel construction of the sequence $\{z_t\}$ and intuitively explains the necessity of an $O(\log n)$ factor in all known regret bounds whose analysis is based on the potential lemma. However, it requires several new ideas to show the desired lower bound for \emph{all} algorithms. 

{
In Sec.~\ref{sec:lb-d2}, we first prove  Theorem \ref{thm:LB-n-eq-expd} for the special case when $d=2$, as a warmup. We will demonstrate how to use the sequence $\{z_t\}$ to construct a collection of instances. Thanks to the properties of  $\{z_t\}$, a suboptimal pull at each round will contribute $\Omega(\sqrt{(\log T) / T}$ regret, and we will show that if we randomly choose a constructed adversarial instance, any policy will have $\Omega(1)$ probability to make a suboptimal pull at each round, and hence the $\Omega(\sqrt{T \log T})$ total regret is proved. 
}

In Sec.~\ref{sec:lb-general-d}, we extend the construction and analysis to general $d$. The construction for general $d$ is obtained via a direct-product fashion operation to $d/2$ copies of the adversarial instances constructed for $d=2$ so that there are $n=2^{d/2}$ arms at each round. We will show that if we randomly choose an adversarial instance from the constructed class, any policy will suffer $\Omega(d \sqrt{T \log T})$ regret.


\subsection{Tightness of the elliptical potential lemma} \label{sec:elliptical-lb}


To motivate our construction of adversarial bandit instances, in this section we give a warm-up exercise
showing the critical elliptical potential lemma (Lemma \ref{lem:elliptical})
used heavily in our analysis and existing analysis of linear contextual bandit problems
is in fact tight \citep{auer2002using,filippi2010parametric,abbasi2011improved,chu2011contextual,li2017provable},
even for the univariate case.



\begin{lemma}\label{lem_potential_low}
For any $T \geq 1$, there exists a sequence $z_1,z_2,\cdots z_T\in[0,1]$, such that if we let $V_0 = 1$ and $V_t = V_{t-1} + z_t z_t^{\top}$ for $t \geq 1$, then
\begin{align}\label{eq:potential-lb-1}
\sum_{t \in [T]} \sqrt{z_t^2/V_{t-1}} \geq \sqrt{\frac{T \cdot \ln T}{2}}. 
\end{align}
\end{lemma}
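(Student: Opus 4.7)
The plan is driven by a tightness argument against the elliptical potential lemma itself. By Cauchy--Schwarz, $\sum_t z_t/\sqrt{V_{t-1}} \le \sqrt{T\,\sum_t z_t^2/V_{t-1}}$, and Lemma~\ref{lem:elliptical} caps the second factor by $2\ln V_T = O(\ln T)$ (since $V_T\le 1+T$ when $z_t\in[0,1]$). So the bound $\sqrt{T\ln T/2}$ is essentially the largest value the left-hand side can attain, and I want to build a sequence that matches it up to a constant. The Cauchy--Schwarz step is tight exactly when $z_t^2/V_{t-1}$ is constant in $t$, which dictates the construction: a geometric sequence.

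Concretely, I will set
\[
c \;:=\; \sqrt{\frac{\ln T}{2T}}, \qquad z_t \;:=\; c\,(1+c^2)^{(t-1)/2}, \qquad t=1,\ldots,T.
\]
A one-line induction gives $V_t = (1+c^2)^t$: the base case $V_0=1$ is immediate, and $V_t = V_{t-1}+z_t^2 = (1+c^2)^{t-1} + c^2(1+c^2)^{t-1} = (1+c^2)^t$. Consequently $z_t^2/V_{t-1} = c^2$ for every $t$, and the sum in \eqref{eq:potential-lb-1} equals \emph{exactly} $cT = \sqrt{T\ln T/2}$, which is the desired bound.

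The remaining obligation is the feasibility check $z_t\in[0,1]$; the binding case is $t=T$. Using $1+c^2\le e^{c^2}$,
\[
z_T^2 \;=\; c^2(1+c^2)^{T-1} \;\le\; c^2 e^{c^2 T} \;=\; \frac{\ln T}{2T}\cdot \sqrt{T} \;=\; \frac{\ln T}{2\sqrt T},
\]
which is at most $1$ once $T$ is moderately large; the few tiny $T$ cases are trivial (for $T=1$ the right-hand side of \eqref{eq:potential-lb-1} is $0$, and for $T=2$ one can take $z_1=z_2=1$ to easily exceed $\sqrt{\ln 2}$).

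I do not anticipate a serious obstacle: once one commits to enforcing $z_t^2/V_{t-1}\equiv c^2$ and solves the resulting recursion to get the geometric form $z_t\propto(1+c^2)^{(t-1)/2}$, both the identity $V_t=(1+c^2)^t$ and the boundary check reduce to straightforward manipulations. The only mildly delicate point is calibrating the constant $c$: it must be large enough that $cT$ reaches $\sqrt{T\ln T/2}$, yet small enough that $(1+c^2)^T$ remains $O(\sqrt T)$ so that $z_T\le 1$; the choice $c^2 = (\ln T)/(2T)$ is precisely the unique scaling that balances these two requirements.
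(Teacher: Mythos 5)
Your construction is identical to the paper's: with $c^2=\frac{\ln T}{2T}$, your $z_t=c(1+c^2)^{(t-1)/2}$ is exactly the paper's $z_t=\sqrt{S_{t-1}\ln T/(2T)}$ where $S_t=(1+\frac{\ln T}{2T})^t$, and both proofs verify $V_t=(1+c^2)^t$, deduce $z_t^2/V_{t-1}\equiv c^2$, and check feasibility via $(1+c^2)^{T-1}\le\sqrt T$. The proposal is correct and follows the paper's proof essentially verbatim (including the Cauchy--Schwarz tightness motivation, which the paper states as a remark after the lemma).
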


As a remark, using Lemma \ref{lem:elliptical} and Cauchy-Schwarz inequality we easily have 
$\sum_{t=1}^T \sqrt{z_t^2/V_{t-1}} \leq \sqrt{T}\cdot \sqrt{\sum_{t=1}^T z_t^2/V_{t-1}}
\leq \sqrt{T\ln V_T}\leq \sqrt{T\ln T}$ for all sequences $z_1,\cdots,z_T\in[0,1]$.
Lemma \ref{lem_potential_low} essentially shows that this argument cannot be improved,
and therefore current analytical frameworks of SupLinUCB \citep{chu2011contextual} or SupLinRel \citep{auer2002using}
cannot hope to get rid of all $O(\log T)$ terms.
While such an argument is not a rigorous lower bound proof as it only applies to specific analysis of certain policies,
we find still the results of Lemma \ref{lem_potential_low} very insightful,
which also inspires the construction of adversarial problem instances in our formal lower bound proof later.

\begin{proof}[Proof of Lemma \ref{lem_potential_low}.]
Let $S_t = \left(1 + \frac{\ln T}{2T}\right)^t$ for all $t \geq 0$, 
and let $z_t = \sqrt{\frac{S_{t-1} \ln T}{2T}}$ for all $t \geq 1$. 
Note that $z_t$ is a monotonically increasing function of $t$; and for $T \geq 1$ it holds that $S_{T-1} = \left(1 + \frac{\ln T}{2T}\right)^{T-1} \leq \sqrt{T}$. Therefore, for any $t \leq T$, we may verify that $z_t \in [0, 1]$ since 
\begin{align}\label{eq:zt-ub}
z_t \leq z_T = \sqrt{\frac{S_{T-1} \ln T}{2T}} 
\leq \sqrt{\frac{\sqrt{T} \ln T}{2 T}} < 1. 
\end{align}

Now we verify Eq.~\eqref{eq:potential-lb-1}. Note that for any $t \leq T$, we have
\begin{align}\label{eq:sum-of-seq}
    V_t = 1 + \sum_{j = 1}^{t} z_j z_j^{\top} = 1 + \frac{\ln T}{2T} \sum_{j=1}^{t} \left(1 + \frac{\ln T}{2T}\right)^{j-1} = \left(1 + \frac{\ln T}{2T}\right)^t = S_t .
\end{align} 
Therefore, we have 
\begin{align*}
~~~~~ \sum_{t=1}^T\sqrt{z_t^2/V_{t-1}} =  \sum_{t=1}^T |V_{t-1}^{-1/2} z_t| 
= \sum_{t=1}^T \sqrt{\frac{\ln T}{2T}} 
= \sqrt{\frac{T \ln T}{2}} .
\end{align*}

\end{proof}

{
\subsection{Warmup: the lower bound theorem for $d=2$}\label{sec:lb-d2}

In this subsection, we first prove the lower bound theorem (Theorem~\ref{thm:LB-n-eq-expd}) for $d = 2$ as a warmup. In this special case, there are only $n = 2^{d/2} = 2$ arms during each time period.

\subsubsection{Construction of adversarial problem instances} \label{sec:d2-adversarial-construction}



We will construct a finite set of bandit problem instances that will serve as the adversarial construction 
of our lower bound proof for general policies.
We start with the definitions of \emph{stages} and \emph{intervals}.

\begin{figure}[t]
\centering
\includegraphics[width=0.49\linewidth]{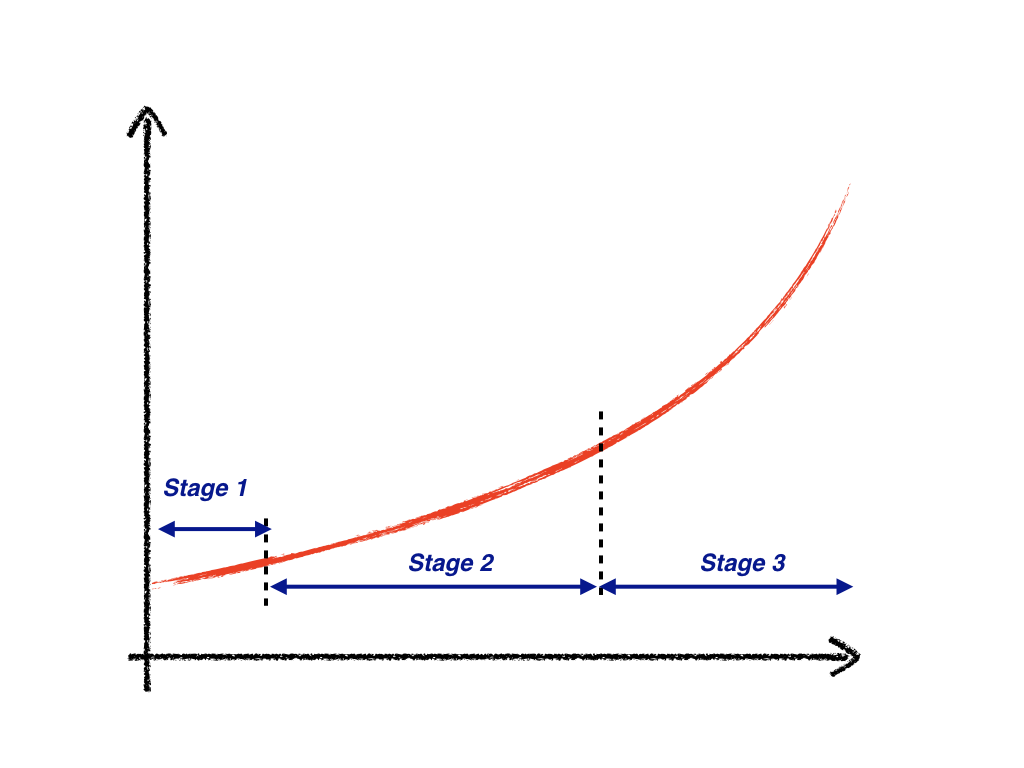}
\includegraphics[width=0.49\linewidth]{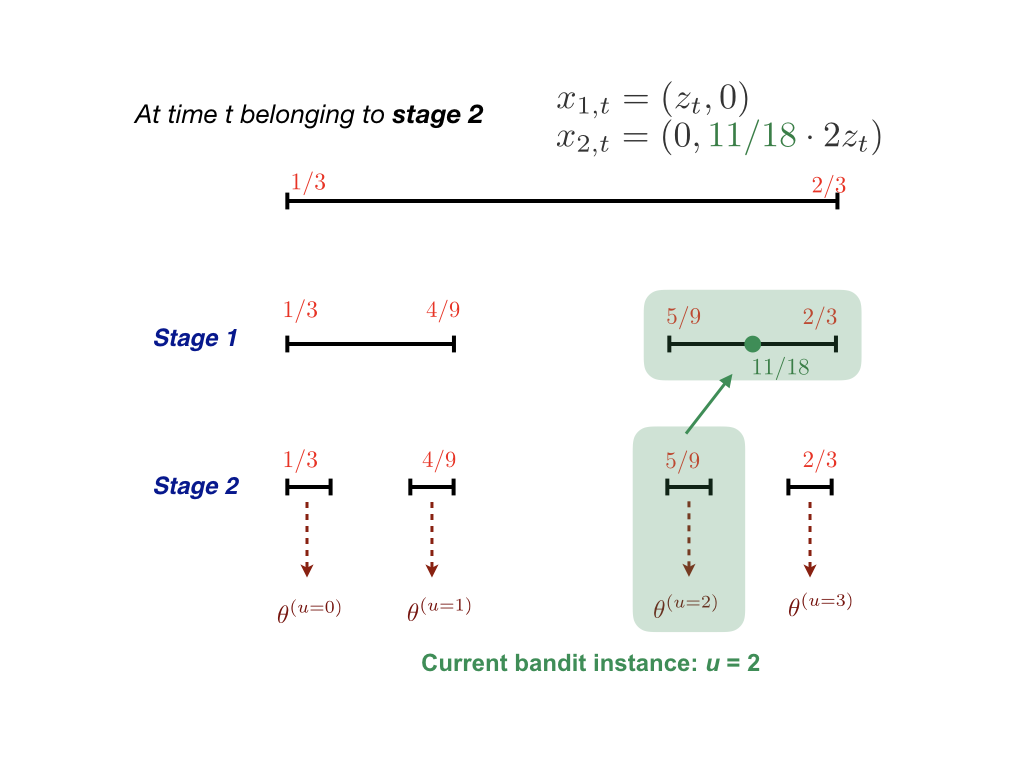}
\caption{{Illustration of the construction of adversarial bandit instances for the special case of $d=2$.
In the left panel, the entire $T$ time periods are partitioned into 3 \emph{stages}, with exponentially growing lengths.
The red curve is the $S_{t}$ sequence defined in the proof of Lemma \ref{lem_potential_low}.
In the right panel, \emph{intervals} for the first 2 stages are depicted, with the $j$th stage containing $2^j$ intervals.
The intervals are constructed in a similar way compared to the construction of the \emph{Cantor set}: within each new stage
the original interval is evenly partitioned into 3 parts and the middle part is left out.
Each of the leaf nodes/intervals then correspond to a regression model $\theta^{(u)}$ in a bandit instance.
The right figure also shows how the context vectors (in the case of $d=2$, there are $n=2$ context vectors per time period) are
 determined for a certain time period belonging to stage 2: the mid point of the interval in stage 1, which is the \emph{unique} parent
 of the leaf intervals corresponding to current regression models, is used to construct the context vectors.}}
\label{fig:lower-bound-d2}
\end{figure}

\paragraph{Stages.} We  use the $z_t$ sequence derived in Lemma~\ref{lem_potential_low} to divide the $T$ rounds of the bandit game into stages. Recall that $S_t = \left(1 + \frac{\ln T}{2T}\right)^t$ and $z_t = \sqrt{\frac{S_{t-1} \ln T}{2T}}$. We also have $S_t = 1 + \sum_{j = 1}^t z^2_j$ by Eq.~\eqref{eq:sum-of-seq}. 
Let $t_0 = 0$ and for each $j \geq 1$, let 
\begin{align}\label{eq:stj-up}
    t_j = \max\{t \leq T | S_t \leq 9^j \cdot S_{0}\}
\end{align}
mark the end of the $j$-th stage. In other words, stage $j$ ($j \geq 1$) consists of the rounds $t$ where  $t \in (t_{j-1}, t_j]$. Let $k = O(\ln T)$ be the last stage. We re-set $t_k = T$ so that the end of the last stage does not go beyond the time horizon. For notational consistency, we define stage $0$ to be the empty set of the rounds. We also remark that for $j \in \{1, 2, 3, \dots, k - 1\}$, we have that
\begin{align}\label{eq:stj-lb}
    S_{t_j} \geq 9^j \left(1 + \frac{\ln T}{2T}\right)^{-1} \geq 9^j \cdot \frac{1}{2}.
\end{align}

\paragraph{Intervals.} We also construct a set of intervals for each stage. In stage $0$, the set is $\cI_0 = \{I_{0,0} = [\frac13, \frac23]\}$.  
For each stage $j \geq 1$, the cardinality of $\cI_j$ is twice of the cardinality of $\cI_{j-1}$. 
More specifically, for each $I_{j-1,\xi} = [a, b] \in \cI_{j-1}$, we introduce the following two intervals to $\cI_j$,
\begin{align}\label{eq:def-interval}
I_{j, 2\xi} = \left[a, \frac{2 a + b}{3}\right] \subset I_{j-1, \xi} \text{~ and ~} I_{j, 2\xi+1} = \left[\frac{ a + 2b}{3}, b\right] \subset I_{j-1, \xi} .
\end{align}
We have $|\cI_{j}| = 2^j$ for all $j \in \{0, 1, 2, \dots, k\}$. For each $\xi \in \{0, 1, 2, \dots, 2^{k}-1\}$, we select an arbitrary real number $\gamma_{\xi}$ from the interval  $I_{k, \xi}$; we also let $\tau_{\xi}^{j}$ be the index of the unique interval in stage $j$ such that $\gamma_{\xi} \in I_{j, \tau_{\xi}^{j}}$. { (Note that the intervals in each stage are disjoint, so the uniqueness is guaranteed.)} Let $\alpha_{\xi}^{j}$ and $\beta_{\xi}^{j}$ be the two endpoints of $I_{j, \tau_{\xi}^{j}}$, i.e., $I_{j, \tau_{\xi}^{j}} = [\alpha_{\xi}^{j}, \beta_{\xi}^{j}]$. 

As shown in the right panel of Figure~\ref{fig:lower-bound-d2}, the intervals across the stages form a full binary tree where the root is $I_{0, 0}$. And the two children of an interval $I_{j-1, \xi}$ are $I_{j, 2\xi}$ and $I_{j, 2\xi+1}$. In this way, for each leaf node $I_{k, \xi}$ (where $\xi \in \{0, 1, 2, \dots, 2^k-1\}$), $[\alpha_{\xi}^{j}, \beta_{\xi}^{j}]$ is the ancestor interval of $I_{k, \xi}$ in the $j$-th stage, and $\tau_{\xi}^{j}$ is the index of the interval. We also have the following observation.

\begin{observation}\label{observ:xor}
Let $\oplus$ denote the binary bit-wise exclusive or (XOR) operator. For any leaf node $I_{k, \xi}$, and any stage $j$, if we let $\xi' = \xi \oplus 2^{k-j}$, we have that $\xi$ and $\xi'$ share the same ancestors at stages from $0$ to stage $(j-1)$, but have different ancestors at stage $j$.
\end{observation}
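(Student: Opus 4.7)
The plan is to prove Observation~\ref{observ:xor} via a direct bit-level analysis of the interval indices. First, I would establish by induction on $j$ (descending from $k$) that the ancestor of the leaf $I_{k, \xi}$ at stage $j$ is exactly $I_{j, \lfloor \xi / 2^{k-j} \rfloor}$. The base case $j = k$ is trivial, and the inductive step follows from the defining recursion in Eq.~\eqref{eq:def-interval}: since both $I_{j, 2\eta}$ and $I_{j, 2\eta+1}$ have parent $I_{j-1, \eta}$, moving one stage up replaces the index $\eta'$ by $\lfloor \eta'/2 \rfloor$, and iterating this $k - j$ times starting from $\xi$ produces $\lfloor \xi / 2^{k-j} \rfloor$.

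Next, I would translate this into the language of binary expansions. Writing $\xi$ in binary as $b_1 b_2 \cdots b_k$, with $b_1$ the most significant bit, the identification above says that the ancestor index of $I_{k, \xi}$ at stage $j$ is the integer whose binary representation is the length-$j$ prefix $b_1 b_2 \cdots b_j$. The quantity $2^{k-j}$ has a single one in its binary representation, located at position $(k-j+1)$ from the right, equivalently at position $j$ from the left; therefore the XOR operation $\xi' = \xi \oplus 2^{k-j}$ flips precisely the bit $b_j$ and leaves all other bits of $\xi$ unchanged.

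Combining the two facts closes the argument: for each stage $j' < j$, the ancestor index is obtained by taking the length-$j'$ prefix $b_1 b_2 \cdots b_{j'}$ of the binary representation, and this prefix is identical for $\xi$ and $\xi'$, so they share the same ancestor at every stage $0, 1, \ldots, j-1$. At stage $j$ itself, the prefix $b_1 b_2 \cdots b_j$ of $\xi'$ differs from that of $\xi$ in the last (i.e.\ $j$-th) bit, so the two ancestor indices differ and the corresponding intervals at stage $j$ are distinct. The statement is essentially a reformulation of the standard bijection between root-to-leaf paths in a binary tree and length-$k$ bit strings, so the only thing to guard against is notational slippage between top-down and bottom-up bit numbering; I do not anticipate any genuine mathematical obstacle.
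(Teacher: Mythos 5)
Your proof is correct. The paper states Observation~\ref{observ:xor} without proof, implicitly treating it as a standard fact about indexing in a full binary tree, and your bit-level argument is exactly the calculation one would write out if pressed: the floor-division identity for ancestor indices, the identification of the stage-$j$ ancestor index with the length-$j$ binary prefix, and the fact that XOR with $2^{k-j}$ flips precisely the $j$th bit from the top. All the bookkeeping (most-significant bit first, $b_j$ carrying weight $2^{k-j}$) is handled correctly.
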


\paragraph{Bandit instances $B^{(u)}$.} Now we are ready to construct our lower bound instances. We will consider many bandit instances that are parameterized by $u \in  \{0, 1, 2, \dots, 2^{k}-1\}$. For each $u$, the bandit instance $B^{(u)}$ consists of a (2-dimensional) hidden vector $\theta^{(u)}$ and a set of context vectors $\{x_{i, t}^{(u)}\}_{i \in \{0, 1\}, t \in [T]}$. In all bandit instances, we set the noise $\epsilon_t$ to be independent Gaussian with variance $1$.

We first construct the hidden vectors $\theta^{(u)}$.  For each $u \in  \{0, 1, 2, \dots, 2^{k}-1\}$, we set $\theta^{(u)} = (\gamma_u, 1/2)$. By our construction, we have $\norm{\theta^{(u)}}_2 \leq 1$ for every $u$.  We then construct the set of context vectors $\{x_{i, t}^{(u)}\}_{i \in \{0, 1\}, t \in [T]}$. For any round $t$ that belongs to stage $j$, we set $x_{0, t}^{(u)} = (z_t, 0)$ and  $x_{1, t}^{(u)} =(0, ((\alpha_{u}^{j-1}+ \beta_{u}^{j-1})/2) \cdot 2 z_t)$. On may verify (using Eq.~\eqref{eq:zt-ub}) that for sufficiently large $T$, we have that  $\norm{x^{(u)}_{i,t}}_2 \leq 1$ for all $i$ and $t$. 

\subsubsection{The analysis for suboptimal pulls}

Since there are only two candidate actions at each time $t$, we call the action with the smaller expected reward to be a \emph{suboptimal pull}. The following lemma lower bounds the regret incurred by each suboptimal pull.

\begin{lemma}\label{claim:d2-suboptimal-pull-regret}
For any instance $B^{(u)}$ and any time $t$, if a policy makes a suboptimal pull at time $t$, then the incurred regret is at least ${\sqrt{\ln T}}/({36 \sqrt{T}})$.
\end{lemma}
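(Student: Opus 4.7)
\textbf{Proof plan for Lemma~\ref{claim:d2-suboptimal-pull-regret}.} The key insight, and what makes the construction work, is a precise cancellation between the geometric growth of $z_t$ across stages and the geometric shrinkage of the Cantor-style intervals. I would first make this explicit by computing both expected rewards for a round $t$ in stage $j\ge 1$. Using $\theta^{(u)}=(\gamma_u,1/2)$, $x_{0,t}^{(u)}=(z_t,0)$, and $x_{1,t}^{(u)}=(0,(\alpha_u^{j-1}+\beta_u^{j-1})z_t)$, the expected rewards are $z_t\gamma_u$ and $\tfrac12 z_t(\alpha_u^{j-1}+\beta_u^{j-1})$, so the gap between the two arms — which is exactly the regret of a suboptimal pull — equals
\[
\Delta_{t,u} \;=\; z_t\cdot\Bigl|\gamma_u-\tfrac{\alpha_u^{j-1}+\beta_u^{j-1}}{2}\Bigr|.
\]
Thus the lemma reduces to lower bounding each of the two factors separately.

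Next I would bound the displacement factor $|\gamma_u-(\alpha_u^{j-1}+\beta_u^{j-1})/2|$ using the recursive definition~\eqref{eq:def-interval}. Since $\gamma_u\in I_{k,u}\subset I_{j,\tau_u^j}$, and $I_{j,\tau_u^j}$ is by construction either the leftmost or rightmost closed third of its parent $I_{j-1,\tau_u^{j-1}}=[\alpha_u^{j-1},\beta_u^{j-1}]$, the value $\gamma_u$ lies outside the open middle third of the parent. Therefore its distance to the parent's midpoint is at least $\tfrac16(\beta_u^{j-1}-\alpha_u^{j-1})$. A straightforward induction on the recursion~\eqref{eq:def-interval} shows $|I_{j-1,\tau_u^{j-1}}|=1/3^j$ (the root interval $I_{0,0}$ has length $1/3$ and each subdivision multiplies by $1/3$), yielding
\[
\Bigl|\gamma_u-\tfrac{\alpha_u^{j-1}+\beta_u^{j-1}}{2}\Bigr|\;\ge\;\frac{1}{6\cdot 3^j}.
\]
For the $z_t$ factor I would use the stage structure: since $t$ is in stage $j$ we have $t-1\ge t_{j-1}$ and hence $S_{t-1}\ge S_{t_{j-1}}$. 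Eq.~\eqref{eq:stj-lb} (together with the trivial bound $S_0=1\ge 9^0/2$ that handles $j=1$) gives $S_{t_{j-1}}\ge 9^{j-1}/2$. Plugging into the definition $z_t=\sqrt{S_{t-1}\ln T/(2T)}$ yields $z_t\ge \tfrac{3^{j-1}}{2}\sqrt{\ln T/T}$.

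Finally I would multiply the two bounds to obtain
\[
\Delta_{t,u}\;\ge\;\frac{3^{j-1}}{2}\sqrt{\frac{\ln T}{T}}\cdot\frac{1}{6\cdot 3^j}\;=\;\frac{1}{36}\sqrt{\frac{\ln T}{T}},
\]
independent of $j$, which is exactly the claim. The substantive content of the proof is not in any of the individual calculations but in verifying that the two geometric sequences indexed by $j$ line up so that the $3^{j-1}$ from $z_t$ cancels all but one factor of $3$ in the $1/3^j$ coming from the interval length. This cancellation is precisely what motivated building the adversarial instance from the extremal sequence $\{z_t\}$ of Lemma~\ref{lem_potential_low} on one side and a ternary Cantor-type tree of hidden parameters on the other, and it is the step most worth double-checking when writing the full proof (including the mild edge case $j=1$, where one must fall back on $S_{t-1}\ge S_0=1$ rather than on Eq.~\eqref{eq:stj-lb}).
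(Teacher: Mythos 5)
Your proof is correct and follows the paper's argument essentially step for step: compute the reward gap $z_t\,|\gamma_u-(\alpha_u^{j-1}+\beta_u^{j-1})/2|$, lower-bound the displacement by one sixth of the parent interval length $3^{-j}$, lower-bound $z_t$ via $S_{t-1}\ge S_{t_{j-1}}\ge 9^{j-1}/2$, and observe the $3^{j-1}$ and $3^{-j}$ cancel to a constant. The one thing you do more carefully than the paper is flagging the $j=1$ edge case where Eq.~\eqref{eq:stj-lb} as stated does not literally cover $j-1=0$ and one must fall back on $S_{t_0}=S_0=1$; this is a legitimate clarification of a small gap the paper glosses over, not a change of approach.
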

\begin{proof}
Assuming time $t$ is in stage $j$, we have that the incurred regret is 
\[
\left| (z_t, 0)^\top \theta^{(u)} - (0, ((\alpha_{u}^{j-1}+ \beta_{u}^{j-1})/2) \cdot 2 z_t)^\top\theta^{(u)}\right| = z_t \cdot \left| \gamma_u - \frac{\alpha_{u}^{j-1}+ \beta_{u}^{j-1}}{2} \right| \geq z_t \cdot \frac{|\alpha_{u}^{j-1}- \beta_{u}^{j-1}|}{6}.
\]
Since $z_t = \sqrt{\frac{S_{t-1} \ln T}{2T}}$ and $\left|\alpha_{u}^{j-1}- \beta_{u}^{j-1}\right| = 3^{-j}$, we have 
\begin{align*}
z_t \cdot \frac{|\alpha_{u}^{j-1}- \beta_{u}^{j-1}|}{6} =
\frac{1}{6 \cdot 3^{j}} \sqrt{\frac{S_{t-1} \ln T}{2T}}
\geq \frac{1}{6 \cdot 3^{j}}\sqrt{\frac{S_{t_{j-1}}\ln T}{2T}}
\geq   \frac{\sqrt{\ln T}}{36 \sqrt{T}},
\end{align*}
where the last inequality is because of Eq.~\eqref{eq:stj-lb}. 
\end{proof}

For any policy $\pi$, underlying model $\theta^{(u)}$, let $p_t^{u, \pi}$ be the probability that $\pi$ makes a suboptimal pull at time $t$. We have the following corollary.
\begin{corollary}\label{cor:d2-suboptimal-pull-regret}
$\displaystyle{\mathbb E[R^T] \geq \sum_{t=1}^{T} p_t^{u, \pi} \cdot \sqrt{\ln T}/(36 \sqrt{T})}$.
\end{corollary}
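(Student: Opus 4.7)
The plan is to observe that this corollary is an essentially immediate consequence of Lemma~\ref{claim:d2-suboptimal-pull-regret} combined with linearity of expectation, with no additional technical content beyond decomposing the total regret round-by-round.

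First, I would write the expected total regret as a sum of per-round expected instantaneous regrets: $\mathbb{E}[R^T] = \sum_{t=1}^{T} \mathbb{E}[r_t^\star - x_{i_t,t}^{(u)\top}\theta^{(u)}]$, where $r_t^\star := \max_{i\in\{0,1\}} x_{i,t}^{(u)\top}\theta^{(u)}$ denotes the best expected reward available at time $t$. Since there are only two candidate actions at each round, the per-round instantaneous regret is exactly $0$ when the optimal action is pulled and is equal to the suboptimality gap when a suboptimal pull is made. Thus the $t$-th summand equals $p_t^{u,\pi}$ times the (instance-dependent, but deterministic given $u$ and $t$) gap between the two arms at round $t$.

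Next, I would apply Lemma~\ref{claim:d2-suboptimal-pull-regret} to lower bound this gap uniformly in $t$ and $u$ by $\sqrt{\ln T}/(36\sqrt{T})$. Substituting this bound into each summand and using the non-negativity of the other contributions yields
\begin{equation*}
\mathbb{E}[R^T] \;\geq\; \sum_{t=1}^{T} p_t^{u,\pi} \cdot \frac{\sqrt{\ln T}}{36\sqrt{T}},
\end{equation*}
which is the desired inequality.

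There is no real obstacle here; the only small subtlety is being careful that $p_t^{u,\pi}$ should be interpreted as an unconditional probability (averaged over the policy's internal randomness and the history of rewards up to time $t$), so that the identity $\mathbb{E}[\text{gap}_t \cdot \mathbf{1}\{\text{suboptimal at }t\}] = \text{gap}_t \cdot p_t^{u,\pi}$ holds by pulling the deterministic gap outside the expectation. With that, the corollary follows in one line from the lemma.
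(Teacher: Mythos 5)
Your argument is correct and is exactly the one-line decomposition the paper implicitly relies on (it states this corollary without a separate proof): since the context vectors, and hence the per-round arm gap, are deterministic given $u$ and $t$, the expected instantaneous regret at round $t$ is precisely $p_t^{u,\pi}$ times that gap, and Lemma~\ref{claim:d2-suboptimal-pull-regret} bounds the gap from below by $\sqrt{\ln T}/(36\sqrt T)$. You also correctly flag the only point that needs care, namely that $p_t^{u,\pi}$ is the unconditional (marginal) probability of a suboptimal pull.
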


The following lemma lower bounds $p_t^{u, \pi}$.
\begin{lemma}\label{lem:d2-subopt-pull-large-in-neighbors}
For any stage $j$, let  $u, u' \in \{0, 1, 2, \dots, 2^{k}-1\}$
be two parameters such that $\tau_{u}^{j-1} = \tau_{u'}^{j-1}$ but $\tau_{u}^{j} \neq \tau_{u'}^{j}$. (The definition of $\tau$ can be found in Sec.~\ref{sec:d2-adversarial-construction}.)
Then for any  policy $\pi$ and time period $t$ in stage $j$, it holds that $p_{t}^{u,\pi}+p_{t}^{u',\pi} \geq 1/2$.
\end{lemma}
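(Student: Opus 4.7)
The plan is to apply the classical two-point (Le Cam) method to the pair of bandit instances $B^{(u)}$ and $B^{(u')}$. The key structural insight to establish first is that, although the two instances produce identical contexts and identical arm-$1$ reward distributions throughout stages $1,\dots,j$, their optimal actions at time $t$ are forced to be opposite. For the first point, I would argue that because $u$ and $u'$ share every ancestor interval through stage $j-1$, the midpoints $(\alpha_u^{j'-1}+\beta_u^{j'-1})/2$ defining the second arm's context at any round $s$ in stage $j'\le j$ coincide for the two instances, so $\mathcal{A}_s^{(u)} = \mathcal{A}_s^{(u')}$ for every $s \le t$. For the second, since $\tau_u^j \ne \tau_{u'}^j$, one of $\gamma_u,\gamma_{u'}$ lies in $I_{j,2\xi}$ (the left third of the common parent) and the other in $I_{j,2\xi+1}$ (the right third) by~\eqref{eq:def-interval}, placing the two parameters on opposite sides of the parent's midpoint; hence arm $0$ is optimal for one instance and arm $1$ is optimal for the other.

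Next, I would let $\mathbb{P}^u$ and $\mathbb{P}^{u'}$ denote the laws of the observed history $(A_1,R_1,\dots,A_{t-1},R_{t-1})$ under the two instances. Writing $a^\ast(u)$ for the optimal arm, the binary test ``$A_t = a^\ast(u)$'' distinguishes the two hypotheses, so the standard Le Cam two-point bound gives
\[
p_t^{u,\pi}+p_t^{u',\pi} \;\ge\; 1 - \mathrm{TV}(\mathbb{P}^u,\mathbb{P}^{u'}),
\]
and by Pinsker's inequality it suffices to show $\mathrm{KL}(\mathbb{P}^u\|\mathbb{P}^{u'}) \le 1/2$.

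For the KL bound, I would invoke the chain rule for KL of sequential observations. Since the policy is a fixed mapping from histories to action distributions, and the arm-$1$ mean reward at every $s\le t$ equals $z_s(\alpha_u^{j'-1}+\beta_u^{j'-1})/2$ under both instances, only arm-$0$ pulls contribute, each with per-round KL $\tfrac12 z_s^2(\gamma_u-\gamma_{u'})^2$. Bounding the indicator by one,
\[
\mathrm{KL}(\mathbb{P}^u\|\mathbb{P}^{u'})
\;=\; \tfrac12(\gamma_u-\gamma_{u'})^2 \sum_{s=1}^{t-1} z_s^2\, \mathbb{P}^u[A_s = 0]
\;\le\; \tfrac12(\gamma_u-\gamma_{u'})^2 (S_{t-1}-1).
\]
Two quantitative ingredients then close the argument: the common stage-$(j-1)$ ancestor interval has width $3^{-j}$, so $(\gamma_u-\gamma_{u'})^2 \le 9^{-j}$; and $t \le t_j$ together with~\eqref{eq:stj-up} yields $S_{t-1} \le S_{t_j} \le 9^j$. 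Substituting, $\mathrm{KL} \le \tfrac12\cdot 9^{-j}\cdot 9^j = 1/2$, which via Pinsker gives $\mathrm{TV}\le 1/2$ and finishes the proof.

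The main delicate point I anticipate is the careful matching of constants: the base $9$ in the definition of $t_j$ and the Cantor-style ternary partition are tuned precisely so that the factors $(\gamma_u-\gamma_{u'})^2$ and $S_{t_j}$ cancel to produce a KL bound independent of~$j$. A careful accounting of which power of $3$ enters where — in particular that the parent interval spanning both $\gamma_u$ and $\gamma_{u'}$ has width exactly $3^{-j}$ rather than $3^{-(j-1)}$ — is what makes the clean constant $1/2$ achievable; this off-by-one check against the inductive construction~\eqref{eq:def-interval} is the technical heart of the argument.
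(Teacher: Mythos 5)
Your proof is correct and follows essentially the same approach as the paper: both reduce to showing $\tau_u^{j-1} = \tau_{u'}^{j-1}$ forces identical contexts through stage $j$, bound the KL of the reward history by $\tfrac12(\gamma_u-\gamma_{u'})^2 S_t$, and apply Pinsker plus a two-point (complement-event) argument to get $p_t^{u,\pi}+p_t^{u',\pi}\ge 1-1/2$. The paper phrases the last step as $\Pr[E\,|\,u]+\Pr[\bar E\,|\,u'] = 1 + \Pr[E\,|\,u]-\Pr[E\,|\,u']$ rather than via TV distance, but the two are equivalent.
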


Intuitively, Lemma~\ref{lem:d2-subopt-pull-large-in-neighbors} holds because at any time in and before stage $j$, all context vectors in $B^{(u)}$ are the same as $B^{(u')}$ and no policy $\pi$ shall be able to distinguish $\theta^{(u)}$ from $\theta^{(u')}$
since $u$ and $u'$ are close, and therefore at least one of the probabilities $p_{t}^{u,\pi}$ or $p_{t}^{u',\pi}$ shall be large. The formal proof of Lemma~\ref{lem:d2-subopt-pull-large-in-neighbors} involves the application of \emph{Pinsker's inequality} \citep{pinsker1960information}, and is presented as follows.

\begin{proof}[Proof of Lemma~\ref{lem:d2-subopt-pull-large-in-neighbors}]
By our construction, we have that 
\[
\left|(1, 0)^\top (\theta^{(u)} - \theta^{(u')})\right| \leq \left(\frac{1}{3}\right)^{j} .
\] 
Therefore, by Claim~\ref{claim:d2-event-prob-difference} (proved below), for any event $E$ at time $t$ before the end of stage $j$,  we have that
\begin{equation}\label{eq:d2-prob-up}
\left| \Pr\left[E | u \right] - \Pr\left[E | u' \right] \right| 
\leq \frac{1}{2} \left|(1, 0)^\top (\theta^{(u)} - \theta^{(u')})\right|  \sqrt{S_t}
\leq \frac{1}{2} \cdot \left(\frac{1}{3}\right)^{j} \sqrt{S_t} \leq \frac{1}{2}. 
\end{equation}
The last inequality holds because at any time $t$ in stage $j$, it holds that $S_t \leq 9^j$. 

Note that, at any time $t$, if the model parameter is $u$, the difference between the rewards of the two possible actions is
\[
(x_{0, t}^{(u)} - x_{1, t}^{(u)})^{\top} \left(\theta_{2s-1}^{(U)}, \theta_{2s}^{(U)}\right) = \frac{z_t}{2} \left(2 \gamma_{u} - \alpha_{u}^{j-1}- \beta_{u}^{j-1}\right).
\]
This value is greater than $0$ if and only if $2 \gamma_{u} > \alpha_{u}^{j-1}+ \beta_{u}^{j-1}$. Since $\tau_{u}^{j-1} = \tau_{u'}^{j-1}$ and $\tau_{u}^j \neq  \tau_{u'}^j$, by our construction Eq.~\eqref{eq:def-interval}, we have that exactly one of $\gamma_{u}$ and $\gamma_{u'}$ is greater than $\frac{1}{2}({\alpha_{u}^{j-1}+ \beta_{u}^{j-1}})$. In other words, at time $t$, any arm that is  suboptimal  for  parameter is $u$ is not  suboptimal  for  parameter $u'$, and vice versa. In light of this, let $E$ be the event that at time $t$ policy $\pi$ pulls an arm that is $s$-suboptimal for parameter $u$, and we have that the complement event $\bar{E}$ is that at time $t$ policy $\pi$ pulls an arm that is  $s$-suboptimal for parameter $u'$. By Eq.~\eqref{eq:d2-prob-up}, we have
\begin{align*}
p_{t}^{u,\pi}+p_{t}^{u',\pi} = \Pr\left[E |u \right] + \Pr\left[\bar{E} | u' \right] = 1 + \Pr\left[E |u \right] - \Pr\left[E | u' \right]  \geq \frac{1}{2} .
\end{align*}

\end{proof}

\begin{claim}\label{claim:d2-event-prob-difference}
For any $u, u'$, let $j$ be the largest number such that $\tau_{u}^{j-1} = \tau_{u'}^{j-1}$. For any time $t \leq t_j$ and any event $E$ that happens at time $t$, we have 
\[
\left| \Pr\left[E | u\right] - \Pr\left[E | u'\right] \right| 
\leq \frac{1}{2} \left|(1, 0)^\top (\theta^{(u)} - \theta^{(u')}) \right| \sqrt{S_t} .
\]
\end{claim}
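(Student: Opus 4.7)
The plan is to apply Pinsker's inequality to the joint distributions $P_u, P_{u'}$ of the observation histories (actions and rewards) up to time $t$ under $\theta^{(u)}$ and $\theta^{(u')}$, after bounding $\mathrm{KL}(P_u \| P_{u'})$ via the chain rule for KL divergence and the Gaussian identity $\mathrm{KL}(\mathcal{N}(\mu_1,1)\|\mathcal{N}(\mu_2,1)) = (\mu_1 - \mu_2)^2/2$.

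The first step is a structural observation: for every $t' \leq t \leq t_j$, the pair of context vectors and the arm-1 reward distribution coincide under the two parameters. Indeed, the hypothesis $\tau_u^{j-1} = \tau_{u'}^{j-1}$ propagates up the interval tree so that $\tau_u^{j'-1} = \tau_{u'}^{j'-1}$, and hence $(\alpha_u^{j'-1}, \beta_u^{j'-1}) = (\alpha_{u'}^{j'-1}, \beta_{u'}^{j'-1})$, at every stage $j' \leq j$. Combined with the definitions of $x_{0, t'}^{(\cdot)}$ and $x_{1, t'}^{(\cdot)}$, the two context pairs agree. Because $\theta^{(u)}$ and $\theta^{(u')}$ share the same second coordinate $1/2$ while $x_{1, t'}$ lives purely on the second coordinate, the arm-1 mean reward (and hence the full Gaussian reward distribution) is identical under both parameters; the arm-0 mean rewards, on the other hand, differ by exactly $z_{t'} \cdot (1, 0)^\top(\theta^{(u)} - \theta^{(u')})$.

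Next, the chain rule for KL divergence applied to the sequential observations gives
$$\mathrm{KL}(P_u \| P_{u'}) \;=\; \sum_{t'=1}^{t} \Pr[i_{t'} = 0 \mid u] \cdot \frac{z_{t'}^2 \bigl((1, 0)^\top (\theta^{(u)} - \theta^{(u')})\bigr)^2}{2} \;\leq\; \frac{\bigl((1, 0)^\top (\theta^{(u)} - \theta^{(u')})\bigr)^2}{2} \sum_{t'=1}^t z_{t'}^2,$$
since only arm-0 pulls contribute a non-trivial conditional KL; by Eq.~\eqref{eq:sum-of-seq}, $\sum_{t'=1}^t z_{t'}^2 = S_t - 1 \leq S_t$. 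Pinsker's inequality $|\Pr[E \mid u] - \Pr[E \mid u']| \leq \sqrt{\mathrm{KL}(P_u \| P_{u'})/2}$ then yields the claimed bound $\tfrac{1}{2}|(1,0)^\top(\theta^{(u)}-\theta^{(u')})|\sqrt{S_t}$.

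The only real subtlety is the verification that the contexts and arm-1 channels coincide through stage $j$, which relies on propagating interval-tree agreement as above; once that structural fact is in hand the Pinsker-plus-chain-rule computation is essentially mechanical, and I do not anticipate any further obstacle.
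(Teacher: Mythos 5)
Your proof is correct and follows essentially the same route as the paper's: observe that the contexts (and hence arm-1 reward channels) coincide through stage $j$, bound the KL divergence of the history distributions via the Gaussian KL identity and $\sum_{t'\le t} z_{t'}^2 \le S_t$, and apply Pinsker. The one small difference is that you invoke the chain rule for KL explicitly with the $\Pr[i_{t'}=0\mid u]$ weight, whereas the paper simply upper bounds each per-round term by its worst case; both yield the same bound.
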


\begin{proof}
Note that by our construction, at any time $t\leq t_j$, the contextual vectors of both $B^{(u)}$ and $B^{(u')}$ are the same.
Moreover, for any hidden vector and any arm, the reward distribution is a shifted standard Gaussian with variance $1$. 

For any time $t \leq t_j$, let $D_1$ be the product of the arm reward distributions at and before round $t$ when the hidden vector is $\theta^{(u)}$, and let $D_2$ be the same product distribution when the hidden vector is $\theta^{(u')}$. 
Since the second dimensions of $\theta^{(u)}$ and $\theta^{(u')}$ are the same, the differences of the mean rewards at any time $t' : 1 \leq t' \leq t$ for $\theta^{(u)}$ and  $\theta^{(u')}$ is either $\theta^{(u)}_{1} z_{t'} - \theta^{(u')}_{1} z_{t'}$ (if the first arm is pulled) or $0$ (if the second arm is pulled), where $\theta^{(u)}_{1}$ and $\theta^{(u')}_{1}$ denote the first dimensions of the corresponding vectors. Note that the KL divergence between two variance-$1$ Gaussians with means $\mu_1$ and $\mu_2$ is $|\mu_1 - \mu_2|^2/2$. Therefore, we have 
\begin{multline*}
\mathrm{KL} \left(D_1 \| D_2\right) \leq   \frac{1}{2} \sum_{t'=1}^{t} \left|\theta^{(u)}_{1} z_{t'} - \theta^{(u')}_{1}z_{t'} \right|^2 
= \frac{1}{2} \left| (1, 0)^\top (\theta^{(u)} - \theta^{(u')})\right|^2 \sum_{t'=1}^{t} z^2_{t'} \\
\leq \frac{1}{2} \left|(1, 0)^\top (\theta^{(u)} - \theta^{(u')}) \right|^2 \left(1 + \sum_{t'=1}^{t} z^2_{t'} \right)
 = \frac{1}{2} \left|(1, 0)^\top (\theta^{(u)} - \theta^{(u')}) \right|^2 S_t. 
\end{multline*}
Therefore, at time $t$, and for any event $E$,  we have
\[
\left| \Pr\left[E | u \right] - \Pr\left[E | u'\right] \right| 
\leq \sqrt{\frac{1}{2} \mathrm{KL}(D_1 \| D_2)} 
\leq \frac{1}{2} \left| (1, 0)^\top (\theta^{(u)} - \theta^{(u')}) \right| \sqrt{S_t}
\]
where the first inequality holds because of Pinsker's inequality (Lemma~\ref{lem:pinsker}). 
\end{proof}

\subsubsection{The average-case analysis}

We are now ready to prove Theorem \ref{thm:LB-n-eq-expd} assuming $d=2$. 
Recall that $\{0, 1, 2, \dots, 2^k-1\}$ is the finite collection of the parameters $u$ for the adversarial bandit instances we constructed in Sec.~\ref{sec:d2-adversarial-construction},
and $p_{t}^{U,\pi}$ is the probability of an suboptimal pull at time $t$.
The minimax regret $\mathfrak R(T;n=2,d=2)$ can then be lower bounded by 

\begin{align}
\mathfrak R(T;2,2) \geq \inf_{\pi} \max_{U\in\mathcal U} \mathbb E[R^T] 
&\geq  \inf_{\pi} \max_{u\in\{0, 1, 2, \dots, 2^k-1\}}\frac{\sqrt{\ln T}}{36\sqrt{T}}\cdot \sum_{t=1}^T p_{t}^{U,\pi}\label{eq:d2-average-reduction-1}\\
&\geq \inf_{\pi}\frac{1}{2^k}\sum_{u=0}^{2^k-1} \frac{\sqrt{\ln T}}{36\sqrt{T}}\cdot \sum_{t=1}^T p_{t}^{U,\pi}.\label{eq:d2-average-reduction-2}
\end{align}

Here, Eq.~(\ref{eq:d2-average-reduction-1}) holds by applying Corollary \ref{cor:d2-suboptimal-pull-regret},
and Eq.~(\ref{eq:d2-average-reduction-2}) holds because the average regret always lower bounds the worst-case regret.

Recall that $\oplus$ denotes the binary bit-wise exclusive or (XOR) operator. For any time $t$, suppose it is in stage $j$, for any parameter $u$, if we let $u' = u \oplus 2^{k-j+1}$, by Lemma~\ref{lem:d2-subopt-pull-large-in-neighbors} and Observation~\ref{observ:xor}, we have that $p_{t}^{U,\pi} + p_{t}^{U',\pi}\geq 1/2$ for all policies $\pi$.
Let $q_{j}^{u,\pi}$ be the expected number of suboptimal pulls made by policy $\pi$ in all time periods of stage $j$. 
Then 
\begin{equation}\label{eq:d2-q-lb}
q_{j}^{U,\pi} + q_{j}^{U',\pi} \geq \frac{1}{2}(t_j - t_{j-1}), \;\;\;\;\;\;\forall \text{ policy $\pi$.}
\end{equation}

We next compute the average expected number of suboptimal pulls made by any policy $\pi$ over all time periods $t$.
{
\begin{align}
\frac{1}{2^k}\sum_{u=0}^{2^k-1} \sum_{t=1}^T\sum_{s=1}^{d/2} p_{t}^{u,\pi}
=&\frac{1}{2^k}\sum_{u=0}^{2^k-1}\sum_{j=1}^{k} q_{j}^{u,\pi} \nonumber\\
=& \frac{1}{2 \cdot 2^k}\sum_{j=1}^{k}  \sum_{u = 0}^{2^k-1}  \left( q_{j}^{u,\pi} + q_{j}^{u \oplus 2^{k-j+1},\pi} \right)\geq \frac{1}{4 \cdot 2^k} \sum_{j=1}^{k}  \sum_{u = 0}^{2^k-1}  (t_j - t_{j-1})
= \frac{T}{4},\label{eq:d2-final-average}
\end{align}
}
where the inequality holds because of Eq.~\eqref{eq:d2-q-lb}. 
Combining Eqs.~(\ref{eq:d2-final-average},\ref{eq:d2-average-reduction-2}) we complete the proof of Theorem \ref{thm:LB-n-eq-expd} for $n=2$ and $d=2$.

}

\subsection{The lower bound proof for general $d$} \label{sec:lb-general-d}

\subsubsection{Construction of adversarial problem instances} \label{sec:gend-adversarial-construction}

We will use the definitions of \emph{stages} and \emph{intervals} introduced in Sec.~\ref{sec:d2-adversarial-construction}, and will need to define \emph{dimension groups} as follows.

\paragraph{Dimension groups.} Without loss of generality, we assume that $d$ is an even number. 
We also divide the $d$ dimensions into $d/2$ groups, where the $s$-th group ($s \in \{1, 2, 3, \dots, d/2\}$) corresponds to the $(2s - 1)$-th and $2s$-th dimension. 

Now we describe the set of adversarial instances for general $d$. The set of the instances can be viewed as a $(d/2)$-time direct product of the $d = 2$ special case.

\paragraph{Bandit instances $\mathcal{B}^{(U)}$.} Now we are ready to construct our lower bound instances. We will consider many bandit instances that are parameterized by $U = (u_1, u_2, \dots, u_{d/2})$ where each $u_s$ ($s \in [d/2]$) is indexed by $\{0, 1, 2, \dots, 2^{k}-1\}$. Let $\mathcal U$ denote the set of all possible $U$'s, and we have $|\mathcal U| = 2^{kd/2}$. For each $U \in \mathcal U$, the bandit instance $\mathcal{B}^{(U)}$ consists of a hidden vector $\theta^{(U)}$ and a set of context vectors $\{x_{i, t}^{(U)}\}$. In all bandit instances, we set the noise $\epsilon_t$ to be independent Gaussian with variance $1$.

We first construct the hidden vectors $\theta^{(U)}$.  For each $U = (u_1, u_2, \dots, u_{d/2})$, we set $\theta^{(U)}_{2 s - 1} = \frac{\gamma_{u_s}}{\sqrt{d}}$ and $\theta^{(U)}_{2s} = \frac{1}{2\sqrt{d}}$ for all $s \in [d/2]$. By our construction, we have $\norm{\theta^{(U)}}_2 \leq 1$ for every $U$.

We then construct the set of context vectors $\{x_{i, t}^{(U)}\}$. We label the $n$ arms by $0, 1, 2, \dots, 2^{d/2} - 1$. For each arm $i \in \{0, 1, 2, \dots, 2^{d/2} - 1\}$ and each dimension group $s \in [d/2]$, let $b_s(i)$ be the $s$-th least significant bit in the binary representation of $i$. At any time $t$, the context vectors of two arms $i_1$ and $i_2$ may differ at the $s$-th dimension group only when $b_s(i_1) \neq b_s(i_2)$. For any round $t$ that belongs to stage $j$, and for each dimension group $s \in [d/2]$, we set the corresponding entries of the context vector of Arm $i$ to be $((x^{(U)}_{i,t})_{2s-1}, (x^{(U)}_{i, t})_{2s}) = (z_t \cdot \sqrt{d}, 0)$ if $b_s(i) = 0$, and set the corresponding entries to be $((x^{(U)}_{i,t})_{2s-1}, (x^{(U)}_{i, t})_{2s}) = (0, ((\alpha_{u_s}^{j-1}+ \beta_{u_s}^{j-1})/2) \cdot 2 z_t \cdot \sqrt{d})$ if $b_s(i) = 1$. 
For $T \geq d^5$, one may easily verify (using Eq.~\eqref{eq:zt-ub}) that $\norm{x^{(U)}_{i,t}}_2 \leq 1$ for all $i$ and $t$.

\subsubsection{$s$-suboptimal pulls and their implications}\label{sec:gend-s-suboptimal-pull}

In our construction of adversarial bandit instances $\mathcal B^{(U)}$, 
for each dimension group $s\in[d/2]$ the policy has to choose between 
two potential actions (corresponding to this group $s$) of $((x^{(U)}_{i,t})_{2s-1}, (x^{(U)}_{i, t})_{2s}) = (0, (\alpha_{u_s}^{j-1}+ \beta_{u_s}^{j-1}) z_t \cdot \sqrt{d})$ and $((x^{(U)}_{i,t})_{2s-1}, (x^{(U)}_{i, t})_{2s}) = (z_t \cdot \sqrt{d}, 0)$.
One of the actions would lead to a larger expected reward depending on the unknown model $\theta$, 
and a policy should try to identify and execute such action for as many times as possible.
This motivates us to define the concept of \emph{$s$-suboptimal pulls}, which counts the number of times a policy plays
a suboptimal action.
\begin{definition}[$s$-suboptimal pull]
For any $s\in[d/2]$,
we say a policy makes one \emph{$s$-suboptimal pull} at time period $t$ if the policy picks an action corresponding to 
the lesser expected reward.
\end{definition}

We also break up the regret incurred by a policy at time period $t$ into $d/2$ terms, each corresponding 
to a dimension group $s\in[d/2]$.
\begin{definition}[$s$-segment regret]
For any $s\in[d/2]$ and time period $t$, define 
$$
r_s^{(t)} = \left(\left((x_{i_t^*, t}^{(U)})_{2s-1}, (x_{i_t^*, t}^{(U)})_{2s}\right) - \left((x_{i_t, t}^{(U)})_{2s-1}, (x_{i_t, t}^{(U)})_{2s}\right)\right)^\top \left(\theta^{(U)}_{2s-1}, \theta^{(U)}_{2s}\right),
$$
where $i_t$ is the action the policy plays and $i_t^*$ is the optimal action at time $t$.
\end{definition}

By definition, the regret incurred at time period $t$ can be expressed as $\sum_{s=1}^{d/2}r_s^{(t)}$.
Also, intuitions behind the definition of $s$-optimal pulls suggest that the more $s$-optimal pulls a policy makes,
the larger $s$-segment regret it should incur.
The following lemma quantifies this intuition by giving a lower bound of $s$-segment regret using the number of $s$-optimal pulls. The lemma is an analogue of Lemma~\ref{claim:d2-suboptimal-pull-regret}  and its proof is deferred to Sec.~\ref{sec:additional-proof-sec-lb}.
\begin{lemma}\label{claim:s-suboptimal-pull-regret} 
For any instance $\mathcal{B}^{(U)}$, any coordinate group $s$, and any time $t$, if a policy makes an $s$-suboptimal pull at time $t$, then $r_s^{(t)} \geq {\sqrt{\ln T}}/({36 \sqrt{T}})$.
\end{lemma}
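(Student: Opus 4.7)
My plan is to reduce the statement to essentially the same computation as in Lemma~\ref{claim:d2-suboptimal-pull-regret}, exploiting the product structure of the construction across the $d/2$ dimension groups. The key observation is that the $\sqrt{d}$ rescaling built into both the context entries and the hidden vector in Sec.~\ref{sec:gend-adversarial-construction} is precisely engineered to make the per-group regret analysis dimension-free.

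First, I would note that outside of coordinates $2s-1,2s$, the context entries do not depend on $b_s(i)$ and do not interact with $\theta^{(U)}_{2s-1},\theta^{(U)}_{2s}$; thus $r_s^{(t)}$ depends on the chosen action only through $b_s(i_t)$. A direct computation for $t$ in stage $j$ shows that the group-$s$ contribution to the expected reward equals $z_t\gamma_{u_s}$ when $b_s(i)=0$ (since $(z_t\sqrt{d},0)^\top(\gamma_{u_s}/\sqrt{d},1/(2\sqrt{d}))=z_t\gamma_{u_s}$) and equals $z_t\cdot (\alpha_{u_s}^{j-1}+\beta_{u_s}^{j-1})/2$ when $b_s(i)=1$, with the $\sqrt{d}$ factors cancelling exactly. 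Because the per-group contributions are additive and independent across $s$, the optimal action $i_t^*$ picks, in each group $s$, the bit giving the larger of the two numbers above; hence an $s$-suboptimal pull at time $t$ is precisely the event $b_s(i_t)\neq b_s(i_t^*)$, and in that case
\begin{equation*}
r_s^{(t)} \;=\; z_t\,\bigl|\gamma_{u_s}-\tfrac{1}{2}(\alpha_{u_s}^{j-1}+\beta_{u_s}^{j-1})\bigr|.
\end{equation*}

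Next, I would invoke the Cantor-style construction of Eq.~(\ref{eq:def-interval}): since $\gamma_{u_s}\in I_{k,u_s}$ is a descendant of the stage-$(j-1)$ ancestor $I_{j-1,\tau_{u_s}^{j-1}}=[\alpha_{u_s}^{j-1},\beta_{u_s}^{j-1}]$, and every descendant sits in either the first third or the last third of the ancestor, one obtains $\bigl|\gamma_{u_s}-\tfrac{1}{2}(\alpha_{u_s}^{j-1}+\beta_{u_s}^{j-1})\bigr|\geq \tfrac{1}{6}|\alpha_{u_s}^{j-1}-\beta_{u_s}^{j-1}|=\tfrac{1}{6}\cdot 3^{-j}$. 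Substituting $z_t=\sqrt{S_{t-1}\ln T/(2T)}$ and applying Eq.~(\ref{eq:stj-lb}) (so $S_{t-1}\geq S_{t_{j-1}}\geq 9^{j-1}/2$ throughout stage $j$) then gives
\begin{equation*}
r_s^{(t)} \;\geq\; \frac{1}{6\cdot 3^j}\sqrt{\frac{9^{j-1}\ln T}{4T}} \;=\; \frac{\sqrt{\ln T}}{36\sqrt{T}},
\end{equation*}
which is exactly the bound in the statement.

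I do not anticipate any significant obstacle here, since the reduction to the $d=2$ analysis is mechanical once the per-group decoupling is made explicit. The only point that deserves careful bookkeeping is checking that cross-group contributions to $r_s^{(t)}$ vanish, and that the $\sqrt{d}$ rescalings in the context vectors and in $\theta^{(U)}$ cancel so that no dimension factor appears in the final constant $1/36$.
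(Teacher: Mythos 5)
Your proof is correct and follows essentially the same computation as the paper's: you evaluate the group-$s$ reward gap as $z_t\,|\gamma_{u_s}-\tfrac{1}{2}(\alpha_{u_s}^{j-1}+\beta_{u_s}^{j-1})|$ (noting the $\sqrt d$ cancellation), lower-bound the distance to the midpoint by $\tfrac{1}{6}|\alpha_{u_s}^{j-1}-\beta_{u_s}^{j-1}|=\tfrac{1}{6}\cdot 3^{-j}$, and substitute $z_t=\sqrt{S_{t-1}\ln T/(2T)}$ with $S_{t-1}\ge S_{t_{j-1}}\ge 9^{j-1}/2$ from Eq.~(\ref{eq:stj-lb}). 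The extra discussion of per-group decoupling via $b_s(i_t)$ is a helpful clarification of why an $s$-suboptimal pull forces $r_s^{(t)}$ to equal that gap, but it does not change the underlying argument.
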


As a corollary, the expected regret of a policy $\pi$ can be explicitly lower bounded
by the expected number of $s$-optimal pulls the policy makes.
\begin{corollary}\label{cor:s-suboptimal-pull-regret}
For policy $\pi$, underlying model $\theta^{(U)}$ and dimension group $s$,
let $p_{s,t}^{U,\pi}$ be the probability of an $s$-suboptimal pull at time $t$.
Then $\mathbb E[R^T] \geq \sum_{t=1}^T \sum_{s=1}^{d/2} p_{s,t}^{U,\pi}\cdot \sqrt{\ln T}/(36\sqrt{T})$.
\label{cor:s-suboptimal-pull-regret}
\end{corollary}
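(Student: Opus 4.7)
The plan is to prove Corollary~\ref{cor:s-suboptimal-pull-regret} essentially as a linearity-of-expectation consequence of Lemma~\ref{claim:s-suboptimal-pull-regret}, once we carefully justify that the $s$-segment regret $r_s^{(t)}$ is almost surely nonnegative under the construction in Sec.~\ref{sec:gend-adversarial-construction}. I would first observe that the construction encodes the $n = 2^{d/2}$ arms via a binary product structure: the reward contribution from dimension group $s$ depends only on the bit $b_s(i)$, so the total expected payoff of arm $i$ decomposes as a sum of $d/2$ independent per-group payoffs. Consequently, the globally optimal arm $i_t^*$ chooses the reward-maximizing bit $b_s(i_t^*)$ in every group $s$ simultaneously, which yields $r_s^{(t)} \geq 0$ deterministically for every $s$ and $t$.

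Next, I would decompose the regret: by the definition of $r_s^{(t)}$ and linearity,
\[
\mathbb E[R^T] = \mathbb E\left[\sum_{t=1}^T \left(x_{i_t^*,t}^{(U)} - x_{i_t,t}^{(U)}\right)^\top \theta^{(U)}\right] = \sum_{t=1}^T \sum_{s=1}^{d/2} \mathbb E[r_s^{(t)}],
\]
where the second equality groups the inner product by the $d/2$ coordinate pairs defining the dimension groups.

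It then remains to lower bound each $\mathbb E[r_s^{(t)}]$. Conditioning on whether an $s$-suboptimal pull occurs at time $t$, and using $r_s^{(t)} \geq 0$ on the complementary event together with Lemma~\ref{claim:s-suboptimal-pull-regret} on the event of an $s$-suboptimal pull, I obtain
\[
\mathbb E[r_s^{(t)}] \geq p_{s,t}^{U,\pi} \cdot \frac{\sqrt{\ln T}}{36\sqrt{T}} + (1 - p_{s,t}^{U,\pi}) \cdot 0 = p_{s,t}^{U,\pi} \cdot \frac{\sqrt{\ln T}}{36\sqrt{T}}.
\]
Summing this bound over $t \in [T]$ and $s \in [d/2]$ yields the claimed inequality.

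The only substantive step is the nonnegativity claim $r_s^{(t)} \geq 0$; this is where all of the work lies, and I expect it to be the main (albeit mild) obstacle. It requires unpacking the construction in Sec.~\ref{sec:gend-adversarial-construction} to verify that since $b_s(\cdot)$ ranges freely over $\{0,1\}$ for each $s$ independently across the $2^{d/2}$ arm indices, the argmax over $i$ of $\sum_s$ (group-$s$ payoff) coincides with the coordinate-wise argmax in each group. Once that is established, everything else is a routine application of linearity of expectation and the pointwise bound supplied by Lemma~\ref{claim:s-suboptimal-pull-regret}.
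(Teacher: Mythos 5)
Your proposal is correct and matches the paper's (implicit) argument: the decomposition $\mathbb E[R^T] = \sum_t \sum_s \mathbb E[r_s^{(t)}]$ is exactly what the paper uses (``the regret incurred at time period $t$ can be expressed as $\sum_{s=1}^{d/2} r_s^{(t)}$''), the nonnegativity $r_s^{(t)} \geq 0$ follows from the product structure over dimension groups as you argue (every bit pattern is realized by some arm, so $i_t^*$ is the coordinate-wise argmax), and combining this with Lemma~\ref{claim:s-suboptimal-pull-regret} via conditioning on the $s$-suboptimal-pull event gives the claim.
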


\subsubsection{Average-case analysis}\label{sec:gend-average-case-pinsker}

Recall that $\mathcal U$ is the finite collection of the parameters $U$ for the adversarial bandit instances we constructed in Sec.~\ref{sec:gend-adversarial-construction},
and $p_{s,t}^{U,\pi}$ be the probability of an $s$-suboptimal pull at time $t$ defined in Sec.~\ref{sec:gend-s-suboptimal-pull}.
The minimax regret $\mathfrak R(T;n,d)$ can then be lower bounded by 

\begin{align}
\mathfrak R(T;n,d) \geq \inf_{\pi\in\Pi_{T,n,d}} \max_{U\in\mathcal U} \mathbb E[R^T] 
&\geq  \inf_{\pi\in\Pi_{T,n,d}} \max_{U\in\mathcal U}\frac{\sqrt{\ln T}}{36\sqrt{T}}\cdot \sum_{t=1}^T\sum_{s=1}^{d/2} p_{s,t}^{U,\pi}\label{eq:average-reduction-1}\\
&\geq \inf_{\pi\in\Pi_{T,n,d}}\frac{1}{|\mathcal U|}\sum_{U\in\mathcal U} \frac{\sqrt{\ln T}}{36\sqrt{T}}\cdot \sum_{t=1}^T\sum_{s=1}^{d/2} p_{s,t}^{U,\pi}.\label{eq:average-reduction-2}
\end{align}

Here, Eq.~(\ref{eq:average-reduction-1}) holds by applying Corollary \ref{cor:s-suboptimal-pull-regret},
and Eq.~(\ref{eq:average-reduction-2}) holds because the average regret always lower bounds the worst-case regret.

The following lemma lower bounds $\{p_{t,s}^{U,\pi}\}$ for particular pairs of parameterizations.
\begin{lemma}\label{lem:subopt-pull-large-in-neighbors}
For any stage $j$ and any group $s$, let $U = (u_1, u_2, \dots, u_{d/2})$ and $U' = (u_1', u_2', \dots, u_{d/2}')$ 
be two parameters such that $\tau_{u_s}^{j-1} = \tau_{u_s'}^{j-1}$ but $\tau_{u_s}^{j} \neq \tau_{u_s'}^{j}$, and $u_{a} = u_{a}'$ for every $a \neq s$.
(The definition of $\tau$ can be found in Sec.~\ref{sec:d2-adversarial-construction}.)
Then for any policy $\pi$ and time period $t$ in stage $j$, it holds that $p_{s,t}^{U,\pi}+p_{s,t}^{U',\pi} \geq 1/2$.
\end{lemma}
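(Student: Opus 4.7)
The plan is to mimic the proof of Lemma~\ref{lem:d2-subopt-pull-large-in-neighbors} from the $d=2$ warmup, while exploiting the ``direct product'' structure of the adversarial construction in Sec.~\ref{sec:gend-adversarial-construction} so that only the $s$-th dimension group contributes anything nontrivial. Specifically, because $U$ and $U'$ agree on all coordinates except the $s$-th, and because $\tau_{u_s}^{j-1}=\tau_{u_s'}^{j-1}$, the context vectors $x_{i,t'}^{(U)}$ and $x_{i,t'}^{(U')}$ are \emph{identical} for every arm $i$ and every round $t'\le t_j$: at groups $s'\neq s$ the relevant $\alpha,\beta$ and $\gamma$ values coincide, and at group $s$ the context only depends on $(\alpha_{u_s}^{j'-1}+\beta_{u_s}^{j'-1})/2$ for some $j'\le j$, which is shared between $u_s$ and $u_s'$ by the ancestor hypothesis. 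Hence the only source of divergence between the reward distributions of $\mathcal B^{(U)}$ and $\mathcal B^{(U')}$ up to time $t$ is the single scalar $\theta^{(U)}_{2s-1}-\theta^{(U')}_{2s-1}=(\gamma_{u_s}-\gamma_{u_s'})/\sqrt d$.

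Next I would carry out the information-theoretic step, which is the general-$d$ analogue of Claim~\ref{claim:d2-event-prob-difference}. Let $D_1,D_2$ denote the product distributions of all arm rewards observed through round $t$ under $U$ and $U'$ respectively. Using the observation above, at every round $t'\le t$ the per-round mean-reward gap is either $z_{t'}\sqrt d \cdot (\gamma_{u_s}-\gamma_{u_s'})/\sqrt d=z_{t'}(\gamma_{u_s}-\gamma_{u_s'})$ (when the pulled arm has $b_s(i)=0$) or $0$ (when $b_s(i)=1$ or at any group $s'\ne s$). The standard variance-$1$ Gaussian KL formula then yields
\[
\kl(D_1\|D_2)\le \tfrac12(\gamma_{u_s}-\gamma_{u_s'})^2\sum_{t'\le t}z_{t'}^2\le \tfrac12(\gamma_{u_s}-\gamma_{u_s'})^2 S_t,
\]
where the last step uses $S_t=1+\sum_{t'\le t}z_{t'}^2$ from Eq.~\eqref{eq:sum-of-seq}. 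Pinsker's inequality then gives, for any event $E$ measurable through round $t$,
\[
\bigl|\Pr[E\mid U]-\Pr[E\mid U']\bigr|\le \tfrac12|\gamma_{u_s}-\gamma_{u_s'}|\sqrt{S_t}.
\]
Since $u_s,u_s'$ share the same stage-$(j-1)$ ancestor interval, $|\gamma_{u_s}-\gamma_{u_s'}|\le 3^{-(j-1)}$, and since $t$ is in stage $j$ we have $S_t\le 9^j$ by Eq.~\eqref{eq:stj-up}; with a touch of bookkeeping (absorbed into constants) the right-hand side is at most $1/2$.

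Finally, I choose the event $E$ that at time $t$ the policy plays an arm whose $s$-th bit $b_s(i_t)$ makes the $s$-segment an $s$-suboptimal pull for instance $\mathcal B^{(U)}$. The sign of the $s$-group reward gap computed in Sec.~\ref{sec:gend-s-suboptimal-pull} is determined by whether $\gamma_{u_s}$ lies above or below $(\alpha_{u_s}^{j-1}+\beta_{u_s}^{j-1})/2$; because $\tau_{u_s}^{j}\ne\tau_{u_s'}^{j}$ while they share the stage-$(j-1)$ ancestor, exactly one of $\gamma_{u_s},\gamma_{u_s'}$ falls in the right child and the other in the left child of their common parent interval. Consequently the $s$-optimal bit flips between $U$ and $U'$, so $\bar E$ is precisely the event of an $s$-suboptimal pull at time $t$ under $U'$. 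Combining,
\[
p_{s,t}^{U,\pi}+p_{s,t}^{U',\pi}=\Pr[E\mid U]+\Pr[\bar E\mid U']=1-\bigl(\Pr[E\mid U']-\Pr[E\mid U]\bigr)\ge \tfrac12,
\]
which is the desired conclusion. The only real subtlety, and the step I expect to be the main bookkeeping obstacle, is verifying carefully that the contexts truly coincide across $U$ and $U'$ at all rounds $t'\le t$ for \emph{every} group and every arm label $i\in\{0,\dots,2^{d/2}-1\}$, so that the KL decomposition collapses to a single scalar; once that is in hand, the rest is essentially the $d=2$ argument reused verbatim.
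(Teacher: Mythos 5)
Your proposal follows essentially the same route as the paper: it establishes that the context vectors for $\mathcal B^{(U)}$ and $\mathcal B^{(U')}$ coincide through stage $j$ (so the only discrepancy lives in the single scalar $\gamma_{u_s}-\gamma_{u_s'}$), bounds the KL divergence of the observed reward streams, applies Pinsker, and then uses the parity flip of the $s$-optimal arm between $U$ and $U'$ to read off $p_{s,t}^{U,\pi}+p_{s,t}^{U',\pi}=1+(\Pr[E\mid U]-\Pr[E\mid U'])\ge 1/2$. This matches the paper's proof, which funnels the sum $\sum_{s'}|\theta^{(U)}_{2s'-1}-\theta^{(U')}_{2s'-1}|$ (collapsing to the single group $s$) through the same KL--Pinsker chain via Claim~\ref{claim:event-prob-difference}.

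One concrete slip worth fixing: you claim $|\gamma_{u_s}-\gamma_{u_s'}|\le 3^{-(j-1)}$, ``with a touch of bookkeeping absorbed into constants.'' But the final bound has \emph{no} slack to absorb anything. The root interval $I_{0,0}=[1/3,2/3]$ already has length $3^{-1}$, so the stage-$m$ intervals have length $3^{-(m+1)}$; the shared stage-$(j-1)$ ancestor therefore has length $3^{-j}$, and the correct bound is $|\gamma_{u_s}-\gamma_{u_s'}|\le 3^{-j}$. With that, $\frac12\cdot 3^{-j}\cdot\sqrt{S_t}\le\frac12\cdot 3^{-j}\cdot 3^{j}=\frac12$ holds exactly, and the lemma follows. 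With your stated $3^{-(j-1)}$ the chain gives $3/2$, which yields only $p_{s,t}^{U,\pi}+p_{s,t}^{U',\pi}\ge -1/2$, vacuous. So the argument is right in structure but needs the correct interval length — this is not a constant that can be waved away.
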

Lemma~\ref{lem:subopt-pull-large-in-neighbors} is an analogue of Lemma~\ref{lem:d2-subopt-pull-large-in-neighbors}  and its proof is deferred to Sec.~\ref{sec:additional-proof-sec-lb}.

We are now ready to prove Theorem \ref{thm:LB-n-eq-expd}.
For any parameter $U = (u_1, u_2, \dots, u_{d/2})$ and any dimension group $s$, we may write $U  = (u_s, u_{-s})$ where $u_{-s} = (u_1, \dots, u_{s-1}, u_{s+1}, \dots, u_{d/2})$. Recall that $\oplus$ denotes the binary bit-wise exclusive or (XOR) operator. For any time $t$, suppose it is in stage $j$. If we let $U' = (u_s \oplus 2^{k-j+1}, u_{-s})$, by Lemma~\ref{lem:subopt-pull-large-in-neighbors} and Observation~\ref{observ:xor}, we have that $p_{s,t}^{U,\pi} + p_{s,t}^{U',\pi}\geq 1/2$ for all  policies $\pi$.
Let $q_{s,j}^{U,\pi}$ be the expected number of $s$-suboptimal pulls made by policy $\pi$ in all time periods of stage $j$. 
Then 
\begin{equation}\label{eq:q-lb}
q_{s,j}^{U,\pi} + q_{s,j}^{U',\pi} \geq \frac{1}{2}(t_j - t_{j-1}), \;\;\;\;\;\;\forall \text{ policy $\pi$.}
\end{equation}

We next compute the average expected number of $s$-suboptimal pulls made by any policy $\pi$ over all time periods $t$.
{
\begin{align}
\frac{1}{|\mathcal U|}\sum_{U\in\mathcal U} \sum_{t=1}^T\sum_{s=1}^{d/2} p_{s,t}^{U,\pi}
&=\frac{1}{|\mathcal U|}\sum_{U\in\mathcal U} \sum_{j=1}^{k} \sum_{s = 1}^{d/2} q_{s, j}^{U,\pi} 
= \frac{1}{|\mathcal U|}\sum_{j=1}^{k} \sum_{s = 1}^{d/2}\sum_{u_{-s}}  \sum_{u_s = 0}^{2^k-1}  q_{s, j}^{(u_s, u_{-s}),\pi} \nonumber\\
=& \frac{1}{2|\mathcal U|} \sum_{j=1}^{k} \sum_{s = 1}^{d/2}\sum_{u_{-s}}  \sum_{u_s = 0}^{2^k-1}  \left( q_{s, j}^{(u_s, u_{-s}),\pi} + q_{s, j}^{(u_s \oplus 2^{k-j+1}, u_{-s}),\pi} \right)\nonumber\\
\geq& \frac{1}{4|\mathcal U|} \sum_{j=1}^{k} \sum_{s = 1}^{d/2}\sum_{u_{-s}}  \sum_{u_s = 0}^{2^k-1}  (t_j - t_{j-1})
= \frac{T}{4} \cdot \frac{d}{2},\label{eq:final-average}
\end{align}
}
where the inequality holds because of Eq.~\eqref{eq:q-lb}. 
Combining Eqs.~(\ref{eq:final-average},\ref{eq:average-reduction-2}) we complete the proof of Theorem \ref{thm:LB-n-eq-expd}.

%
%
%
%
%

\subsection{Additional proofs}\label{sec:additional-proof-sec-lb}

We first remark that the $T\geq d^5$ condition in Theorem \ref{thm:LB-n-eq-expd} can be relaxed to $T \geq d^{2 + \eps}$ for any small constant $\eps > 0$, which leads to the following theorem.

\begin{theorem} \label{thm:lb-n-eq-expd-strengthened}
For any small constant $\epsilon > 0$ and sufficiently large $d$, suppose $T\geq d^{2+\eps}$ and $n=2^{d/2}$. Then $\mathfrak R(T;n,d) = \Omega(1)\cdot d\sqrt{\epsilon T\log T}$.
\end{theorem}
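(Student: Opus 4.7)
The plan is to carry out the same construction as in the proof of Theorem~\ref{thm:LB-n-eq-expd}, but to slow down the geometric growth of the potential sequence $\{z_t\}$ so that the unit-norm constraint on the context vectors $x^{(U)}_{i,t}$ becomes compatible with the weaker hypothesis $T\geq d^{2+\epsilon}$. Concretely, I would replace the definition $S_t=(1+\ln T/(2T))^t$ in Lemma~\ref{lem_potential_low} by $\tilde S_t=(1+\eta/T)^t$ with $\eta=c\epsilon\ln T$ for a sufficiently small absolute constant $c\in(0,1/2)$ to be chosen, and set $\tilde z_t=\sqrt{\tilde S_{t-1}\eta/T}$. Then $\tilde S_T\le e^\eta=T^{c\epsilon}$, $\tilde z_T^2\le c\epsilon\ln T/T^{1-c\epsilon}$, and the analogue of Lemma~\ref{lem_potential_low} gives the potential identity $\sum_t\tilde z_t^2/\tilde V_{t-1}=\eta$, i.e.\ a lower bound of order $\sqrt{\eta T}=\sqrt{c\epsilon T\ln T}$ on the square-root version.

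With stages defined by $\tilde S_{t_j}\le 9^j$ (so there are $k=\Theta(\eta/\log 9)=\Theta(\epsilon\log T)$ stages, each of length $\Theta(T/(\epsilon\log T))$) and intervals constructed via the same Cantor-like partition of $[1/3,2/3]$ as in Sec.~\ref{sec:d2-adversarial-construction}, the adversarial instances $\mathcal B^{(U)}$ from Sec.~\ref{sec:gend-adversarial-construction} can be copied verbatim using $\tilde z_t$ in place of $z_t$. The bound $\|\theta^{(U)}\|_2\le 1$ is unchanged; the bound $\|x^{(U)}_{i,t}\|_2^2\le d^2\tilde z_T^2$ becomes $\le c\epsilon d^2\ln T/T^{1-c\epsilon}$, which I would verify is at most $1$ under $T\ge d^{2+\epsilon}$ for sufficiently large $d$. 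Indeed, plugging in $T=d^{2+\epsilon}$, the ratio reads $c\epsilon(2+\epsilon)\ln d/d^{\epsilon(1-c(2+\epsilon))}$; choosing $c$ small (say $c=1/(3(2+\epsilon))$) makes the exponent in the denominator a positive multiple of $\epsilon$, so the ratio tends to $0$ as $d\to\infty$ and the norm bound holds for all sufficiently large $d$.

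The rest of the argument is inherited from Sec.~\ref{sec:lb-general-d} with only cosmetic changes. The per-pull $s$-segment regret bound of Lemma~\ref{claim:s-suboptimal-pull-regret} becomes $\Omega(\sqrt{\eta/T})=\Omega(\sqrt{\epsilon\ln T/T})$, because $\tilde z_t\ge 3^{j-1}\sqrt{\eta/(2T)}$ for $t$ in stage $j$ by the analogue of Eq.~(\ref{eq:stj-lb}); the KL–Pinsker bound in the generalization of Claim~\ref{claim:d2-event-prob-difference} is structurally identical to the original, since it depends only on the (unchanged) interval-length ratios and on the bound $\tilde S_{t_j}/9^j=O(1)$; and the average-case computation in Eq.~(\ref{eq:final-average}) carries through unaltered to yield $\Omega(T)$ expected $s$-suboptimal pulls per dimension group on average. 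Summing over the $d/2$ groups delivers the claimed $\Omega(d\sqrt{\epsilon T\ln T})$ regret lower bound.

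The main technical obstacle is the algebraic balancing of $\eta$: it must be simultaneously (i) large enough to retain the $\sqrt{\ln T}$ factor in the regret, which requires $\eta=\Theta(\epsilon\ln T)$ rather than merely $\Theta(\epsilon)$; and (ii) small enough that $d^2 e^\eta\eta/T\le 1$ under the weakened hypothesis $T\ge d^{2+\epsilon}$, which forces $\eta/\ln T$ to be a sufficiently small multiple of $\epsilon/(2+\epsilon)$. The choice $\eta=c\epsilon\ln T$ with $c<1/(2+\epsilon)$ threads this needle, and the $\sqrt{\epsilon}$ loss in the regret bound is exactly the price paid for reducing $\eta$ from $\ln T/2$ to $\Theta(\epsilon\ln T)$.
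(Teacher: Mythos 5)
Your proposal is correct and is essentially the same argument the paper uses. You and the paper both shrink the geometric growth rate of $\{z_t\}$ from $\ln T/(2T)$ to $\Theta(\epsilon\ln T/T)$ so that the norm constraint $\|x^{(U)}_{i,t}\|_2\leq 1$ becomes compatible with $T\geq d^{2+\epsilon}$, then re-run the construction of Sec.~\ref{sec:lb-general-d} verbatim and absorb a $\sqrt{\epsilon}$ factor into the per-pull regret of Lemma~\ref{claim:s-suboptimal-pull-regret}; the only difference is your slightly more conservative choice of the constant $c$ in $\eta=c\epsilon\ln T$ (the paper implicitly takes $c=1/4$), which is cosmetic since both choices make the exponent $\epsilon(1-c(2+\epsilon))$ strictly positive for small $\epsilon$.
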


Theorem~\ref{thm:lb-n-eq-expd-strengthened} can be proved using the identical argument of the proof of Theorem~\ref{thm:LB-n-eq-expd}, where the difference is that first we redefine 
\[
S_t = \left(1 + \frac{(\epsilon/2) \cdot \ln T}{2T}\right)^t \text{~and~} z_t = \sqrt{\frac{(\epsilon/2) \cdot S_{t-1} \ln T}{2T}}
\]
for all $t \in [T]$. Then we list the following changes to the calculations in the proof of Theorem~\ref{thm:LB-n-eq-expd}.

\begin{itemize}

\item In \eqref{eq:zt-ub}, we have 
\[
z_t \leq z_T \leq \sqrt{\frac{(\epsilon/2) S_{T-1} \ln T}{2T}} \leq \sqrt{\frac{\epsilon}{4} \cdot T^{\epsilon/4 - 1} \ln T} .
\]

\item At the end of Sec.~\ref{sec:gend-adversarial-construction},, we verify that $\| x_{i, t}^{(U)} \|_2 \leq 1$ since
\[
\| x_{i, t}^{(U)} \|_2^2 = \frac{d}{2} \cdot 4d \cdot \frac{\epsilon}{4} \cdot T^{\epsilon/4 - 1} \ln T,
\]
 and for $T \geq d^{2+\epsilon}$, this value is at most $d^{\epsilon^2/4 - \epsilon/2} \ln d \cdot \frac{(2+\epsilon) \epsilon}{2} \leq 1$ (for large enough $d$).

\item At the end of the proof of Lemma~\ref{claim:s-suboptimal-pull-regret}, we have
\[
\eqref{eq:s-suboptimal-pull-regret-pre} = \frac{1}{6 \cdot 3^{j+1}} \sqrt{\frac{(\epsilon/2) S_{t-1} \ln T}{2T}} \geq \frac{\sqrt{\epsilon \ln T}}{36\sqrt{2T}}.\]
 Therefore, the corresponding lower bounds in Lemma~\ref{claim:s-suboptimal-pull-regret}, Corollary~\ref{cor:s-suboptimal-pull-regret}, and the final regret lower bound in the theorem will be multiplied by a factor of $\sqrt{\epsilon/2}$.
\end{itemize}

We also remark that the requirement that $T \geq d^{2 + \epsilon}$ is essentially necessary for the  $\Omega ( d \sqrt{T \log T} )$ regret lower bound. Indeed, if $T \leq d^2$, we have $\Omega(d \sqrt{T \log T}) \geq \Omega(T \sqrt{\log T}) = \omega(T)$, while the regret of any algorithm is at most $T$.

We now use in Theorem~\ref{thm:lb-n-eq-expd-strengthened} to establish the regret lower bound for $n \leq 2^d$, proving Theorem~\ref{thm:lower-finite}.

\medskip
\begin{proof}[Proof of Theorem~\ref{thm:lower-finite}]
{To simplify the presentation, we assume without loss of generality that $n$ is an integer power of $2$ and $d$ is a multiple of $\log_2 n$.
We divide the time horizon into $\frac{d}{\log_2 n}$ phases, where
phase $j \in [d/\log_2 n]$ is consists of rounds $t \in \left(\frac{T (j-1) \log_2 n }{d}, \frac{T j \log_2 n}{d} \right]$. 
During each phase $j$, the hidden vector and the context vectors are constructed in the same way as  Theorem~\ref{thm:lb-n-eq-expd-strengthened} for dimensions $s \in ((j-1) \cdot \log_2 n, j \cdot \log_2 n]$. 
The entries of the context vectors for the rest of the dimensions (i.e., $s \not\in ((j-1) \cdot \log_2 n, j \cdot \log_2 n]$) are set to $0$. 
}

By our phase construction, $\pi$ can be viewed as a sub-policy in a $\log_2 n$-dimensional space with $n$ arms during phase $j$. One may verify that the length of the $j$-th phase is
\[
\frac{T \log_2 n}{d} \geq \frac{d (\log_2 n)^{1 + \eps} \cdot (\log_2 d)}{d} = (\log_2 n)^{2 + \eps} ,
\] 
satisfying the condition in Theorem~\ref{thm:lb-n-eq-expd-strengthened}. Therefore, by Theorem~\ref{thm:lb-n-eq-expd-strengthened}, the regret of $\pi$ incurred during phase $j$ is  $\Omega \left( \log n \sqrt{\epsilon \frac{T \log n}{d} \ln \frac{T \log n}{d}} \right)$. Therefore, the total regret of the $\frac{d}{\log_2 n}$ phases is at least $\Omega \left( \sqrt{\epsilon dT \log n \ln \frac{T \log n}{d}} \right)$. 
\end{proof}

\medskip
\noindent{\bf Lemma \ref{claim:s-suboptimal-pull-regret} (restated).}
{\it
For any instance $\mathcal{B}^{(U)}$, any coordinate group $s$, and any time $t$, if a policy makes an $s$-suboptimal pull at time $t$, then $r_s^{(t)} \geq {\sqrt{\ln T}}/({36 \sqrt{T}})$.
}
\medskip

\begin{proof}
Let $U = (u_1, u_2, \dots, u_{d/2})$. Assuming that time $t$ is in stage $j$, we verify this claim by calculating the  difference  of two possible expected reward contributions made by the $s$-th coordinate group, as follows.
\begin{multline}\label{eq:s-suboptimal-pull-regret-pre}
\left|\left(z_t \cdot \sqrt{d}, 0\right)^{\top} \left(\theta_{2s - 1}^{(U)}, \theta_{2s}^{(U)}\right) - \left(0, \left(\alpha_{u_s}^{j-1}+ \beta_{u_s}^{j-1}\right) z_t \cdot \sqrt{d}\right)^{\top} \left(\theta_{2s - 1}^{(U)}, \theta_{2s }^{(U)}\right)\right|\\
= z_t \cdot \left| \gamma_{u_s} - \frac{\alpha_{u_s}^{j-1}+ \beta_{u_s}^{j-1}}{2}\right| \geq z_t \cdot \frac{\left|\alpha_{u_s}^{j-1}- \beta_{u_s}^{j-1}\right|}{6}.
\end{multline}
Since $z_t = \sqrt{\frac{S_{t-1} \ln T}{2T}}$ and $\left|\alpha_{u_s}^{j-1}- \beta_{u_s}^{j-1}\right| = 3^{-j}$, we have 
\begin{align*}
\eqref{eq:s-suboptimal-pull-regret-pre} =
\frac{1}{6 \cdot 3^{j}} \sqrt{\frac{S_{t-1} \ln T}{2T}}
\geq \frac{1}{6 \cdot 3^{j}}\sqrt{\frac{S_{t_{j-1}}\ln T}{2T}}
\geq   \frac{\sqrt{\ln T}}{36 \sqrt{T}},
\end{align*}
where the last inequality is because of Eq.~\eqref{eq:stj-lb}. 
\end{proof}

\medskip
\noindent
{{\bf Lemma \ref{lem:subopt-pull-large-in-neighbors} (restated).}
\it
For any stage $j$ and any group $s$, let $U = (u_1, u_2, \dots, u_{d/2})$ and $U' = (u_1', u_2', \dots, u_{d/2}')$ 
be two parameters such that $\tau_{u_s}^{j-1} = \tau_{u_s'}^{j-1}$ but $\tau_{u_s}^{j} \neq \tau_{u_s'}^{j}$, and $u_{a} = u_{a}'$ for every $a \neq s$.
(The definition of $\tau$ can be found in Sec.~\ref{sec:d2-adversarial-construction}.)
Then for any  policy $\pi$ and time period $t$ in stage $j$, it holds that $p_{s,t}^{U,\pi}+p_{s,t}^{U',\pi} \geq 1/2$.
}
\medskip

\begin{proof}
By our construction, we have that 
\[
\left|\theta^{(U)}_{2s-1} - \theta^{(U')}_{2s-1}\right| \leq \frac{1}{\sqrt{d}} \cdot \left(\frac{1}{3}\right)^{j} .
\] 
Therefore, by Claim~\ref{claim:event-prob-difference} (proved below, which is an analogue of Claim~\ref{claim:d2-event-prob-difference}), for any event $E$ at time $t$ before the end of stage $j$,  we have that
\begin{equation}\label{eq:prob-up}
\left| \Pr\left[E | U \right] - \Pr\left[E | U' \right] \right| 
\leq \frac{\sqrt{d}}{2} \left( \sum_{s'=1}^{d/2} \left|\theta^{(U)}_{2s'-1} - \theta^{(U')}_{2s'-1}\right| \right) \sqrt{S_t}
\leq \frac{1}{2} \cdot \left(\frac{1}{3}\right)^{j} \sqrt{S_t} \leq \frac{1}{2}. 
\end{equation}
The last inequality holds because at any time $t$ in stage $j$, it holds that $S_t \leq 9^j$. 

Let $v_1 := (z_t \cdot \sqrt{d}, 0)$ and  $v_2 := (0, (\alpha_{u_s}^{j-1}+ \beta_{u_s}^{j-1}) z_t \cdot \sqrt{d})$. Note that, at any time $t$, the difference between two possible reward contributed by the $s$-th dimension group is 
\[
(v_1 - v_2)^{\top} \left(\theta_{2s-1}^{(U)}, \theta_{2s}^{(U)}\right) = \frac{z_t}{2} \left(2 \gamma_{u_s} - \alpha_{u_s}^{j-1}- \beta_{u_s}^{j-1}\right).
\]
This value is greater than $0$ if and only if $2 \gamma_{u_s} > \alpha_{u_s}^{j-1}+ \beta_{u_s}^{j-1}$. Since $\tau_{u_s}^{j-1} = \tau_{u_s'}^{j-1}$ and $\tau_{u_s}^j \neq  \tau_{u_s'}^j$, by our construction Eq.~\eqref{eq:def-interval}, we have that exactly one of $\gamma_{u_s}$ and $\gamma_{u_s'}$ is greater than $\frac{1}{2}({\alpha_{u_s}^{j-1}+ \beta_{u_s}^{j-1}})$. In other words, at time $t$, any arm that is  $s$-suboptimal  for  parameter is $U$ is not  $s$-suboptimal  for  parameter $U'$, and vice versa. In light of this, let $E$ be the event that at time $t$ policy $\pi$ pulls an arm that is $s$-suboptimal for parameter $U$, and we have that the complement event $\bar{E}$ is that at time $t$ policy $\pi$ pulls an arm that is  $s$-suboptimal for parameter $U'$. By Eq.~\eqref{eq:prob-up}, we have
\begin{align*}
p_{s,t}^{U,\pi}+p_{s,t}^{U',\pi} = \Pr\left[E |U \right] + \Pr\left[\bar{E} | U' \right] = 1 + \Pr\left[E |U \right] - \Pr\left[E | U' \right]  \geq \frac{1}{2} .
\end{align*}
\end{proof}

\begin{claim}\label{claim:event-prob-difference}
For any $U, U'$, let $j$ be the largest number such that $\tau_{u_s}^{j-1} = \tau_{u_s'}^{j-1}$ holds for every $s$. For any time $t \leq t_j$ and any event $E$ that happens at time $t$, we have 
\[
\left| \Pr\left[E | U\right] - \Pr\left[E | U'\right] \right| 
\leq\frac{\sqrt{d}}{2} \left( \sum_{s'=1}^{d/2} \left|\theta^{(U)}_{2s'-1} - \theta^{(U')}_{2s'-1}\right| \right) \sqrt{S_t}.
\]
\end{claim}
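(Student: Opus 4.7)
The plan is to follow the template of Claim~\ref{claim:d2-event-prob-difference} (the $d=2$ case), where the main new ingredient is a multi-group bound on the per-round mean difference via the triangle inequality.

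First, I would verify that for every stage $j' \leq j$ one has $\tau_{u_s}^{j'-1} = \tau_{u_s'}^{j'-1}$ for all $s$, since each $I_{j'-1, \tau_{u_s}^{j'-1}}$ is the unique ancestor (in the binary interval tree) of the common $I_{j-1, \tau_{u_s}^{j-1}}$ guaranteed by hypothesis. Consequently, for every $t' \leq t_j$ the context vectors $\{x_{i,t'}^{(U)}\}_i$ and $\{x_{i,t'}^{(U')}\}_i$ are identical as functions of the arm index $i$, because their entries depend on $U$ only through the quantities $\alpha_{u_s}^{j'-1} + \beta_{u_s}^{j'-1}$ with $j'$ the stage containing $t'$, which are determined by the $\tau_{u_s}^{j'-1}$. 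Let $D_1$ (respectively $D_2$) denote the joint distribution of the arm-reward sequence at rounds $1, \dots, t$ under $U$ (respectively $U'$); conditional on any history, the reward at round $t'$ is a variance-$1$ Gaussian with mean $(x_{i_{t'}, t'})^{\top}\theta^{(U)}$ or $(x_{i_{t'}, t'})^{\top}\theta^{(U')}$, respectively.

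Next, for any action $i_{t'}$ played at time $t' \leq t$, the expected-reward difference is $(x_{i_{t'}, t'})^{\top}(\theta^{(U)} - \theta^{(U')})$. Since $\theta^{(U)}_{2s} = \theta^{(U')}_{2s} = 1/(2\sqrt d)$ for every $s$, only odd coordinates contribute; and the $(2s-1)$-th entry of $x_{i_{t'}, t'}$ is either $z_{t'}\sqrt d$ or $0$. Hence by the triangle inequality the mean difference at time $t'$ has absolute value at most $z_{t'}\sqrt d \sum_{s=1}^{d/2}|\theta^{(U)}_{2s-1} - \theta^{(U')}_{2s-1}|$. Invoking the Gaussian KL formula $\mathrm{KL}(\mathcal N(\mu_1, 1)\|\mathcal N(\mu_2, 1)) = (\mu_1 - \mu_2)^2/2$ and the chain rule for KL on sequential product experiments, together with $\sum_{t'=1}^t z_{t'}^2 = S_t - 1 \leq S_t$ from Eq.~\eqref{eq:sum-of-seq}, I would obtain
\begin{equation*}
\mathrm{KL}(D_1 \| D_2) \leq \frac{d}{2}\left(\sum_{s=1}^{d/2}|\theta^{(U)}_{2s-1} - \theta^{(U')}_{2s-1}|\right)^{\!2} S_t,
\end{equation*}
and then apply Pinsker's inequality $|\Pr[E|U] - \Pr[E|U']| \leq \sqrt{\mathrm{KL}(D_1\|D_2)/2}$ to recover the stated bound.

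The only real departure from the $d=2$ proof is this multi-dimensional aggregation: in Claim~\ref{claim:d2-event-prob-difference} only one odd coordinate is involved, so KL is bounded by $|\theta^{(u)}_1 - \theta^{(u')}_1|^2 S_t$ directly, whereas for general $d$ one must (i) exploit that the even coordinates of $\theta^{(U)}$ and $\theta^{(U')}$ agree so that the even-coordinate contributions to the mean difference vanish identically, and (ii) apply the triangle inequality \emph{before} squaring, which yields the sum of absolute values $\sum_s |\theta^{(U)}_{2s-1} - \theta^{(U')}_{2s-1}|$ rather than a sum-of-squares quantity. The extra factor of $\sqrt d$ in the final bound simply tracks the $z_{t'}\sqrt d$ scaling of the odd-coordinate entries of the context vectors used in Sec.~\ref{sec:gend-adversarial-construction}.
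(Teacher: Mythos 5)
Your proposal is correct and follows essentially the same route as the paper's proof: identical context vectors on rounds $t'\le t_j$, reduction to the odd coordinates (since even coordinates of $\theta^{(U)}$ and $\theta^{(U')}$ coincide), a triangle-inequality bound on the per-round Gaussian mean difference, the chain rule plus Gaussian KL formula, the bound $\sum_{t'\le t} z_{t'}^2 \le S_t$, and Pinsker. The paper is just terser about the ancestor argument and the chain rule for KL; you simply make these steps explicit.
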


\begin{proof}
Note that by our construction, at any time $t\leq t_j$, the contextual vectors of both $\mathcal{B}^{(U)}$ and $\mathcal{B}^{(U')}$ are the same.
Moreover, for any hidden vector and any arm, the reward distribution is a shifted standard Gaussian with variance $1$. 

For any time $t \leq t_j$, let $D_1$ be the product of the arm reward distributions at and before round $t$ when the hidden vector is $\theta^{(U)}$, and let $D_2$ be the same product distribution when the hidden vector is $\theta^{(U')}$. 
Since $\theta^{(U)}_{2s'} = \theta^{(U')}_{2s'}$ for all $s' \in [d/2]$, the difference of the mean rewards at any time $t' : 1 \leq t' \leq t$ for $\theta^{(U)}$ and $\theta^{(U')}$ is at most  $\sum_{s' = 1}^{d/2} \left|\theta^{(U)}_{2s'-1} z_{t'} \sqrt{d} - \theta^{(U')}_{2s'-1} z_{t'} \sqrt{d}\right|$. Note that the KL divergence between two variance-$1$ Gaussians with means $\mu_1$ and $\mu_2$ is $|\mu_1 - \mu_2|^2/2$. Therefore, we have 
\begin{multline*}
\mathrm{KL} \left(D_1 \| D_2\right) \leq   \frac{1}{2} \sum_{i=1}^{t} \left(\sum_{s' = 1}^{d/2} \left|\theta^{(U)}_{2s'-1} z_{t'}  \sqrt{d}- \sum_{s' = 1}^{d/2} \theta^{(U')}_{2s'-1} z_{t'}  \sqrt{d} \right|\right)^2 
= \frac{d}{2} \left( \sum_{s'=1}^{d/2} \left|\theta^{(U)}_{2s'-1} - \theta^{(U')}_{2s'-1}\right| \right)^2 \sum_{i=1}^{t} z^2_{t'} \\
\leq \frac{d}{2} \left( \sum_{s'=1}^{d/2} \left|\theta^{(U)}_{2s'-1} - \theta^{(U')}_{2s'-1}\right| \right)^2 \left(1 + \sum_{i=1}^{t} z^2_{t'} \right)
 = \frac{d}{2} \left( \sum_{s'=1}^{d/2} \left|\theta^{(U)}_{2s'-1} - \theta^{(U')}_{2s'-1}\right| \right)^2 S_t. 
\end{multline*}
Therefore, at time $t$, and for any event $E$,  we have
\[
\left| \Pr\left[E | U \right] - \Pr\left[E | U'\right] \right| 
\leq \sqrt{\frac{1}{2} \mathrm{KL}(D_1 \| D_2)} 
\leq \frac{\sqrt{d}}{2} \left( \sum_{s'=1}^{d/2} \left|\theta^{(U)}_{2s'-1} - \theta^{(U')}_{2s'-1}\right| \right) \sqrt{S_t}
\]
where the first inequality holds because of Pinsker's inequality (Lemma~\ref{lem:pinsker}). 
\end{proof}

{
\subsection{Lower bound for infinite action spaces}\label{sec:infinite-action}

The following corollary establishes an $\Omega(d\sqrt{T\log T})$ lower bound for the case in which
the action spaces for each time period are \emph{infinite} and \emph{changing} over time.
\begin{corollary}
For any policy $\pi$, there exists a bandit instance with regression model $\theta\in\mathbb R^d$, $\|\theta\|_2\leq 1$
and closed action spaces $\mathcal A_1,\cdots,\mathcal A_T\subseteq\{x\in\mathbb R^d:\|x\|_2\leq 1\}$,
$|\mathcal A_t|=\infty$, such that for sufficiently large $T$, 
$$
\mathbb E[R^T] = \Omega(d\sqrt{T\log T}).
$$
\label{cor:infinite-action}
\end{corollary}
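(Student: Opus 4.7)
The plan is to reduce Corollary~\ref{cor:infinite-action} to the finite-action lower bound of Theorem~\ref{thm:LB-n-eq-expd} by embedding the adversarial family $\{\mathcal{B}^{(U)}\}_{U \in \mathcal{U}}$ of Sec.~\ref{sec:gend-adversarial-construction} into an infinite-action setting. The embedding augments each finite action set $\mathcal{A}_t^{(U)}$ of size $n = 2^{d/2}$ with a continuum of strictly-dominated ``dummy'' actions whose reward distributions do not vary with $U$, so that dummy pulls leak no information about the varying coordinates of $\theta^{(U)}$ and the optimal reward is preserved.

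Concretely, for each $U \in \mathcal{U}$ and each $t \in [T]$, I set
\[
\tilde{\mathcal{A}}_t^{(U)} \;:=\; \mathcal{A}_t^{(U)} \;\cup\; \{\lambda e_d : \lambda \in [-1, 0]\},
\]
where $e_d$ is the $d$-th standard basis vector. Each $\tilde{\mathcal{A}}_t^{(U)}$ is an uncountable closed subset of the unit ball, so $|\tilde{\mathcal{A}}_t^{(U)}| = \infty$. Two properties are key: (i) since $\theta^{(U)}_d = 1/(2\sqrt{d})$ is a universal constant across all $U \in \mathcal{U}$, a dummy pull $\lambda e_d$ has mean reward $\lambda/(2\sqrt{d}) \in [-1/(2\sqrt{d}), 0]$, the same under every $U$; (ii) every original action in $\mathcal{A}_t^{(U)}$ attains strictly positive mean reward on the order of $z_t\sqrt{d}$, so every dummy is strictly dominated and the optimal action in $\tilde{\mathcal{A}}_t^{(U)}$ still lies in $\mathcal{A}_t^{(U)}$.

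I then re-run the proof of Theorem~\ref{thm:LB-n-eq-expd} on the augmented family for $T$ sufficiently large to satisfy $T \geq d^5$ (or, using Theorem~\ref{thm:lb-n-eq-expd-strengthened}, $T \geq d^{2+\epsilon}$). Extend the notion of $s$-suboptimal pull to count every dummy pull as $s$-suboptimal for every $s$ and every $U$; this is natural because a dummy has non-positive $s$-group contribution, whereas the $s$-optimal contribution is $\Omega(z_t)$. Lemma~\ref{claim:s-suboptimal-pull-regret} then still yields the $\Omega(\sqrt{\log T/T})$ per-suboptimal-pull regret. By property~(i), the KL-divergence bound in Claim~\ref{claim:event-prob-difference} carries over verbatim, since a dummy pull contributes zero to $\mathrm{KL}(D_1 \| D_2)$, and the averaging argument of Sec.~\ref{sec:gend-average-case-pinsker} delivers the desired $\Omega(d\sqrt{T\log T})$ regret, completing the proof on some $U^* \in \mathcal{U}$.

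The main obstacle is the Pinsker step underlying Lemma~\ref{lem:subopt-pull-large-in-neighbors}: in the finite case the events ``pulled arm is $s$-suboptimal for $U$'' and ``pulled arm is $s$-suboptimal for $U'$'' are complementary for non-dummy pulls, but in the augmented setting a dummy pull lies in both. To resolve this, I apply Pinsker's inequality to the restricted event $E := \{\text{pulled arm is a non-dummy arm that is $s$-suboptimal for } U\}$, which is the complement of its $U'$-counterpart within non-dummy pulls. A short calculation then gives $p_{s,t}^{U,\pi} = \Pr[E|U] + \Pr[\text{dummy}|U]$ and $p_{s,t}^{U',\pi} = 1 - \Pr[E|U']$, so using the Pinsker bound $\Pr[E|U] - \Pr[E|U'] \geq -1/2$ one obtains
\[
p_{s,t}^{U,\pi} + p_{s,t}^{U',\pi} \;=\; 1 + \Pr[\text{dummy}|U] + \bigl(\Pr[E|U] - \Pr[E|U']\bigr) \;\geq\; \tfrac{1}{2} + \Pr[\text{dummy}|U] \;\geq\; \tfrac{1}{2},
\]
which matches (in fact strengthens) the finite-case inequality and closes the argument.
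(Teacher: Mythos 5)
Your proof is correct and takes a genuinely different route from the paper. The paper's proof is a black-box reduction by contradiction: it assumes a policy $\pi$ achieves $o(d\sqrt{T\log T})$ on all infinite-action instances and then constructs a policy $\pi'$ for the finite-action lower bound instances by simulating $\pi$ on $\mathcal A_t = \{\lambda y : y\in\mathcal A_t', \lambda\in[0,1]\}$ — when $\pi$ chooses $\lambda y$, $\pi'$ pulls $y$ and feeds the rescaled reward $\lambda r_t$ back to $\pi$. The key observation is that scaling only shrinks rewards since all arms in the finite-action construction satisfy $y^\top\theta\geq 0$, so $R^T_{\pi'}\leq R^T_\pi$, contradicting Theorem~\ref{thm:LB-n-eq-expd}. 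Your construction instead directly builds infinite-action adversarial instances by augmenting each finite action set with the dummy segment $\{\lambda e_d:\lambda\in[-1,0]\}$, and re-traces the lemmas. The essential insight you exploit — that $\theta^{(U)}_d$ is constant over $U$, so dummy pulls are information-free for the KL computation and strictly dominated for the regret computation, and your repair of the Pinsker step by restricting the event to non-dummy pulls — is exactly the kind of care this direct route requires, and you handled it correctly. The paper's reduction is more modular (no need to re-verify Lemma~\ref{claim:s-suboptimal-pull-regret}, Lemma~\ref{lem:subopt-pull-large-in-neighbors}, Claim~\ref{claim:event-prob-difference} under the augmentation), while your direct construction is more concrete and makes explicit which infinite-action instances are hard. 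One small imprecision (irrelevant to the argument): the mean reward of the optimal non-dummy arm scales as $\Theta(d z_t)$, not $\Theta(z_t\sqrt d)$, since each of the $d/2$ coordinate groups contributes $\Theta(z_t)$.
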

\begin{proof}
We prove the corollary by contradiction. Suppose the opposite, that there exists a policy $\pi$ such that for all $\theta \in \mathbb R^d, \|\theta\|_2 \leq 1$ and measurable action spaces $\mathcal A_1,\cdots,\mathcal A_T\subseteq\{x\in\mathbb R^d:\|x\|_2\leq 1\}$ such that $|\mathcal A_t| = \infty$, it holds that $\mathbb E[R^T] = o(d \sqrt{T \log T})$.  We will show that there exists a policy $\pi'$ to achieve $o(d \sqrt{T \log T})$ on all bandit instances with $\theta \in \mathbb R^d, \|\theta\|_2 \leq 1$, and action sets $\mathcal A_1',\cdots,\mathcal A_T' \subseteq\{y\in\mathbb R^d:\|y\|_2\leq 1, y^\top \theta \geq 0\}$ such that $|\mathcal A_t' | = 2^{d/2}$, contradicting Theorem~\ref{thm:LB-n-eq-expd} and Theorem~\ref{thm:lb-n-eq-expd-strengthened}.

For action sets $\mathcal A_1',\cdots,\mathcal A_T' \subseteq\{y\in\mathbb R^d:\|y\|_2\leq 1,  y^\top \theta \geq 0\}$ such that $|\mathcal A_t' | = 2^{d/2}$, the policy $\pi'$ construct $\mathcal A_t = \{x{(\lambda, y)} = \lambda y | y \in \mathcal A_t', \lambda\in [0, 1]\}$ and simulate policy $\pi$ when the candidate action space is $\mathcal A_t$. Clearly, $\mathcal A_t$ is closed and $|\mathcal A_t| = \infty$. If policy $\pi$ decides to play $x_t = x{(\lambda, y)} \in \mathcal A_t$, the policy $\pi'$ plays $y_t = y \in \mathcal A_t'$, observes the reward $r_t$, and feeds $\lambda r_t$ as reward to the policy $\pi$. Since $r_t - y_t^\top \theta$ is a centered sub-Gaussian with variance proxy 1, we have that $\lambda r_t - \lambda x_t^\top \theta$ is also a centered sub-Gaussian with variance proxy (at most) 1.

Since $y^\top \theta \geq 0$ for all $y \in \mathcal A_t'$ and all $t$, we upper bound the expected regret incurred by $\pi'$ as follows.
\begin{multline*}
\mathbb E \left[R^T_{\pi'}\right] = \sum_{t=1}^{T} \mathbb E \left[\max_{y \in \mathcal A_t'} y^\top \theta - y_t^\top \theta \right] =  \sum_{t=1}^{T} \mathbb E \left[\sup_{x \in \mathcal A_t} x^\top \theta - y_t^\top \theta \right]\\
 \leq  \sum_{t=1}^{T} \mathbb E \left[\sup_{x \in \mathcal A_t} x^\top \theta - x_t^\top \theta \right] = \mathbb E\left[R^T_{\pi}\right] \leq o(d\sqrt{T \log T}),
\end{multline*}
where we use the subscript in $R^T$ to denote whether the regret is incurred by the policy $\pi$ or $\pi'$, and the first inequality is because, if at time $t$, $x_t = x(\lambda, y_t)$ is chosen by policy $\pi$, then $y_t^\top\theta = x_t^\top\theta/ \lambda \geq x_t^\top\theta$.
\end{proof}

}

\appendix

\section{Probability tools}

The following lemma is the Hoeffding's concentration inequality for sub-Gaussian random variables,
which can be found in for example \citep{hoeffding1963probability}.

\begin{lemma}
Let $X_1,\cdots,X_n$ be independent centered sub-Gaussian random variables with sub-Gaussian parameter $\sigma^2$. Then for any $\xi>0$,
$$
\Pr\left[\left|\sum_{i=1}^n X_i \right|\geq \xi\right] \leq 2 \exp\left\{-\frac{\xi^2}{2n\sigma^2}\right\}.
$$
\label{lem:subgaussian-concentration}
\end{lemma}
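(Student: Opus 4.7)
The plan is to carry out the standard Chernoff/MGF argument for sums of independent sub-Gaussian random variables. The starting point is the moment generating function bound that characterizes a centered sub-Gaussian variable $X$ with variance proxy $\sigma^2$: namely $\mathbb{E}[e^{\lambda X}] \leq \exp(\lambda^2 \sigma^2/2)$ for every $\lambda \in \mathbb{R}$. This is essentially the definition of the sub-Gaussian parameter that we will use (or at worst something implied by it up to absolute constants), so I will invoke it without further justification.

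With this in hand, I would first handle the one-sided tail. Set $S_n = \sum_{i=1}^n X_i$. For any $\lambda > 0$, Markov's inequality applied to $e^{\lambda S_n}$ gives
\[
\Pr[S_n \geq \xi] \leq e^{-\lambda \xi} \cdot \mathbb{E}[e^{\lambda S_n}] = e^{-\lambda \xi} \prod_{i=1}^n \mathbb{E}[e^{\lambda X_i}] \leq e^{-\lambda \xi + n \lambda^2 \sigma^2/2},
\]
where the equality uses independence of the $X_i$ and the final inequality uses the sub-Gaussian MGF bound term by term. Optimizing the right-hand side over $\lambda > 0$ by taking $\lambda^\star = \xi/(n\sigma^2)$ produces the exponent $-\xi^2/(2 n \sigma^2)$, so $\Pr[S_n \geq \xi] \leq \exp(-\xi^2/(2 n \sigma^2))$.

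The same argument applied to $-S_n = \sum_i (-X_i)$, noting that $-X_i$ is also centered sub-Gaussian with the same parameter $\sigma^2$, yields the matching lower-tail bound $\Pr[S_n \leq -\xi] \leq \exp(-\xi^2/(2 n \sigma^2))$. A union bound over the two events then gives $\Pr[|S_n| \geq \xi] \leq 2 \exp(-\xi^2/(2 n \sigma^2))$, which is exactly the claimed inequality. There is no real obstacle here: the entire argument is the textbook Chernoff-method proof of Hoeffding's inequality in the sub-Gaussian setting, and all three ingredients (the MGF bound, factorization via independence, and the one-line optimization over $\lambda$) are standard. The only minor care point is being explicit that ``variance proxy $\sigma^2$'' is being used in the MGF sense so that the constant $2$ in the exponent denominator is correct.
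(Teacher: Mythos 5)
Your proof is correct and is the standard Chernoff/moment-generating-function argument for Hoeffding's inequality in the sub-Gaussian setting. The paper itself does not prove this lemma --- it states it as a known fact with a citation to \citep{hoeffding1963probability} --- so there is no paper proof to compare against; your argument (MGF bound, factorization by independence, optimization over $\lambda$, symmetry, union bound) is a complete and conventional justification of the cited result.
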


The following lemma states Pinsker's inequality \citep{pinsker1960information}.

\begin{lemma}\label{lem:pinsker}
If $P$ and $Q$ are two probability distributions on a measurable space $(X, \Sigma)$, then for any measurable event $A \in \Sigma$, it holds that
\[
\left| P(A) - Q(A) \right| \leq \sqrt{\frac{1}{2} \mathrm{KL}(P \| Q)},
\]
where 
\[ 
\mathrm{KL}(P \| Q) = \int_X \left(\ln \frac{\mathrm d P}{\mathrm d Q}\right) \mathrm d P
\]
is the Kullback--Leibler divergence.
\end{lemma}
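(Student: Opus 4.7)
The plan is to prove Pinsker's inequality by a two-step reduction: first collapse the abstract measurable-space statement to the scalar two-point (Bernoulli) case via the data-processing inequality for KL, and then dispatch the Bernoulli case by a short one-variable calculus argument.

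For the reduction, I would use the binary partition induced by $A$. Define $\pi:X\to\{0,1\}$ by $\pi(x)=\mathbf{1}\{x\in A\}$, and let $\tilde P,\tilde Q$ be the push-forwards of $P,Q$ under $\pi$, which are Bernoulli with parameters $p:=P(A)$ and $q:=Q(A)$. The data processing inequality for KL divergence (a direct consequence of Jensen's inequality applied to the convex function $\phi(t)=t\ln t$, or equivalently of the log-sum inequality) yields $\mathrm{KL}(P\|Q)\geq \mathrm{KL}(\tilde P\|\tilde Q)=D(p\|q)$, where $D(p\|q):=p\ln(p/q)+(1-p)\ln((1-p)/(1-q))$. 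Since the left-hand side $|P(A)-Q(A)|=|p-q|$ depends only on $p$ and $q$, it suffices to prove the scalar inequality $|p-q|\leq\sqrt{\tfrac12\,D(p\|q)}$, equivalently $D(p\|q)\geq 2(p-q)^2$.

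For the Bernoulli case, fix $q\in(0,1)$ and analyze $h(p):=D(p\|q)-2(p-q)^2$ on $[0,1]$. Direct computation gives $h(q)=0$, $h'(p)=\ln\!\bigl(p/(1-p)\bigr)-\ln\!\bigl(q/(1-q)\bigr)-4(p-q)$ (so that $h'(q)=0$), and $h''(p)=\tfrac{1}{p(1-p)}-4\geq 0$, since $p(1-p)\leq \tfrac14$ on $[0,1]$. Thus $h'$ is non-decreasing and vanishes at $p=q$, so $h'$ has the same sign as $p-q$; consequently $h$ attains its minimum at $p=q$ with $h(q)=0$, proving $h\geq 0$ on $[0,1]$. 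Taking square roots gives $|p-q|\leq\sqrt{\tfrac12\,D(p\|q)}\leq\sqrt{\tfrac12\,\mathrm{KL}(P\|Q)}$, as desired.

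I do not expect any serious obstacle. The only subtleties are degenerate/boundary cases: if $p\in\{0,1\}$ or $q\in\{0,1\}$, or if $P$ is not absolutely continuous with respect to $Q$, then the right-hand side is either handled by the conventions $0\ln 0=0$ and $\ln(a/0)=+\infty$ for $a>0$, or else equals $+\infty$ and the inequality is trivial. One could alternatively bypass the reduction and prove Pinsker directly on $X$ by partitioning into $\{dP\geq dQ\}$ and its complement and applying Cauchy--Schwarz to the identity $\tfrac12\int|dP-dQ|=\sup_A|P(A)-Q(A)|$, but the reduction route above is cleaner and self-contained.
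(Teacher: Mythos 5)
Your proof is correct. Note, however, that the paper does not actually prove this lemma: it is stated as a background fact with a citation to Pinsker (1960), and invoked as a black box in the proofs of Claims~\ref{claim:d2-event-prob-difference} and~\ref{claim:event-prob-difference}. So there is no paper proof to compare against; your self-contained argument is simply supplying what the paper takes for granted. The route you chose --- reduce to the two-point case via the data-processing inequality for KL under the coarsening $\pi=\mathbf 1_A$, then show $D(p\|q)\geq 2(p-q)^2$ by checking $h(q)=h'(q)=0$ and $h''=\frac{1}{p(1-p)}-4\geq 0$ --- is the standard textbook proof, and your handling of the degenerate cases ($p$ or $q$ in $\{0,1\}$, or $P\not\ll Q$, where the right-hand side is $+\infty$) is adequate. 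One small remark: the reduction gives you the inequality for the fixed event $A$ directly, which is all the lemma asks for; the alternative route you mention through $\sup_A|P(A)-Q(A)|=\tfrac12\int|dP-dQ|$ proves the stronger total-variation form, which the paper does not need.
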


\section{Additional proofs in Sec.~\ref{sec:ub}} \label{app:proofs-ub}

Below we provide the proof of Lemma~\ref{lem:expected-tail}.

\medskip
\noindent
{{\bf Lemma \ref{lem:expected-tail} (restated).}
\it
For any $\zeta,t$ and $i$, we have that 
$$
\mathbb E\big[\vct 1\{\neg\mathcal E_{\zeta,t}^i\}\cdot\big|x_{it}^\top(\hat\theta_{\zeta,t}-\theta)\big||\zeta< \zeta_t,i\in\mathcal N_{\zeta,t}\big] \leq \sqrt{2 \pi d}/(n\zeta_0\sqrt{T}).
$$
}

\begin{proof}
 The entire proof is carried out conditioned on 
 \begin{align*}
& \{\mathcal X_{\zeta',t-1}\}_{\zeta'\leq\zeta},\\
 &\{i_{t'}\}_{t' \in \mathcal X_{\zeta, t-1} \cup \mathcal X_{\zeta-1, t-1} \cup \dots \cup \mathcal X_{0, t-1}},\\
 &\Lambda_{\zeta,t-1}, \Lambda_{\zeta-1,t-1}, \Lambda_{\zeta-2,t-1}, \dots \Lambda_{0,t-1},\\
 &\lambda_{\zeta-1,t-1}, \lambda_{\zeta-2,t-1}, \lambda_{\zeta-3,t-1}, \dots \lambda_{0,t-1}.
 \end{align*}
 This renders the quantities of $\alpha_{\zeta,t}^i,\omega_{\zeta,t}^i$ deterministic. Note that the event $\zeta < \zeta_t$ and the set $\mathcal{N}_{\zeta, t}$ also become deterministic. Therefore, we only need to prove that $\mathbb E[\vct 1\{\neg\mathcal E_{\zeta,t}^i\}\cdot |x_{it}^\top(\hat\theta_{\zeta,t}-\theta)|] \leq \sqrt{2 \pi d}/(n\zeta_0\sqrt{T})$  assuming that $\zeta < \zeta_t$ and $i\in\mathcal N_{\zeta,t}$.
 
 Note also that, by Proposition~\ref{prop:layer-independence},
the quantities $\{\varepsilon_{t'}\}_{t'\in\mathcal X_{\zeta,t}}$ remain independent, centered sub-Gaussian random variables.
 
 We first derive an upper bound on the tail of $|x_{it}^\top(\hat\theta_{\zeta,t}-\theta)|$.
 Note that $\mathcal X_{\zeta,t-1}$ is the set of all time periods $\tau<t$ such that $\zeta_\tau=\zeta$.
 By elementary algebra, we have
 \begin{align*}
 x_{it}^\top(\theta - \hat\theta_{\zeta,t})
 &= x_{it}^\top (\theta - \Lambda_{\zeta,t-1}^{-1} \lambda_{\zeta,t-1})
 = x_{it}^\top \left(\theta - \Lambda_{\zeta,t-1}^{-1} \sum_{t' \in \mathcal X_{\zeta,t-1}} x_{i_{t'},t'} (x^\top_{i_{t'},t'} \theta + \eps_{t'}) \right) \\
 &= x_{it}^\top \left(\theta - \Lambda_{\zeta,t-1}^{-1} (\Lambda_{\zeta,t-1} - I) \theta - \Lambda_{\zeta,t-1}^{-1} \sum_{t' \in \mathcal X_{\zeta,t-1}} x_{i_{t'},t'} \eps_{t'} \right) \\
 &= x_{it}^\top \Lambda_{\zeta,t-1}^{-1} \left(\theta - \sum_{t' \in \mathcal X_{\zeta,t-1}} x_{i_{t'},t'} \eps_{t'} \right). 
 \end{align*}
 Subsequently, 
 \begin{equation}\label{eq_bound_sub}
 \left| (\theta - \hat{\theta}_t)^\top x_{it} \right| 
 \leq \left|x_{it}^\top \Lambda_{\zeta,t-1}^{-1} \theta \right| 
 + \left| \sum_{t' \in \mathcal X_{\zeta,t-1}} x_{it}^\top \Lambda_{\zeta,t-1}^{-1} x_{i_{t'},t'} \eps_{t'} \right| .
 \end{equation}
 
 For the first term in the RHS (right-hand side) of Eq.~\eqref{eq_bound_sub}, applying Cauchy-Schwarz inequality and the facts that $\Lambda_{\zeta,t-1}\succeq I$, 
 $\|\theta|_2\leq 1$ we have $|x_{it}^\top\Lambda_{\zeta,t-1}^{-1}\theta| \leq \|\Lambda_{\zeta,t-1}^{-1/2}x_{it}\|_2\|\Lambda_{\zeta,t-1}^{-1/2}\|_{\mathrm{op}}\|\theta\|_2
 \leq \sqrt{x_{it}^\top\Lambda_{\zeta,t-1}^{-1}x_{it}}$.
 For the second term in the RHS of Eq.~\eqref{eq_bound_sub}, 
 because $\{\varepsilon_{t'}\}$ are centered sub-Gaussian variables with sub-Gaussian parameter $1$
 and $\{\varepsilon_{t'}\}_{t'\in\mathcal X_{\zeta,t-1}}$ are still statistically independent even after the conditioning (By Proposition~\ref{prop:layer-independence}),
 we conclude that $ \sum_{t' \in \mathcal X_{\zeta,t-1}} x_{it}^\top \Lambda_{\zeta,t-1}^{-1} x_{i_{t'},t'} \varepsilon_{t'}$ is also a centered sub-Gaussian random variable with sub-Gaussian
 parameter upper bounded by
 \begin{equation} \label{eq:bound_sub-2}
 \sum_{t' \in \mathcal X_{\zeta,t-1}} (x_{it}^\top \Lambda_{\zeta,t-1}^{-1} x_{i_{t'},t'})^2
 = x_{it}^\top \Lambda_{\zeta,t-1}^{-1} \left(\sum_{t' \in \mathcal X_{\zeta,t-1}} x_{i_{t'},t'} x^\top_{i_{t'},t'} \right) \Lambda_{\zeta,t-1}^{-1} x_{it} = 
 x_{it}^\top\Lambda_{\zeta,t-1}^{-1}x_{it}.
 \end{equation} 
 Combining Eqs.~\eqref{eq_bound_sub}, \eqref{eq:bound_sub-2}, and using standard concentration inequalities of sub-Gaussian random variables (see for example Lemma \ref{lem:subgaussian-concentration}), we have for every $\delta\in(0,1)$ that 
 \begin{eqnarray}
 \Pr \left[(\theta - \hat\theta_{\zeta,t-1})^\top x
 \geq \left(\sqrt{2 \ln ( \delta^{-1})} + 1 \right)\sqrt{x_{it}^\top\Lambda_{\zeta,t-1}^{-1}x_{it}} \right] 
 \leq \delta,
 \label{eq:tail-prediction-error-1}
 \end{eqnarray}
 which is equivalent to
  \begin{eqnarray}
 \Pr \left[(\theta - \hat\theta_{\zeta,t-1})^\top x
 \geq\beta\omega_{\zeta,t}^i \right]  \leq e^{-(\beta-1)^2/2}, \;\;\;\;\;\;\forall \beta\geq 1.
 \label{eq:tail-prediction-error-2}
 \end{eqnarray}
 
 Integrating both sides of Eq.~(\ref{eq:tail-prediction-error-2}) from $\alpha_{\zeta,t}^i\omega_{\zeta,t}^i$ to $+\infty$ we obtain
 \begin{align}
 \mathbb E[\vct 1\{\neg\mathcal E_{\zeta,t}^i\}\cdot |x_{it}^\top(\hat\theta_{\zeta,t}-\theta)|] 
 &\leq \int_{\alpha_{\zeta,t}^i\omega_{\zeta,t}^i}^{+\infty}
 \Pr\left[|x_{it}^\top(\hat\theta_{\zeta,t}-\theta)|\geq u\right]\ud u\nonumber\\
 &\leq  \int_{\alpha_{\zeta,t}^i\omega_{\zeta,t}^i}^{+\infty} e^{-(u/\omega_{\zeta,t}^i-1)^2/2}\ud u
 = \sqrt{2 \pi}\omega_{\zeta,t}^i \int_{\alpha_{\zeta,t}^i-1}^{+\infty} \frac{1}{\sqrt{2} \pi}e^{-v^2/2}\ud v\nonumber\\
 &\leq \sqrt{2 \pi}\omega_{\zeta,t}^i \cdot e^{-(\alpha_{\zeta,t}^i-1)^2/2},\label{eq:tail-prediction-error-3}
 \end{align}
 where the last inequality again holds by tail bounds of Gaussian random variables (Lemma \ref{lem:subgaussian-concentration}).
 Plugging in the expression of $\alpha_{\zeta,t}^i$ in Algorithm \ref{alg:suplinucb}, the right-hand side of Eq.~(\ref{eq:tail-prediction-error-3}) can be upper bounded by
 $$
 \sqrt{2 \pi}\omega_{\zeta,t}^i\cdot \exp\left\{-\frac{\max\{1,\ln(T(\omega_{\zeta,t}^i)^2/d)\}\ln(n^2\zeta_0^2)}{2}\right\}
 \leq \sqrt{2 \pi}\omega_{\zeta,t}^i \sqrt{\frac{d}{T(\omega_{\zeta,t}^i)^2}\frac{1}{n^2\zeta_0^2}} \leq \frac{\sqrt{2 \pi d}}{n\zeta_0\sqrt{T}},
 $$
 which is to be demonstrated. 
\end{proof}

\bibliographystyle{apa-good}
\bibliography{refs-yining}

\begin{thebibliography}{37}
\expandafter\ifx\csname natexlab\endcsname\relax\def\natexlab#1{#1}\fi
\expandafter\ifx\csname url\endcsname\relax
  \def\url#1{{\tt #1}}\fi
\expandafter\ifx\csname urlprefix\endcsname\relax\def\urlprefix{URL }\fi

\bibitem[{Abbasi-Yadkori et~al.(2011)Abbasi-Yadkori, P{\'a}l, \&
  Szepesv{\'a}ri}]{abbasi2011improved}
Abbasi-Yadkori, Y., P{\'a}l, D., \& Szepesv{\'a}ri, C. (2011).
\newblock Improved algorithms for linear stochastic bandits.
\newblock In {\em Proceedings of the Advances in Neural Information Processing
  Systems (NIPS)\/}.

\bibitem[{Abbasi-Yadkori et~al.(2012)Abbasi-Yadkori, Pal, \&
  Szepesvari}]{abbasi2012online}
Abbasi-Yadkori, Y., Pal, D., \& Szepesvari, C. (2012).
\newblock Online-to-confidence-set conversions and application to sparse
  stochastic bandits.
\newblock In {\em Proceedings of the International Conference on Artificial
  Intelligence and Statistics (AISTATS)\/}.

\bibitem[{Abe et~al.(2003)Abe, Biermann, \& Long}]{abe2003reinforcement}
Abe, N., Biermann, A.~W., \& Long, P.~M. (2003).
\newblock Reinforcement learning with immediate rewards and linear hypotheses.
\newblock {\em Algorithmica\/}, {\em 37\/}(4), 263--293.

\bibitem[{Agarwal et~al.(2012)Agarwal, Dud{\'\i}k, Kale, Langford, \&
  Schapire}]{agarwal2012contextual}
Agarwal, A., Dud{\'\i}k, M., Kale, S., Langford, J., \& Schapire, R. (2012).
\newblock Contextual bandit learning with predictable rewards.
\newblock In {\em Proceedings of the International Conference on Artificial
  Intelligence and Statistics (AISTATS)\/}.

\bibitem[{Agarwal et~al.(2014)Agarwal, Hsu, Kale, Langford, Li, \&
  Schapire}]{agarwal2014taming}
Agarwal, A., Hsu, D., Kale, S., Langford, J., Li, L., \& Schapire, R. (2014).
\newblock Taming the monster: A fast and simple algorithm for contextual
  bandits.
\newblock In {\em Proceedings of the International Conference on Machine
  Learning (ICML)\/}.

\bibitem[{Arora et~al.(2012)Arora, Dekel, \& Tewari}]{arora2012online}
Arora, R., Dekel, O., \& Tewari, A. (2012).
\newblock Online bandit learning against an adaptive adversary: from regret to
  policy regret.
\newblock In {\em Proceedings of the International Conference on Machine
  Learning (ICML)\/}.

\bibitem[{Audibert \& Bubeck(2009)}]{audibert2009minimax}
Audibert, J.-Y., \& Bubeck, S. (2009).
\newblock Minimax policies for adversarial and stochastic bandits.
\newblock In {\em Proceedings of the Conference on Learning Theory (COLT)\/}.

\bibitem[{Auer(2002)}]{auer2002using}
Auer, P. (2002).
\newblock Using confidence bounds for exploitation-exploration trade-offs.
\newblock {\em Journal of Machine Learning Research\/}, {\em 3\/}(Nov),
  397--422.

\bibitem[{Auer et~al.(1995)Auer, Cesa-Bianchi, Freund, \&
  Schapire}]{auer1995gambling}
Auer, P., Cesa-Bianchi, N., Freund, Y., \& Schapire, R.~E. (1995).
\newblock Gambling in a rigged casino: The adversarial multi-armed bandit
  problem.
\newblock In {\em Proceedings of the IEEE Annual Symposium on Foundations of
  Computer Science (FOCS)\/}.

\bibitem[{Auer \& Ortner(2010)}]{auer2010ucb}
Auer, P., \& Ortner, R. (2010).
\newblock Ucb revisited: Improved regret bounds for the stochastic multi-armed
  bandit problem.
\newblock {\em Periodica Mathematica Hungarica\/}, {\em 61\/}(1-2), 55--65.

\bibitem[{Bubeck et~al.(2012)Bubeck, Cesa-Bianchi et~al.}]{bubeck2012regret}
Bubeck, S., Cesa-Bianchi, N., et~al. (2012).
\newblock Regret analysis of stochastic and nonstochastic multi-armed bandit
  problems.
\newblock {\em Foundations and Trends{\textregistered} in Machine Learning\/},
  {\em 5\/}(1), 1--122.

\bibitem[{Chen et~al.(2017)Chen, Li, \& Qiao}]{ChenLiQiao:17}
Chen, L., Li, J., \& Qiao, M. (2017).
\newblock Towards instance optimal bounds for best arm identification.
\newblock In {\em Proceedings of the Conference on Learning Theory (COLT)\/}.

\bibitem[{Chu et~al.(2011)Chu, Li, Reyzin, \& Schapire}]{chu2011contextual}
Chu, W., Li, L., Reyzin, L., \& Schapire, R. (2011).
\newblock Contextual bandits with linear payoff functions.
\newblock In {\em Proceedings of the International Conference on Artificial
  Intelligence and Statistics (AISTATS)\/}.

\bibitem[{Dani et~al.(2008)Dani, Hayes, \& Kakade}]{dani2008stochastic}
Dani, V., Hayes, T.~P., \& Kakade, S.~M. (2008).
\newblock Stochastic linear optimization under bandit feedback.
\newblock In {\em Proceedings of the Conference on Learning Theory (COLT)\/}.

\bibitem[{Filippi et~al.(2010)Filippi, Cappe, Garivier, \&
  Szepesv{\'a}ri}]{filippi2010parametric}
Filippi, S., Cappe, O., Garivier, A., \& Szepesv{\'a}ri, C. (2010).
\newblock Parametric bandits: The generalized linear case.
\newblock In {\em Proceedings of the Advances in Neural Information Processing
  Systems (NIPS)\/}.

\bibitem[{Garivier et~al.(2018)Garivier, Hadiji, Menard, \&
  Stoltz}]{garivier2018kl}
Garivier, A., Hadiji, H., Menard, P., \& Stoltz, G. (2018).
\newblock Kl-ucb-switch: optimal regret bounds for stochastic bandits from both
  a distribution-dependent and a distribution-free viewpoints.
\newblock {\em arXiv preprint arXiv:1805.05071\/}.

\bibitem[{Hoeffding(1963)}]{hoeffding1963probability}
Hoeffding, W. (1963).
\newblock Probability inequalities for sums of bounded random variables.
\newblock {\em Journal of the American Statistical Association\/}, {\em
  58\/}(301), 13--30.

\bibitem[{Ibragimov \& Has'~Minskii(2013)}]{ibragimov2013statistical}
Ibragimov, I.~A., \& Has'~Minskii, R.~Z. (2013).
\newblock {\em Statistical estimation: asymptotic theory\/}, vol.~16.
\newblock Springer Science \& Business Media.

\bibitem[{Jamieson et~al.(2014)Jamieson, Malloy, Nowak, \&
  Bubeck}]{Jamieson:14}
Jamieson, K., Malloy, M., Nowak, R., \& Bubeck, S. (2014).
\newblock {lil' UCB} : An optimal exploration algorithm for multi-armed
  bandits.
\newblock In {\em Proceedings of the Conference on Learning Theory (COLT)\/}.

\bibitem[{Karnin et~al.(2013)Karnin, Koren, \& Somekh}]{Karnin:13}
Karnin, Z., Koren, T., \& Somekh, O. (2013).
\newblock Almost optimal exploration in multi-armed bandits.
\newblock In {\em Proceedings of International Conference on Machine Learning
  (ICML)\/}.

\bibitem[{Lai(1987)}]{lai1987adaptive}
Lai, T.~L. (1987).
\newblock Adaptive treatment allocation and the multi-armed bandit problem.
\newblock {\em The Annals of Statistics\/}, {\em 15\/}(3), 1091--1114.

\bibitem[{Lai \& Robbins(1985)}]{lai1985asymptotically}
Lai, T.~L., \& Robbins, H. (1985).
\newblock Asymptotically efficient adaptive allocation rules.
\newblock {\em Advances in applied mathematics\/}, {\em 6\/}(1), 4--22.

\bibitem[{Lattimore(2018)}]{lattimore2018refining}
Lattimore, T. (2018).
\newblock Refining the confidence level for optimistic bandit strategies.
\newblock {\em Journal of Machine Learning Research\/}, {\em 19\/}(1),
  765--796.

\bibitem[{Li et~al.(2010)Li, Chu, Langford, \& Schapire}]{Li:10}
Li, L., Chu, W., Langford, J., \& Schapire, R.~E. (2010).
\newblock A contextual-bandit approach to personalized news article
  recommendation.
\newblock In {\em Proceedings of the International Conference on World Wide Web
  (WWW)\/}.

\bibitem[{Li et~al.(2017)Li, Lu, \& Zhou}]{li2017provable}
Li, L., Lu, Y., \& Zhou, D. (2017).
\newblock Provable optimal algorithms for generalized linear contextual
  bandits.
\newblock In {\em Proceedings of the International Conference of Machine
  Learning (ICML)\/}.

\bibitem[{Luo et~al.(2018)Luo, Wei, Agarwal, \& Langford}]{luo2017efficient}
Luo, H., Wei, C.-Y., Agarwal, A., \& Langford, J. (2018).
\newblock Efficient contextual bandits in non-stationary worlds.
\newblock In {\em Proceedings of the Conference on Learning Theory (COLT)\/}.

\bibitem[{Mannor \& Tsitsiklis(2004)}]{mannor2004sample}
Mannor, S., \& Tsitsiklis, J.~N. (2004).
\newblock The sample complexity of exploration in the multi-armed bandit
  problem.
\newblock {\em Journal of Machine Learning Research\/}, {\em 5\/}(Jun),
  623--648.

\bibitem[{Pinsker(1964)}]{pinsker1960information}
Pinsker, M.~S. (1964).
\newblock {\em Information and information stability of random variables and
  processes\/}.
\newblock Holden-Day.

\bibitem[{Robbins(1952)}]{robbins1952some}
Robbins, H. (1952).
\newblock Some aspects of the sequential design of experiments.
\newblock {\em Bulletin of the American Mathematical Society\/}, {\em 58\/},
  527--535.

\bibitem[{Rusmevichientong \& Tsitsiklis(2010)}]{rusmevichientong2010linearly}
Rusmevichientong, P., \& Tsitsiklis, J.~N. (2010).
\newblock Linearly parameterized bandits.
\newblock {\em Mathematics of Operations Research\/}, {\em 35\/}(2), 395--411.

\bibitem[{Soare et~al.(2014)Soare, Lazaric, \& Munos}]{soare2014best}
Soare, M., Lazaric, A., \& Munos, R. (2014).
\newblock Best-arm identification in linear bandits.
\newblock In {\em Proceedings of the Advances in Neural Information Processing
  Systems (NIPS)\/}.

\bibitem[{Tao et~al.(2018)Tao, Blanco, \& Zhou}]{tao2018best}
Tao, C., Blanco, S., \& Zhou, Y. (2018).
\newblock Best arm identification in linear bandits with linear dimension
  dependency.
\newblock In {\em Proceedings of the International Conference on Machine
  Learning (ICML)\/}.

\bibitem[{Thompson(1933)}]{thompson1933likelihood}
Thompson, W.~R. (1933).
\newblock On the likelihood that one unknown probability exceeds another in
  view of the evidence of two samples.
\newblock {\em Biometrika\/}, {\em 25\/}(3/4), 285--294.

\bibitem[{Tsybakov(2009)}]{tsybakov2009introduction}
Tsybakov, A.~B. (2009).
\newblock {\em Introduction to Nonparametric Estimation\/}.
\newblock Springer.

\bibitem[{Wasserman(2013)}]{wasserman2013all}
Wasserman, L. (2013).
\newblock {\em All of statistics: a concise course in statistical inference\/}.
\newblock Springer Science \& Business Media.

\bibitem[{Xu et~al.(2018)Xu, Honda, \& Sugiyama}]{XHS18}
Xu, L., Honda, J., \& Sugiyama, M. (2018).
\newblock A fully adaptive algorithm for pure exploration in linear bandits.
\newblock In {\em Proceedings of the International Conference on Artificial
  Intelligence and Statistics (AISTATS)\/}.

\bibitem[{Yue \& Guestrin(2011)}]{yue2011linear}
Yue, Y., \& Guestrin, C. (2011).
\newblock Linear submodular bandits and their application to diversified
  retrieval.
\newblock In {\em Proceedings of the Advances in Neural Information Processing
  Systems (NIPS)\/}.

\end{thebibliography}





\end{document}